\begin{document}

\title{Communication Efficient ConFederated Learning: An Event-Triggered SAGA Approach}

\author{Bin Wang, Jun Fang, Hongbin Li, ~\IEEEmembership{Fellow,~IEEE},
Yonina C. Eldar, ~\IEEEmembership{Fellow,~IEEE}
\thanks{Part of this work was accepted by ICASSP 2024. This
paper has been accepted by IEEE Transactions on Signal Processing.}
\thanks{Bin Wang and Jun Fang are with the National Key Laboratory
of Wireless Communications, University of
Electronic Science and Technology of China, Chengdu 611731, China,
Email: JunFang@uestc.edu.cn}
\thanks{Hongbin Li is with the Department of Electrical and Computer Engineering,
Stevens Institute of Technology, Hoboken, NJ 07030, USA, E-mail:
Hongbin.Li@stevens.edu}
\thanks{Yonina C. Eldar is with the Faculty of Mathematics and Computer Science,
Weizmann Institute of Science, Rehovot 7610001, Israel, E-mail:
yonina.eldar@weizmann.ac.il}
\thanks{The work of J. Fang was supported in part by the Sichuan Science and Technology Program
under Grant 2023ZYD0146, and in part by the National Key
Laboratory of Wireless Communications Foundation. The work of H.
Li was supported in part by the National Science Foundation under
Grants ECCS-1923739, ECCS-2212940, and CCF-2316865.} }

\maketitle
%\copyrightnotice

\begin{abstract}
Federated learning (FL) is a machine learning paradigm that
targets model training without gathering the local data dispersed
over various data sources. Standard FL, which employs a single
server, can only support a limited number of users, leading
to degraded learning capability. In this work, we consider a
multi-server FL framework, referred to as \emph{Confederated
Learning} (CFL), in order to accommodate a larger number of
users. A CFL system is composed of multiple networked edge
servers, with each server connected to an individual set of users.
Decentralized collaboration among servers is leveraged to harness
all users' data for model training. Due to the potentially massive
number of users involved, it is crucial to reduce the communication
overhead of the CFL system. We propose a stochastic
gradient method for distributed learning in the CFL framework.
The proposed method incorporates a conditionally-triggered user
selection (CTUS) mechanism as the central component to effectively
reduce communication overhead. Relying on a delicately designed
triggering condition, the CTUS mechanism allows each server to
select only a small number of users to upload their gradients,
without significantly jeopardizing the convergence performance of
the algorithm. Our theoretical analysis reveals that the proposed
algorithm enjoys a linear convergence rate. Simulation results
show that it achieves substantial improvement over state-of-the-art
algorithms in terms of communication efficiency.
\end{abstract}

%\begin{keywords}
%Confederated learning, stochastic gradient, user selection.
%\end{keywords}

\section{Introduction}
\label{sec-1}
The tremendous advancement of machine learning (ML) has rendered it
a driving force for various research fields and industrial applications.
However, the traditional ML framework follows a centralized fashion
which assembles the training data to a central computing unit (CPU)
where model training is performed. Such an approach might be problematic
when data is confidential or when transferring the training data
to the CPU is unrealistic. With a growing interest in data privacy,
regulations like GDPR (General Data Protection Regulation) and ADPPA
(American Data Privacy and Protection Act) have imposed restrictions
on sharing privacy-sensitive data among different clients or platforms.
As such, breaking the data-privacy barrier is an urgent and meaningful
task.

Federated Learning (FL) \cite{KonecnyMcMahan16,LiSahu20} is an
emerging machine learning paradigm that enables model training
without transferring local data to the CPU. FL has drawn
significant attention from both academia and industry,
especially for privacy-sensitive and data-intensive applications.
A standard FL system consists of a server and a set of
devices/users. In general, FL addresses privacy protection by
adopting a compute-then-aggregate (CTA) approach. More precisely,
in each iteration the server first broadcasts the global model
vector to the users. Each user then computes a local gradient
using its own data, and uploads its local gradient to the
server. At the end of each iteration, the server performs one
step of gradient descent (using the aggregated gradient) to obtain
an updated global model vector. This process cycles until
model training is accomplished. Typically, the training process
takes a large number of iterations to converge. Thus FL may
consume a substantial amount of communication resources.
Therefore, it is important to reduce the communication
overhead to an affordable level. To this end, various methods were
developed along different research lines, including methods which
aim at improving the convergence speed
\cite{Stich18,HaddadpourKamani19,YuanMa20,LiKovalev20,CondatAgarsky22,
MishchenkoMalinovsky22,WangJoshi21,PathakWainwright20,CenZhang20,
ZhangHong21,LiHuang19,SahuLi18,LiuChen22}, methods that reduce the
amount of transmission by selecting only a subset of users for
uploading their gradients \cite{YangLiu19,AmiriGunduz21,RenHe20,
ReisizadehMokhtari20,ChenShlezinger21,DinhPham21,ChenGiannakis18,
ChenSun21,SunChen22}, methods that sparsify or quantize the local
gradients \cite{AjiHeafield17,AlistarhGrubic17,BernsteinWang18,
KarimireddyRebjock19,WuHuang18,ShlezingerChen20,StichCordonnier18,
BeznosikovHorvath20,HorvathKovalev22,RichtarikSokolov22}, or
combinations of these techniques.

Besides the excessively high communication cost, another
restriction of FL is that conventional FL systems employ only
a single server. Due to the limited communication capacity, the
number of users that can be served by a single server is limited.
To involve more devices for model training, an alternative
framework to the standard single-server FL is a decentralized
FL system
\cite{HegedusDanner19,SavazziNicoli20,XingSimeone21,KoloskovaStich19,
YeLiang22,LiuChen22,XinKhan20,XinKhan22,KovalevKoloskova21,SinghData22}.
A decentralized FL system is composed of a number of nodes or
agents which are able to perform computation and communication.
Each node carries its own training data. Different nodes form a
decentralized network in which only neighboring nodes can
either bidirectionally or directionally
\cite{QureshiXin21,QureshiXin22,QureshiKhan22} communicate with
each other. In decentralized FL, the
training process follows a similar CTA mode as in the
standard system, except that the local gradient or local model
vector is exchanged among neighboring nodes. Despite its
scalability, decentralized FL is confined to D2D
(device-to-device) type networks which requires D2D
communications that may not be easily achieved in cellular
systems.

Recently, a new FL framework termed \emph{Confederated Learning}
(CFL) was proposed in \cite{WangFang23} to overcome the drawbacks
of existing FL systems. A CFL system consists of multiple servers,
in which each server is connected with an individual set of users
as in the conventional FL framework. Decentralized collaboration
among servers is leveraged to make full use of the data dispersed
over different users. CFL can be considered as a hybrid of
standard and decentralized FL systems. In particular, CFL
degenerates to standard FL when there is only a single server.
Although there exist a plethora of algorithms/convergence analyses
for standard FL, the extension of these results to CFL is not
straightforward since the latter framework involves decentralized
collaboration among servers. On the other hand, CFL becomes a
decentralized FL system when there is no user, namely, when each
server itself carries the training data. In this case, each
server's data are readily accessible to this server without any
communication cost. This is in sharp contrast to the CFL framework
whose communication cost mainly comes from collecting training
information by each server from its associated users. Therefore,
existing gradient tracking-based decentralized optimization methods
\cite{XinKhan20,XinKhan22,QureshiXin21,QureshiXin22,QureshiKhan22},
when applied to CFL, lead to an unsatisfactory communication
efficiency. In \cite{WangFang23}, a stochastic ADMM algorithm
with random user selection is developed for CFL. However, the
ADMM-based method is proved to possess only a sub-linear
convergence rate, and its performance relies heavily on man-crafted
parameters that can be hard to tune in real-world applications.

In this paper, we propose a gradient-based method for
communication-efficient CFL. The proposed algorithm is based on
the framework of GT-SAGA (gradient tracking with stochastic
average gradient) \cite{XinKhan20}. To reduce the amount of data
transmission between servers and users, a conditionally-triggered
user selection (CTUS) mechanism is developed. CTUS sets a
computationally verifiable selection criterion at the user side
such that only those users whose VR-SGs (variance-reduced
stochastic gradients) are sufficiently informative report their
VR-SGs to their associated servers. At the server side, the
aggregated gradient is obtained by integrating the uploaded VR-SGs
as well as the stale VR-SGs corresponding to those unreported
users. The CTUS mechanism shares a similar spirit with the
even-triggering-based methods proposed for standard FL or
traditional decentralized optimization methods
\cite{KiaCortes15,KajiyamaHayashi18,SinghData22,GeorgeGurram20,
GaoDeng21,ZehtabiHosseinalipour22,RichtarikSokolov22,
ChenGiannakis18,ChenSun21,ChenBlum22,SunChen22}. Nevertheless, the
selection criterion developed in this paper is very different from
existing methods. Specifically, for multi-server systems, the
variables from neighboring servers should be taken into account in
the design of the selection criterion. Simulation results show
that the proposed CTUS mechanism helps preclude most of those
non-informative user uploads, thereby striking a higher
communication efficiency than state-of-the-art algorithms.

The rest of this paper is organized as follows. In Section
\ref{sec-2}, we introduce the confederated learning problem along
with some assumptions on the objective function as well as the
server network. Then, in Section \ref{sec-3}, we provide a brief
overview of the classic gradient tracking (GT) method as well as
the GT-SAGA method that can be adapted to solve the CFL problem.
The proposed method is presented in Section \ref{sec-4}, with the
convergence analysis given in Section \ref{sec-5}. The proof of
the main theoretical result, namely, Theorem 1, is provided in
Section \ref{sec-6}. In section \ref{sec-7}, we also provide
theoretical analysis to justify that the proposed CTUS can save
user uploads under mild conditions. Simulations results are
presented in Section \ref{sec-8}, followed by concluding remarks
in Section \ref{sec-9}.

\section{Problem Formulation} \label{sec-2}
\subsection{CFL Framework}
 We consider a confederated learning (CFL) framework
consisting of $N$ networked edge servers. Figure \ref{fig1-1}
depicts a schematic of CFL. The connective relation of these edge
servers is described by an undirected connected graph $G=\{V,E\}$,
where $V$ (resp. $E$) denotes the set of servers (resp. edges).
The $i$th edge server serves $P_i$ users. Each user is only
allowed to communicate with its associated server. In addition to
communicating with its own users, each server can communicate with
its neighboring servers. With the confederated network, we aim to
solve the following CFL problem:
\begin{align}
\mathop {\min }\limits_{\boldsymbol{x}\in\mathbb{R}^d}
& \ \textstyle f(\boldsymbol{x})\triangleq
\frac{1}{N}\sum_{i=1}^{N} f_i(\boldsymbol{x}),
\label{problem-cfl}
\end{align}
where $\boldsymbol{x}\in\mathbb{R}^{d}$ is the model vector to be
learned, $f_{i}(\boldsymbol{x})=\sum_{j=1}^{P_i}
f_{ij}(\boldsymbol{x})$, $f_{ij}(\boldsymbol{x})=
\sum_{t=1}^{S_{ij}} f_{ij,t}(\boldsymbol{x})$
is the loss function held by user $u_{ij}$,
$f_{ij,t}(\boldsymbol{x})$ is the loss function
corresponding to the $t$th training sample at user $u_{ij}$,
and $S_{ij}$ is the number of training samples at user
$u_{ij}$. Here user $u_{ij}$ refers to the $j$th user served by
the $i$th server. It is also noteworthy that $f_{ij,t}$
may corresponds to a mini-batch of training samples instead of a
single sample.

The communication bottleneck of CFL lies in the user-to-server
(U2S) communications. Existing methods are designed either for
standard single-server FL or for decentralized FL. Standard FL
methods cannot be straightforwardly extended to the CFL, while
decentralized FL methods neglect the U2S communications in their
algorithmic development. Focusing on problem (\ref{problem-cfl}),
we aim to develop a communication-efficient method which seeks to
reduce the U2S communication overhead.

\begin{figure}
\centering
\includegraphics[width=7cm,height=5cm]{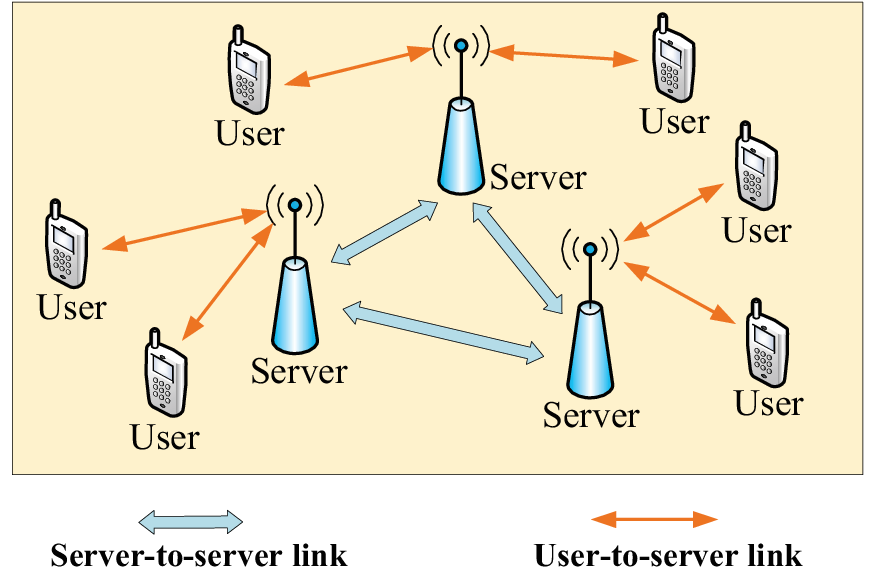}
\caption{The CFL framework with multiple servers.}
\label{fig1-1}
\end{figure}

\subsection{Function and Server-Network Assumptions}
\label{sec-obj-assumption}
We assume that $f$ (resp. $\nabla f$) is $\mu$-strongly convex
(resp. $L$-Lipschitz continuous) while both $f$, $f_i$, $f_{ij}$
and $f_{ij,t}$ are continuously differentiable with their gradients
being $L$-Lipschitz continuous. The definitions of $\mu$-strongly
convexity and $L$-Lipschitz continuity are given below.

\newtheorem{definition}{Definition}
\begin{definition}
(Strongly convexity) A function $f:\mathbb{R}^d\rightarrow
\mathbb{R}\cup\{+\infty\}$ is said to be $\mu$-strongly convex if
\begin{align}
\textstyle f(\boldsymbol{y})\geq f(\boldsymbol{x})+\langle\nabla
f(\boldsymbol{x}),\boldsymbol{y}-\boldsymbol{x}\rangle+
\frac{\mu}{2}\|\boldsymbol{y}-\boldsymbol{x}\|_2^2, \ \forall
\boldsymbol{x}, \boldsymbol{y}.
\label{assumption-strong-1}
\end{align}
\end{definition}

\newtheorem{definition2}{Definition}
\begin{definition}
\label{assumpt-lip} (Lipschitz continuity) The gradient of
$f:\mathbb{R}^d \rightarrow \mathbb{R}\cup\{+\infty\}$ is said to
be $L$-Lipschitz continuous if
\begin{align}
\|\nabla f(\boldsymbol{x})-\nabla f(\boldsymbol{y})\|_2\leq L
\|\boldsymbol{x}-\boldsymbol{y}\|_2, \ \forall \boldsymbol{x},
\boldsymbol{y}.
\label{def-lip2}
\end{align}
\end{definition}

Recall that the servers form a bidirectionally connected graph
$G=\{V,E\}$. Denote $\boldsymbol{W}\in\mathbb{R}_{+}^{N\times N}$
as the mixing matrix associated with the graph $G$. It is assumed
that $\boldsymbol{W}$ is symmetric, primitive and doubly stochastic.
In particular, $w_{ii'}$ which is the $(i,i')$th element of
$\boldsymbol{W}$ equals to $0$ (resp. nonzero) if server $i$ and
$i'$ are unconnected (resp. connected). For such a $\boldsymbol{W}$,
its largest singular value is $1$ (with multiplicity equals to $1$),
with its corresponding singular vector being
$\frac{1}{\sqrt{N}}\boldsymbol{1}_{N}$. The second largest singular
value of $\boldsymbol{W}$, denoted as $\sigma$, is thus equal to
$\|\boldsymbol{W}-\frac{1}{N}\boldsymbol{1}_{N}\boldsymbol{1}_{N}^T\|_2$
and is smaller than $1$. Notably, $\boldsymbol{W}$ can be
conveniently obtained as $\boldsymbol{W}=\boldsymbol{I}-
\frac{\boldsymbol{L}}{\tau}$, where $\boldsymbol{L}$ is the
Laplacian matrix of $G$ and $\tau>\frac{1}{2}
\lambda_{\text{max}}(\boldsymbol{L})$ is a scaling factor.

\section{Overview of GT and GT-SAGA}
\label{sec-3} Our proposed algorithm is based on the gradient
tracking (GT) framework \cite{LorenzoScutari16}. In this section,
we begin with a brief introduction of GT and then introduce the
GT-SAGA algorithm \cite{XinKhan20} which is a practical variant of
GT. GT-SAGA is designed for solving decentralized optimization
problems. We will discuss how to adapt GT-SAGA to solve the CFL
problem (\ref{problem-cfl}) in Section \ref{sec-3-c}.

\subsection{Gradient Tracking}
GT \cite{LorenzoScutari16} is designed for solving decentralized
optimization problems of the following form:
\begin{align}
\textstyle\mathop {\min }
\limits_{\boldsymbol{x}\in\mathbb{R}^d} \ F(\boldsymbol{x})
\triangleq \frac{1}{N}\sum_{i=1}^{N} f_i(\boldsymbol{x}),
\label{problem-gt}
\end{align}
where $f_i(\boldsymbol{x})=\sum_{t=1}^{P_i}f_{i,t}
(\boldsymbol{x};\mathcal{D}_{i,t})$ is the local loss
function held by node $i$ and $f_{i,t}(\boldsymbol{x})$ is the
loss function associated with the $t$th training samples stored
at node $i$. GT is usually compactly written as
\begin{align}
&\text{GT}: \ \textstyle \boldsymbol{x}_i^{k+1}
=\sum_{i'=1}^N w_{ii'}\boldsymbol{x}_{i'}^k-\alpha\boldsymbol{y}_i^k,
\ 1\leq i\leq N,
\label{GT-alg-x}\\
&\textstyle\boldsymbol{y}_i^{k+1}=\sum_{i'=1}^N w_{ii'}\boldsymbol{y}_{i'}^{k}
+\nabla f_i(\boldsymbol{x}_i^{k+1})-\nabla f_i(\boldsymbol{x}_i^{k}),
\ 1\leq i\leq N.
\label{GT-alg-y}
\end{align}

Note that GT is designed for a D2D network in which each node
carries its own training data and each node can communicate
with its neighboring nodes. In GT, each node holds two variables,
$\boldsymbol{x}_i$ and $\boldsymbol{y}_i$. After the $(k+1)$th
iteration, each node exchanges $(\boldsymbol{x}_i^{k+1},
\boldsymbol{y}_i^{k+1})$ with its neighboring nodes. The core
idea behind GT is the combination of decentralized gradient
descent (DGD) and dynamic average consensus (DAC). To see this,
omitting the DAC step, i.e., (\ref{GT-alg-y}), and assuming that
$\boldsymbol{y}_i^k =\nabla f_i(\boldsymbol{x}_i^k)$, then GT
degenerates to the standard DGD algorithm, in which $\alpha$
is the stepsize. However, it is well known that the
exact convergence of DGD can not be guaranteed unless a decreasing
stepsize is employed. The problem is that a decreasing stepsize
can only offer a sublinear convergence rate even if $f_i$ is
strongly convex. In GT, the DAC mechanism is incorporated to
remedy this drawback of DGD. DAC is an efficient tool to track
the average of time-varying signals. Formally, suppose each
node $i$ measures a time-varying signal $r_i^k$ at time $k$
and consider the problem of tracking its average $\bar{r}^k=
\frac{1}{N}\sum_{i=1}^N r_i^k$ at each node. The DAC mechanism,
which is mathematically stated as
\begin{align}
\textstyle d_i^{k+1}=\sum_{i'=1}^N w_{ii'}d_{i'}^{k}+r_i^{k+1}-r_i^k,
\ \forall i,
\end{align}
converges to $\bar{r}^{k+1}$ provided that $\lim_{k\rightarrow\infty}
|r_i^{k+1}-r_i^k|= 0$. In GT, we intend to track the average
of the local gradients $\frac{1}{N}\sum_{i=1}^N
\nabla f_i(\boldsymbol{x}_i^k)$ instead of using only the local
gradient $\nabla f_i(\boldsymbol{x}_i^k)$ at every node. This
generates the DAC step (\ref{GT-alg-y}). If the local variables
tend to arrive at a consensus state, i.e. $\boldsymbol{x}_i^k
\rightarrow\boldsymbol{x}$, which also means that
$\nabla f_i(\boldsymbol{x}_i^{k+1})-\nabla f_i(\boldsymbol{x}_i^{k})
\rightarrow 0$, then (\ref{GT-alg-y}) ensures that $\boldsymbol{y}_i^{k}
\rightarrow \frac{1}{N}\sum_{i=1}^N \nabla f_i(\boldsymbol{x})$ and
thus (\ref{GT-alg-x}) degenerates to a gradient descent step
applied to the whole objective function $F$. As such, GT is
guaranteed to converge to the global optimum with a linear
convergence rate, under the strongly convexity assumption.

\subsection{Gradient Tracking with Variance Reduction}
In machine learning applications, each user may hold a large
number of training samples and thus it is neither practical nor
efficient to compute the full local gradient $\nabla
f_i(\boldsymbol{x}_i^{k+1})$. An alternative solution is to
compute a stochastic approximation of
$\nabla f_i(\boldsymbol{x}_i^{k+1})$. However, directly employing
the stochastic gradient introduces a non-vanishing variance
and would consequently undermine the exact convergence of GT.
To alleviate this problem, GT-SAGA \cite{XinKhan20}, summarized
in Algorithm \ref{alg-2}, was proposed to incorporate a
variance-reduced stochastic gradient (VR-SG) $\boldsymbol{g}_i^{k+1}$
to replace $\nabla f_i(\boldsymbol{x}_i^{k+1})$. The VR-SG is
an unbiased estimate of $\nabla f_i(\boldsymbol{x}_i^{k+1})$ in
the sense that $\mathbb{E}_{t_i^{k+1}}\{\boldsymbol{g}_i^{k+1}\}=
\nabla f_i(\boldsymbol{x}_i^{k+1})$. More importantly, the variance
of $\boldsymbol{g}_i^{k+1}$ which is mathematically stated as
$\mathbb{E}_{t_i^{k+1}}\{\|\boldsymbol{g}_i^{k+1}- \nabla
f_i(\boldsymbol{x}_i^{k+1})\|_2^2\}$ tends to $0$ if the
algorithm converges. Thanks to the VR technique, GT-SAGA is
guaranteed to converge to the global optimum while still
maintaining a linear convergence rate.

In addition to GT-SAGA, there also exist other variance
reduction-based gradient tracking methods, e.g.
\cite{XinKhan20,XinKhan22,QureshiXin21,QureshiXin22,QureshiKhan22}.
Among them, Push-SAGA/AB-SAGA \cite{QureshiXin21,QureshiXin22} and
Push-SVRG/AB-SVRG \cite{QureshiKhan22} are designed for directed
networks, GT-SVRG \cite{XinKhan20} and GT-SARAH \cite{XinKhan22}
are based on double-loop variance reduction techniques that
periodically demand all users to upload their local gradients to
their respective servers. Such a requirement poses practical
challenges in the FL setting.

\begin{algorithm}
\caption{GT-SAGA}
\label{alg-2}
\begin{algorithmic}
\STATE{\textbf{Input}: $N$, $\{P_i\}$, $\boldsymbol{W}$, $\alpha$,
$\{\boldsymbol{x}_i^{0},\boldsymbol{y}_i^{0},\boldsymbol{\phi}_{i,t}^{0}=
\boldsymbol{0}\}_{i=1,t=1}^{i=N,t=P_i}$.} \STATE{\textbf{while}
not converge \textbf{do}} \STATE{\quad\textbf{For} each node
\textbf{parallel do}} \noindent \STATE{ \quad\quad\textbf{1.} Node
$i$ computes $\boldsymbol{x}_i^{k+1}=\sum_{i'=1}^N w_{ii'}
\boldsymbol{x}_{i'}^k-\alpha\boldsymbol{y}_i^k$;} \noindent
\STATE{ \quad\quad\textbf{2.} Node $i$ uniformly generates a
random integer $t_{i}^{k+1}$, $1\leq t_{i}^{k+1}\leq P_{i}$, and
then computes the VR-SG via
\begin{align}
\textstyle\boldsymbol{g}_i^{k+1}=&P_i\cdot(\nabla f_{i,t_i^{k+1}}(\boldsymbol{x}_i^{k+1})-
\nabla f_{i,t_i^{k+1}}(\boldsymbol{\phi}_{i,t_i^{k+1}}^{k}))+
\nonumber\\
&\textstyle
\sum_{t=1}^{P_i}\nabla f_{i,t}(\boldsymbol{\phi}_{i,t}^{k}),
\end{align}
Set $\nabla f_{i,t_i^{k+1}}(\boldsymbol{\phi}_{i,t_i^{k+1}}^{k+1})
=\nabla f_{i,t_i^{k+1}}(\boldsymbol{x}_i^{k+1})$ and also set
$\nabla f_{i,t}(\boldsymbol{\phi}_{i,t}^{k+1})=
\nabla f_{i,t}(\boldsymbol{\phi}_{i,t}^{k})$,
$\forall t\neq t_i^{k+1}$;}
\noindent \STATE{ \quad\quad\textbf{3.} Node $i$ computes
$\boldsymbol{y}_i^{k+1}=\sum_{i'=1}^N w_{ii'}
\boldsymbol{y}_{i'}^{k}+\boldsymbol{g}_i^{k+1}-\boldsymbol{g}_i^k$
and then broadcasts ($\boldsymbol{x}_i^{k+1}$,$\boldsymbol{y}_i^{k+1}$)
to its neighboring nodes;}
\STATE{\quad\textbf{End For}}
\par{\textbf{End while} and \textbf{Output $\boldsymbol{x}^{k+1}$};}
\end{algorithmic}
\end{algorithm}

\subsection{Adapting GT-SAGA for CFL}
\label{sec-3-c} Now we discuss how to modify GT-SAGA to make it
applicable for solving (\ref{problem-cfl}). Adjusting GT-SAGA to
solve (\ref{problem-cfl}) can be realized by treating node $i$ in
Algorithm \ref{alg-2} as server $i$. In the CFL setting, despite
the fact that the training data are stored at users, we can
randomly select a user and let the user randomly pick a mini-batch
set of training samples to compute the local gradient. The local
gradient is then uploaded to the server to compute the VR-SG
$\boldsymbol{g}_i^{k+1}$. Mathematically, this can be written as
\begin{align}
\boldsymbol{g}_{i}^{k+1}=&\textstyle\tilde{S}_i\cdot
(\nabla f_{ij,t_{ij}^{k+1}}(\boldsymbol{x}_{i}^{k+1})-
\nabla f_{ij,t_{ij}^{k+1}}(\boldsymbol{\phi}_{ij,t_{ij}^{k+1}}^{k}))+
\nonumber\\
&\textstyle\sum_{j=1}^{P_i}
\sum_{t=1}^{S_{ij}}\nabla f_{ij,t}(\boldsymbol{\phi}_{ij,t}^{k}),
\end{align}
where $\tilde{S}_i\triangleq\sum_{j=1}^{P_i}S_{ij}$. It is easy
to verify that $\boldsymbol{g}_{i}^{k+1}$ is an
unbiased estimate of $\nabla f_{i}(\boldsymbol{x}_{i}^{k+1})$ if
both $j$ and $t_{ij}^{k+1}$ are uniformly selected, provided that
each user holds the same number of mini-batches. When the number
of mini-batches varies across different users, an unbiased
$\boldsymbol{g}_{i}^{k+1}$ can be obtained by assigning an
appropriate selection probability for each user. It is also
possible to select more than one user to participate in the
training. Under the assumption that each user holds the same
number of data samples, in Algorithm \ref{alg-3},
$\boldsymbol{g}_i^{k+1}$ is obtained by selecting
$|\bar{\mathcal{N}}_i^{k+1}|$ users, where $\bar{\mathcal{N}}_i^{k+1}$
is the index set of the selected users (by server $i$) in
the $(k+1)$th iteration.

The random user selection in Algorithm \ref{alg-3} provides a
convenient way to reduce the user-to-server uplink communication
overhead. Its random nature ensures the unbiasness of
$\boldsymbol{g}_i^{k+1}$. Thus the linear convergence rate of
Algorithm \ref{alg-3} can be obtained by using the theoretical
results in \cite{XinKhan20}. Despite the elegant linear
convergence rate of Algorithm \ref{alg-3}, it is unclear how to
determine the optimal number of users that are selected to upload
their gradients. Although a small number of selected users results
in a low per-iteration communication overhead, the required number
of iterations could be large since this leads to a large variance
in $\boldsymbol{g}_i^{k+1}$. Under such a fundamental tradeoff,
reducing the user sampling rate does not necessarily lead to
improved communication efficiency. Another drawback of random user
selection is that the selection is not based on the importance of
each local gradient. Thus the uploaded local gradients may not be
those most informative ones. This often leads to a degraded
convergence speed.

\begin{algorithm}
\caption{GT-SAGA for Confederated Learning}
\label{alg-3}
\begin{algorithmic}
\STATE{\textbf{Input}: $N$, $\{P_i\}$, $\boldsymbol{W}$, $\alpha$,
$\{\boldsymbol{x}_i^{0},\boldsymbol{y}_i^{0},\boldsymbol{\phi}_{ij,t}^{0}
,\boldsymbol{g}_{i,\text{sum}}^{0}=\boldsymbol{0}\}_{i=1,j=1,t=1}^{i=N,j=P_i,t=S_{ij}}$.}
\STATE{\textbf{while} not converge \textbf{do}}
\STATE{\quad\textbf{For} each edge server \textbf{parallel do}}
\noindent \STATE{ \quad\quad\textbf{1.} Server $i$ computes
$\boldsymbol{x}_i^{k+1}=\sum_{i=1}^N w_{ii'}
\boldsymbol{x}_{i'}^k-\alpha\boldsymbol{y}_i^k$ and then
broadcasts $\boldsymbol{x}_i^{k+1}$ to its associated users
as well as neighboring servers;}
\noindent \STATE{ \quad\quad\textbf{2.} Server $i$ randomly
selects a fixed number of users, whose index set is denoted
as $\bar{\mathcal{N}}_i^{k+1}$;}
\noindent \STATE{ \quad\quad\textbf{3.} If $j\in\bar{\mathcal{N}}_i^{k+1}$,
user $u_{ij}$ uniformly generates a random integer $t_{ij}^{k+1}$,
$1\leq t_{ij}^{k+1}\leq S_{ij}$, and then computes
\begin{align}
\textstyle\boldsymbol{\tilde{g}}_{ij}^{k+1}=\nabla f_{ij,t_{ij}^{k+1}}
(\boldsymbol{x}_{i}^{k+1})-\nabla f_{ij,t_{ij}^{k+1}}
(\boldsymbol{\phi}_{ij,t_{ij}^{k+1}}^{k}),
\end{align}
User $u_{ij}$ uploads $\boldsymbol{\tilde{g}}_{ij}^{k+1}$ to
the server. Set $\boldsymbol{\phi}_{ij,t_{ij}^{k+1}}^{k+1}=
\boldsymbol{x}_i^{k+1}$ and also set $\boldsymbol{\phi}_{ij,t}^{k+1}=
\boldsymbol{\phi}_{ij,t}^{k}$, $\forall t\neq t_{ij}^{k+1}$.}
%\noindent \STATE{ \quad\quad\textbf{3.} Each user $u_{ij}$ uploads
%$\boldsymbol{g}_{ij}^{k+1}$ to server $i$;}
\noindent \STATE{ \quad\quad\textbf{4.} Server $i$ computes
$\boldsymbol{y}_i^{k+1}=\sum_{i'=1}^N w_{ii'}
\boldsymbol{y}_{i'}^{k}+\boldsymbol{g}_i^{k+1}-\boldsymbol{g}_i^k$,
where}
\begin{align}
&\textstyle\boldsymbol{g}_i^{k+1}=\frac{\tilde{S}_i}
{|\bar{\mathcal{N}}_i^{k+1}|}\sum_{j\in\bar{\mathcal{N}}_i^{k+1}}
\boldsymbol{\tilde{g}}_{ij}^{k+1}+\boldsymbol{g}_{i,\text{sum}}^{k},
\nonumber\\
&\textstyle \boldsymbol{g}_{i,\text{sum}}^{k}\triangleq
\sum_{j=1}^{P_i}\sum_{t=1}^{S_{ij}}
\nabla f_{ij,t}(\boldsymbol{\phi}_{ij,t}^{k})
\end{align}
\noindent \STATE{ \quad\quad\textbf{5.} Server $i$ broadcasts
$\boldsymbol{y}_i^{k+1}$ to its neighboring servers and then updates}
\begin{align}
\textstyle\boldsymbol{g}_{i,\text{sum}}^{k+1}=\boldsymbol{g}_{i,\text{sum}}^{k}
+\sum_{j=1}^{P_i}\sum_{j\in\bar{\mathcal{N}}_i^{k+1}}
\boldsymbol{\tilde{g}}_{ij}^{k+1}
\end{align}
\par{\quad\textbf{End For};}
\par{\textbf{End while} and \textbf{Output $\boldsymbol{x}^{k+1}$};}
%\par{\textbf{Output: $\boldsymbol{x}^{k+1}$};}
\end{algorithmic}
\end{algorithm}

\section{Proposed Algorithm}
\label{sec-4} Although the GT-SAGA can be adapted to solve the
CFL problem, it usually does not achieve optimal communication
efficiency due to the intrinsic limitations of random user
selection. In this section, we propose a communication-efficient
algorithm whose major innovation is the so called
conditionally-triggered user selection (CTUS). The proposed
algorithm meticulously selects a small number of users for
gradient uploading at each iteration and maintains a fast linear
convergence rate, thus leading to a higher communication
efficiency.

\subsection{Summary of Algorithm}
The proposed algorithm, abbreviated as CFL-SAGA
(\textbf{C}on\textbf{f}ederated \textbf{L}earning with \textbf{SAGA}),
is summarized in Algorithm \ref{alg-4}. In Algorithm \ref{alg-4},
Step $1$ and Step $5$ are similar to those in standard GT.
In Step $2$, the quantity $\|\sum_{i'=1}^{N}w_{ii'}\boldsymbol{x}_{i'}^{k+1}-
\boldsymbol{x}_i^{k+1}\|_2^2$ is computed and then sent to server
$i$'s users. This quantity is used by each user to determine whether
or not to upload its gradient. Step $3.1$ computes a local VR-SG
$\boldsymbol{g}_{ij}^{k+1}$ to provide an unbiased approximation of
$\nabla f_{ij}(\boldsymbol{x}_i^{k+1})$. The core innovation of
Algorithm \ref{alg-4} is Step $3.2$, namely, the CTUS step. This
step states that, for each user $u_{ij}$, the gradient innovation
vector
$\boldsymbol{\Delta}_{ij}^{k+1}=\boldsymbol{g}_{ij}^{k+1}-\boldsymbol{g}_{ij}^k$
should be uploaded to server $i$ only when the triggering
condition (\ref{main-alg-4}) is satisfied. At Step $4$, the
aggregated gradient $\boldsymbol{g}_i^{k+1}$ is obtained by summing
the newly uploaded user gradient $\boldsymbol{g}_{ij}^{k+1}$,
$j\notin \mathcal{N}_i^{k+1}$ as well as the stale user gradient
$\boldsymbol{g}_{ij}^{k}$, $j\in\mathcal{N}_i^{k+1}$. It should be
noted that, for server $i$, it does not need to store every individual
$\boldsymbol{g}_{ij}^k$. Instead, only the sum of all
$\boldsymbol{g}_{ij}^k$s needs to be stored.

\subsection{Rationale Behind The CTUS Mechanism}
\label{sec-rational}
Next, we discuss the rationale behind the CTUS mechanism. Without
loss of generality, we assume that $\rho=1$. For the right hand
side of the triggering condition (\ref{main-alg-4}), we deduce
that
\begin{align}
&\textstyle \sum_{i'=1}^{N}w_{ii'}\boldsymbol{x}_{i'}^{k+1}-
\boldsymbol{x}_i^{k+1}
\nonumber\\
=&\textstyle
\sum_{i'=1}^{N}w_{ii'}\boldsymbol{x}_{i'}^{k}-
\sum_{i'=1}^{N}w_{ii'}(\boldsymbol{x}_{i'}^{k}-
\boldsymbol{x}_{i'}^{k+1})-\boldsymbol{x}_i^{k+1}
\nonumber\\
\overset{(a)}{=}&\textstyle \alpha\boldsymbol{y}_i^k-
\sum_{i'=1}^{N}w_{ii'}(\boldsymbol{x}_{i'}^{k}-
\boldsymbol{x}_{i'}^{k+1})
\nonumber\\
=&\textstyle \underbrace{\textstyle
\alpha\boldsymbol{y}_i^{k}-\alpha\boldsymbol{y}_i^{k+1}-
\sum_{i'=1}^{N}w_{ii'}(\boldsymbol{x}_{i'}^{k}-
\boldsymbol{x}_{i'}^{k+1})}_{\text{[(\ref{discussion-2})-1]}}+
\alpha\boldsymbol{y}_i^{k+1} \label{discussion-2}
\end{align}
where $(a)$ is due to Step $1$ in Algorithm \ref{alg-4}. Note that
[(\ref{discussion-2})-1] is the difference between
$\boldsymbol{x}_{i}^{k+2}$ and $\boldsymbol{x}_{i}^{k+1}$, more
precisely,
\begin{align}
\boldsymbol{x}_i^{k+2}=&\textstyle\sum_{i'=1}^{N}w_{ii'}
\boldsymbol{x}_{i'}^{k+1}-\alpha\boldsymbol{y}_i^{k+1}
\nonumber\\
=&\textstyle\underbrace{\textstyle\sum_{i'=1}^{N}w_{ii'}\boldsymbol{x}_{i'}^{k}-
\alpha\boldsymbol{y}_i^{k}}_{=\boldsymbol{x}_i^{k+1}}+
\text{[(\ref{discussion-2})-1]}. \label{discussion-2-1}
\end{align}

Suppose the proposed algorithm converges to the true solution
$\boldsymbol{x}^*$ as $k\rightarrow\infty$. The DAC mechanism
ensures that $\boldsymbol{y}_i^{k+1}$ converges to
$\frac{1}{N}\sum_{i=1}^N\boldsymbol{g}_i^{k+1}$, which converges
to $\frac{1}{N}\sum_{i=1}^N\nabla f_i(\boldsymbol{x}^*) =0$ as
$k\rightarrow\infty$. As such, $\alpha\boldsymbol{y}_i^{k+1}$ in
(\ref{discussion-2}) can be rewritten as
$\alpha(\boldsymbol{y}_i^{k+1} -\boldsymbol{y}_i^{*})$, where
$\boldsymbol{y}_i^{*}= \boldsymbol{0}$ is the optimal
$\boldsymbol{y}_i$. Substituting this and (\ref{discussion-2-1})
into (\ref{discussion-2}) yields
\begin{align}
&\textstyle\sum_{i'=1}^{N}w_{ii'}\boldsymbol{x}_{i'}^{k+1}-
\boldsymbol{x}_i^{k+1}
=\underbrace{\boldsymbol{x}_i^{k+2}-
\boldsymbol{x}_i^{k+1}}_{\text{innovation of} \ \boldsymbol{x}_i}+
\alpha\underbrace{\big(\boldsymbol{y}_i^{k+1}-
\boldsymbol{y}_i^{*}\big)}_{\text{optimality gap of} \
\boldsymbol{y}_i}.
\nonumber
\end{align}
Clearly, both the innovation of $\boldsymbol{x}_i$ and the optimality
gap of $\boldsymbol{y}_i$ should converge to $\boldsymbol{0}$ as the
algorithm converges. From this perspective, the quantity
$\|\sum_{i'=1}^{N}w_{ii'}\boldsymbol{x}_{i'}^{k+1}-
\boldsymbol{x}_i^{k+1}\|_2$ approximately measures how much progress
can be made in the $(k+1)$th iteration. Therefore
\begin{align}
\textstyle
\|\boldsymbol{\Delta}_{ij}^{k+1}\|_2>
\|\sum_{i'=1}^{N}w_{ii'}\boldsymbol{x}_{i'}^{k+1}-\boldsymbol{x}_i^{k+1}\|_2,
\label{discussion-2-2}
\end{align}
indicates that $\boldsymbol{\Delta}_{ij}^{k+1}$ can make a
significant contribution to the updates of $\boldsymbol{x}_i^{k+2}$
and $\boldsymbol{y}_i^{k+1}$. For this case, Step $3.2$ suggests
$\boldsymbol{\Delta}_{ij}^{k+1}$ should be uploaded to the server.
Otherwise, $\boldsymbol{\Delta}_{ij}^{k+1}$ needs not to be uploaded
since it may not be sufficiently informative for the update of the
variables.

\begin{algorithm}
\caption{Proposed Algorithm: Confederated Learning with Stochastic
Average Gradient (CFL-SAGA)} \label{alg-4}
\begin{algorithmic}
\STATE{\textbf{Input}: $N$, $\{P_i\}$, $\boldsymbol{W}$, $\alpha$,
$\{\boldsymbol{x}_i^{0},\boldsymbol{y}_i^{0},\boldsymbol{\phi}_{ij,t}^{0}
=\boldsymbol{0}\}_{i=1,j=1,t=1}^{i=N,j=P_i,t=S_{ij}}$.}
\STATE{\textbf{while} not converge \textbf{do}}
\STATE{\quad\textbf{For} each edge server \textbf{parallel do}}
\noindent \STATE{ \quad\quad\textbf{1.} Server $i$ computes
$\boldsymbol{x}_i^{k+1}=\sum_{i'=1}^{N}w_{ii'}\boldsymbol{x}_{i'}^k-
\alpha\boldsymbol{y}_i^k$ and then broadcasts $\boldsymbol{x}_i^{k+1}$
to its associated users as well as its neighboring servers;}
\noindent \STATE{ \quad\quad\textbf{2.} Server $i$ computes
$\|\sum_{i'=1}^{N}w_{ii'}\boldsymbol{x}_{i'}^{k+1}-
\boldsymbol{x}_i^{k+1}\|_2^2$ and then broadcasts
this quantity to its associated users;}
\noindent \STATE{\qquad\textbf{For} each user \textbf{parallel do}}
\noindent \STATE{ \qquad\quad\textbf{3.1.} In user $u_{ij}$,
uniformly generate a random integer $t_{ij}^{k+1}$, $1\leq t_{ij}^{k+1}
\leq S_{ij}$, and then compute
\begin{align}
\textstyle\boldsymbol{g}_{ij}^{k+1}=&S_{ij}\cdot(\nabla f_{ij,t_{ij}^{k+1}}
(\boldsymbol{x}_{i}^{k+1})-\nabla f_{ij,t_{ij}^{k+1}}
(\boldsymbol{\phi}_{ij,t_{ij}^{k+1}}^{k}))+
\nonumber\\
&\textstyle
\sum_{t=1}^{S_{ij}}\nabla f_{ij,t}(\boldsymbol{\phi}_{ij,t}^{k}).
\label{main-alg-1}
\end{align}
Set $\boldsymbol{\phi}_{ij,t_{ij}^{k+1}}^{k+1}=\boldsymbol{x}_i^{k+1}$
and also set $\boldsymbol{\phi}_{ij,t}^{k+1}=\boldsymbol{\phi}_{ij,t}^{k}$,
$\forall t\neq t_{ij}^{k+1}$.}
\noindent \STATE{ \qquad\quad\textbf{3.2.} Let
$\boldsymbol{\Delta}_{ij}^{k+1}=\boldsymbol{g}_{ij}^{k+1}-
\boldsymbol{g}_{ij}^k$. Uploading $\boldsymbol{\Delta}_{ij}^{k+1}$
if
\begin{align}
%\textstyle \text{Uploding} \ \boldsymbol{\Delta}_{ij}^{k+1} \
%\text{if:} \
\textstyle\|\boldsymbol{\Delta}_{ij}^{k+1}\|_2^2>
\rho\|\sum_{i'=1}^{N}w_{ii'}\boldsymbol{x}_{i'}^{k+1}-
\boldsymbol{x}_i^{k+1}\|_2^2.
\label{main-alg-4}
\end{align}
Let the users which satisfy (\ref{main-alg-4}) upload
$\boldsymbol{\Delta}_{ij}^{k+1}$ and denote $\mathcal{N}_{i}^{k+1}$
as the index set of users that does not satisfy (\ref{main-alg-4}); }
\noindent \STATE{\qquad\textbf{End For}}
\noindent \STATE{ \quad\quad\textbf{4.}
Server $i$ computes
\begin{align}
&\textstyle\boldsymbol{g}_i^{k+1}=
\sum_{j\in\mathcal{N}_{i}^{k+1}}\boldsymbol{g}_{ij}^k+
\sum_{j\notin\mathcal{N}_{i}^{k+1}}(\boldsymbol{g}_{ij}^k
+\boldsymbol{\Delta}_{ij}^{k+1}).
\label{main-alg-3}
\end{align}
}
\noindent \STATE{ \quad\quad\textbf{5.} Server $i$ computes
$\boldsymbol{y}_i^{k+1}=\sum_{i'=1}^{N}w_{ii'}\boldsymbol{y}_{i'}^k
+\boldsymbol{g}_i^{k+1}-\boldsymbol{g}_i^k$ and then broadcasts
$\boldsymbol{y}_i^{k+1}$ to its neighboring servers. }
\par{\quad\textbf{End For};}
\par{\textbf{End while} and \textbf{Output $\boldsymbol{x}^{k+1}$};}
%\par{\textbf{Output: $\{\boldsymbol{x}_i^{k+1}\}_{i=1}^N$};}
\end{algorithmic}
\end{algorithm}

\subsection{Discussions}
\label{sec-IV-c}
The reuse of the stale user gradient is crucial to ensure the fast
convergence speed of the proposed algorithm. Thanks to the CTUS
mechanism, reusing the stale user gradient only leads to a
controllable error. Hence $\boldsymbol{g}_i^{k+1}$ in
(\ref{main-alg-3}) can be a close approximation of the aggregated
gradient. Moreover, in distributed optimization, the user gradient
usually changes slowly, especially in the high-precision regime.
Therefore it is reasonable to reuse the stale user gradient for
many iterations. Since only a small number of users are required
to upload their gradients, the proposed algorithm is expected to
achieve a high communication efficiency. Nevertheless, reusing
the stale user gradient in $\boldsymbol{g}_i^{k+1}$ breaks the
unbiasedness of $\boldsymbol{g}_i^{k+1}$, which brings difficulties
in proving the convergence of the algorithm.

The proposed CTUS mechanism is different from the
event-triggering-based schemes developed for standard FL
\cite{RichtarikSokolov22,ChenGiannakis18,ChenSun21,SunChen22}.
A distinctive feature of the triggering condition
for our proposed method is that it involves variables of neighboring
servers in order to quantify whether the local gradient is
informative enough for uploading. As discussed in Section
\ref{sec-rational}, the metric employed in (\ref{main-alg-4})
provides an estimate of the gap between the current solution and
the optimal point. Intuitively, a user should upload its local
gradient only if the local gradient is sufficiently informative
compared to the optimality gap. Since the optimality gap for the
CFL framework needs to account for the discrepancy between model
vectors of different servers, the event-triggering techniques
developed for standard federated learning systems
\cite{RichtarikSokolov22,ChenGiannakis18,ChenSun21,SunChen22} are
no longer applicable.

The proposed CTUS is also significantly different from
the triggering techniques designed for multi-agent decentralized
networks \cite{KiaCortes15,KajiyamaHayashi18,SinghData22,
GeorgeGurram20,GaoDeng21,ZehtabiHosseinalipour22}. In multi-agent
decentralized systems, the purpose of employing event-triggering
is to determine whether an agent should exchange its local
variables with its neighboring agents. In contrast, for our
proposed algorithm, communication between neighboring servers
is always assumed in every iteration, and the event-triggering
mechanism is mainly used to prune users that are deemed
unnecessary to upload their gradients to their respective servers.
Hence, both the purpose and the criterion of our proposed CTUS are
different from those of existing event-triggering methods.

\section{Convergence Results}
\label{sec-5} In this subsection, we aim to prove the linear
convergence rate of the proposed algorithm. Before proceeding to
the main result, we first introduce several notations. Define
$\boldsymbol{x}\triangleq[\boldsymbol{x}_1;
\cdots;\boldsymbol{x}_N]$ (resp.
$\boldsymbol{y}\triangleq[\boldsymbol{y}_1;
\cdots;\boldsymbol{y}_N]$) as the vertical stack of
$\boldsymbol{x}_i$s (resp. $\boldsymbol{y}_i$s). Let
$\bar{\boldsymbol{x}}\triangleq \frac{1}{N}
(\boldsymbol{1}_N^T\otimes \boldsymbol{I}_d)\boldsymbol{x}$ (resp.
$\bar{\boldsymbol{y}}\triangleq
\frac{1}{N}(\boldsymbol{1}_N^T\otimes
\boldsymbol{I}_d)\boldsymbol{y}$) be the average of
$\boldsymbol{x}_i$s (resp. $\boldsymbol{y}_i$s). Also define
$\bar{\boldsymbol{W}}
\triangleq\boldsymbol{W}\otimes\boldsymbol{I}_{d}$ and
$\bar{\boldsymbol{W}}_{\infty}\triangleq\boldsymbol{W}_{\infty}\otimes
\boldsymbol{I}_{d}$, where
$\boldsymbol{W}_{\infty}=\frac{\boldsymbol{1}_{N}
\boldsymbol{1}_{N}^T}{I}$ and $\otimes$ represents the Kronecker
product. The convergence result for Algorithm \ref{alg-4} is
summarized in the following theorem.

\newtheorem{theorem}{Theorem}
\begin{theorem}
\label{theorem-1}
Let $\boldsymbol{x}^*$ denote the optimal solution to the CFL
problem (\ref{problem-cfl}). Assume that the objective function
(resp. server network) satisfies the assumptions made in Section
\ref{sec-obj-assumption}. Define
\begin{align}
\boldsymbol{\psi}^{k}=\big[\mathbb{E}\{X^k\};\mathbb{E}\{\bar{X}^k\};
\mathbb{E}\{D^{k-1}\};\mathbb{E}\{Y^k\}\big]
\label{theorem-content-1}
\end{align}
where $X^k\triangleq \|\boldsymbol{x}^{k}-
\bar{\boldsymbol{W}}_{\infty}\boldsymbol{x}^{k}\|_2^2$, $\bar{X}^k
\triangleq\|\bar{\boldsymbol{x}}^{k}-\boldsymbol{x}^*\|_2^2$,
$Y^k\triangleq\|\boldsymbol{y}^{k}-\bar{\boldsymbol{W}}_{\infty}
\boldsymbol{y}^{k}\|_2^2$,
\begin{align}
&\textstyle D^k\triangleq\sum_{i=1}^{N}\sum_{j=1}^{P_i}\sum_{t_{ij}^k=1}^{S_{ij}}
\|\boldsymbol{x}^*-\boldsymbol{\phi}_{ij,t_{ij}^k}^{k}\|_2^2,
\end{align}
with $\boldsymbol{x}^{k+1}$, $\boldsymbol{y}^{k+1}$, and
$\{\boldsymbol{\phi}_{ij,t}^{k}\}_{i,j,t}$ generated by Algorithm
\ref{alg-4}. If the stepsize $\alpha$ is chosen to be
sufficiently small, we have
\begin{align}
\textstyle
\boldsymbol{\psi}^{k+1}\leq \underbrace{\textstyle(1-\frac{\mu\alpha}{4}+
\frac{2c_2\alpha^2}{N})}_{\triangleq\gamma}\cdot\boldsymbol{\psi}^{k}
\label{theorem-content-1-1}
\end{align}
where $c_2\triangleq 8L^2(1+\overline{P_i}\cdot\overline{S_{ij}}^2)N$,
$\overline{S_{ij}}=\max_{i,j}\{S_{ij}\}$ and
$\overline{P_i}=\max_{i}\{P_i\}$ are constants, $N$
is the number of servers, $P_i$ is the number of users
associated with server $i$ and $L$ is the Lipschitz constant.
\end{theorem}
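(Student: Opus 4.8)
The plan is to follow the standard template for establishing linear convergence of gradient-tracking schemes with variance reduction: build a coupled system of one-step recursive inequalities for the four error quantities collected in $\boldsymbol{\psi}^k$, and then collapse that system to the scalar contraction factor $\gamma$ of (\ref{theorem-content-1-1}), all the while carefully accounting for the bias introduced by the CTUS triggering. First I would rewrite Algorithm \ref{alg-4} in stacked form, $\boldsymbol{x}^{k+1}=\bar{\boldsymbol{W}}\boldsymbol{x}^k-\alpha\boldsymbol{y}^k$ and $\boldsymbol{y}^{k+1}=\bar{\boldsymbol{W}}\boldsymbol{y}^k+\boldsymbol{g}^{k+1}-\boldsymbol{g}^k$, with $\boldsymbol{g}^k=[\boldsymbol{g}_1^k;\cdots;\boldsymbol{g}_N^k]$. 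Double stochasticity of $\bar{\boldsymbol{W}}$ gives $\bar{\boldsymbol{y}}^k=\frac{1}{N}\sum_{i=1}^N\boldsymbol{g}_i^k$, so the average iterate obeys $\bar{\boldsymbol{x}}^{k+1}=\bar{\boldsymbol{x}}^k-\frac{\alpha}{N}\sum_{i=1}^N\boldsymbol{g}_i^k$. The crucial preliminary step is to isolate the triggering bias: comparing the actual $\boldsymbol{g}_i^{k+1}$ in (\ref{main-alg-3}) with the ``ideal'' aggregate that would result if every user uploaded, the two differ by $\boldsymbol{e}_i^{k+1}=\sum_{j\in\mathcal{N}_i^{k+1}}\boldsymbol{\Delta}_{ij}^{k+1}$. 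By the triggering rule (\ref{main-alg-4}) each such term satisfies $\|\boldsymbol{\Delta}_{ij}^{k+1}\|_2^2\le\rho\|\sum_{i'=1}^{N}w_{ii'}\boldsymbol{x}_{i'}^{k+1}-\boldsymbol{x}_i^{k+1}\|_2^2$, and since $(\bar{\boldsymbol{W}}-\boldsymbol{I})\bar{\boldsymbol{W}}_{\infty}=\boldsymbol{0}$ the right-hand side is a row of $(\bar{\boldsymbol{W}}-\boldsymbol{I})(\boldsymbol{x}^{k+1}-\bar{\boldsymbol{W}}_{\infty}\boldsymbol{x}^{k+1})$. Thus one obtains $\sum_{i=1}^N\|\boldsymbol{e}_i^{k+1}\|_2^2\le c\,\rho\,\overline{P_i}^{\,2}\,X^{k+1}$ for an explicit constant $c$: the entire loss of unbiasedness is bounded by the consensus error, which is the device that makes the staleness controllable and formalizes the heuristic of Section \ref{sec-rational}.

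Next I would derive the four recursions. For $\bar{X}^k$ I would expand $\|\bar{\boldsymbol{x}}^{k+1}-\boldsymbol{x}^*\|_2^2$ and add and subtract $\nabla f(\bar{\boldsymbol{x}}^k)$; $\mu$-strong convexity together with $L$-smoothness of $f$ yields the descent term $-\frac{\mu\alpha}{4}$, while the gap between $\frac{1}{N}\sum_i\boldsymbol{g}_i^k$ and $\nabla f(\bar{\boldsymbol{x}}^k)$ splits into a consensus part (controlled by $X^k$), the VR-SG variance, and the triggering bias $\boldsymbol{e}^k$, each entering at order $\alpha^2$. For $X^{k+1}$ I would use $\|\bar{\boldsymbol{W}}-\bar{\boldsymbol{W}}_{\infty}\|_2=\sigma<1$ to get contraction by $\sigma^2$ plus an $\alpha$-weighted coupling to $Y^k$. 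For $Y^{k+1}$ the consensus error again contracts by $\sigma^2$, driven by $\|\boldsymbol{g}^{k+1}-\boldsymbol{g}^k\|_2^2$; here the $S_{ij}$ prefactor in (\ref{main-alg-1}) gets squared and combined with $L$-Lipschitzness, and the triggering error reappears bounded by $X^{k+1}$. Finally, for $D^{k}$ I would take expectation over the uniformly drawn index $t_{ij}^{k+1}$, so that each stored $\boldsymbol{\phi}_{ij,t}$ is refreshed to $\boldsymbol{x}_i^{k+1}$ with probability $1/S_{ij}$; this gives a geometric decay of rate $1-1/\overline{S_{ij}}$ with a driving term $\sum_{i,j}\|\boldsymbol{x}^*-\boldsymbol{x}_i^{k+1}\|_2^2$ that is itself bounded by $X^{k+1}$ and $\bar{X}^{k+1}$.

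The engine behind the constant $c_2$ is the VR-SG variance bound: using unbiasedness $\mathbb{E}\{\boldsymbol{g}_{ij}^{k+1}\}=\nabla f_{ij}(\boldsymbol{x}_i^{k+1})$ and $L$-Lipschitz continuity of each $\nabla f_{ij,t}$, I would bound $\mathbb{E}\|\boldsymbol{g}_{ij}^{k+1}-\nabla f_{ij}(\boldsymbol{x}_i^{k+1})\|_2^2$ by $L^2 S_{ij}^2$ times distances of the form $\|\boldsymbol{x}_i^{k+1}-\boldsymbol{\phi}_{ij,t}^{k}\|_2^2$, which relate to $D^{k}$ after inserting $\boldsymbol{x}^*$, and to $X^{k+1},\bar{X}^{k+1}$; summing over $j$ and $t$ produces the factor $\overline{P_i}\,\overline{S_{ij}}^{\,2}$ in $c_2=8L^2(1+\overline{P_i}\cdot\overline{S_{ij}}^{\,2})N$. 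I would then assemble the four inequalities into a single vector recursion $\boldsymbol{\psi}^{k+1}\le\boldsymbol{J}(\alpha)\boldsymbol{\psi}^k$ with a nonnegative coefficient matrix $\boldsymbol{J}(\alpha)$ explicit in $\sigma,\mu,L,\overline{P_i},\overline{S_{ij}},N,\rho$. Wherever $X^{k+1}$ or $\bar{X}^{k+1}$ appear on a right-hand side (from the triggering bias, the $D$ term, and the variance term) I would substitute their own one-step recursions so that everything is re-expressed in $\boldsymbol{\psi}^k$, which only inflates the quadratic entries. The concluding step is to choose $\alpha$ small enough that the sole $O(\alpha)$ effect is the strong-convexity descent $-\frac{\mu\alpha}{4}$ in the $\bar{X}$ block while every mixing, variance, memory, and triggering term is $O(\alpha^2)$; bounding the resulting contraction then yields the scalar factor $\gamma=1-\frac{\mu\alpha}{4}+\frac{2c_2\alpha^2}{N}<1$.

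I expect the main obstacle to be exactly the loss of unbiasedness flagged in Section \ref{sec-IV-c}. Because $\boldsymbol{g}_i^{k+1}$ reuses stale innovations, the unbiased VR-SG arguments of \cite{XinKhan20} do not transfer directly, and one must show that the bias $\boldsymbol{e}^{k+1}$ is not merely bounded but bounded by a quantity, $X^{k+1}$, that is already contracting and whose contribution can be pushed to order $\alpha^2$. The delicate part will be lining up the constants so that this bias never overwhelms the $-\frac{\mu\alpha}{4}$ descent, and verifying that the $X^{k+1},\bar{X}^{k+1}$ back-substitutions (which couple the triggering error to the very blocks it perturbs) do not create an uncontrolled feedback loop in $\boldsymbol{J}(\alpha)$.
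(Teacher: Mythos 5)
Your plan reproduces the paper's proof skeleton almost exactly: the same four coupled recursions (consensus of $\boldsymbol{x}$, optimality gap of $\bar{\boldsymbol{x}}$, the SAGA table error $D$, and consensus of $\boldsymbol{y}$, proved in the paper as Lemmas \ref{lemma10}, \ref{lemma5}, \ref{lemma5-1} and \ref{lemma9}), the same treatment of the CTUS bias --- bounding the sum of untransmitted innovations $\sum_{j\in\mathcal{N}_i^{k}}\boldsymbol{\Delta}_{ij}^{k}$ through the triggering rule by $\rho\overline{P_i}^2$ times the consensus error, exactly the paper's (\ref{lemmaX-2}) combined with (\ref{lemmaX-14}) --- the same variance bound producing $c_1,c_2,c_3$ (the paper's Lemma \ref{lemma4-1}), and the same back-substitution of the $X^{k+1},\bar{X}^{k+1}$ terms to obtain a nonnegative matrix recursion $\boldsymbol{\psi}^{k+1}\leq\boldsymbol{T}\boldsymbol{\psi}^k$.

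The gap is in your concluding step. You claim that one can ``choose $\alpha$ small enough that the sole $O(\alpha)$ effect is the strong-convexity descent while every mixing, variance, memory, and triggering term is $O(\alpha^2)$,'' and that the scalar rate $\gamma$ then follows by ``bounding the resulting contraction.'' This is false for the matrix you will actually obtain: in the paper's $\boldsymbol{T}$ of (\ref{proofn-2-1}), the entries $2\overline{P_i}$ and $2\overline{P_i}N$ feeding the $D$-recursion, the entries $a_1,a_2,a_4$ feeding the $Y$-recursion, and the diagonal entries $\frac{1+\sigma^2}{2}$ and $1-1/\overline{S_{ij}}$ are all $O(1)$, independent of $\alpha$; no choice of stepsize makes them small, so the contraction factor of $\boldsymbol{T}$ cannot be read off its diagonal, and the componentwise inequality $\boldsymbol{\psi}^{k+1}\leq\gamma\boldsymbol{\psi}^k$ does not follow from stepsize smallness alone. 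The missing device is the positive-vector (Perron-type) certificate: the paper invokes Lemma \ref{lemma6} (Horn--Johnson), i.e., $\boldsymbol{T}\boldsymbol{\psi}\leq\gamma\boldsymbol{\psi}$ for some $\boldsymbol{\psi}>0$ implies $\rho(\boldsymbol{T})\leq\gamma$, and then constructs the weights in a specific order --- $[\boldsymbol{\psi}]_2$ large relative to $[\boldsymbol{\psi}]_1$ (possible because $b_2/\alpha$ is bounded), $[\boldsymbol{\psi}]_3$ large relative to both (possible because the $D$-row diagonal is bounded away from $1$), $[\boldsymbol{\psi}]_4$ large relative to all three (possible because the $Y$-row diagonal is bounded away from $1$) --- and only then imposes four stepsize thresholds to close the loop through the genuinely weak $O(\alpha)/O(\alpha^2)$ couplings back into the first two rows; cf.\ (\ref{proof-17-1})--(\ref{proof-17-4}). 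You correctly flag the feedback-loop worry in your last paragraph, but this weighted-norm argument is precisely what resolves it, and without it your collapse to $\gamma$ does not go through.
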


In Theorem \ref{theorem-1}, the metric to characterize the
convergence behavior of the proposed algorithm is
$\boldsymbol{\psi}^{k}$. In $\boldsymbol{\psi}^{k}$, $X^k$ (resp.
$Y^k$) is the consensus gap measuring the distance between the
server-side local variable $\boldsymbol{x}^i$ (resp.
$\boldsymbol{y}^i$) and the average of the local variables, i.e.,
$\bar{\boldsymbol{x}}^k$ (resp. $\bar{\boldsymbol{y}}^k$).
When $X^k=0$ (resp. $Y^k=0$), it means that consensus
among servers is achieved. The metric $\bar{X}^k$ measures the
distance between $\bar{\boldsymbol{x}}^k$ and the optimal point
$\boldsymbol{x}^*$. Clearly, each local variable
$\boldsymbol{x}^i$ converges to the optimal point
$\boldsymbol{x}^*$ when both consensuses are achieved and
$\bar{X}^k=0$. The metric $D^k$ measures the distance between
$\boldsymbol{\phi}_{ij,t_{ij}^k}^{k}\in\mathbb{R}^{d}$, which is
the local variable corresponding to the $t_{ij}^k$th training
sample at user $u_{ij}$, and the optimal point
$\boldsymbol{x}^*$. Therefore $D^k=0$ indicates that all the
user-side local variables also converge to the optimal point. To
conclude, the metric $\boldsymbol{\psi}^{k}$ characterizes the
convergence behavior of the proposed algorithm from different
perspectives, say, consensus achieving as well as optimality
reaching. As such, $\boldsymbol{\psi}^{k}=0$ implies that the
proposed algorithm has already arrived at the optimal point.

The inequality (\ref{theorem-content-1-1}) indicates a linear
convergence rate of Algorithm \ref{alg-4}, provided that the rate
$\gamma<1$. This is always achievable if we set $\alpha$ to be
sufficiently close to $0$ because the second-order polynomial
$\frac{2c_2\alpha^2}{N}$ decreases faster than the first-order
polynomial $\frac{\mu\alpha}{4}$. Our theoretical result can be
considered as a generalization of the result in \cite{XinKhan20}.
Such an extension, however, is highly nontrivial as the CTUS
mechanism breaks the unbiasness of the server-side local gradient.

\section{Proof of Theorem \ref{theorem-1}}
\label{sec-6}
In appendices, we proved four different
inequalities. Combining those inequalities yields the following
vector-form inequality:
\begin{align}
\boldsymbol{\psi}^{k+1}\leq \boldsymbol{T} \boldsymbol{\psi}^{k},
\label{proofn-2}
\end{align}
where $\boldsymbol{\psi}^{k+1}$ is defined in
(\ref{theorem-content-1}), and
\begin{align}
\boldsymbol{T}=
\begin{bmatrix}
\frac{1+\sigma^2}{2} & 0 & 0 & \frac{2\alpha^2}{1-\sigma^2}\\
b_2 & b_1 & b_3 & 0\\
2\overline{P_i} & 2\overline{P_i}N &
(1-\frac{1}{\overline{S_{ij}}}) & 0\\
a_1 & a_2 & a_4  & a_3
\end{bmatrix}
\label{proofn-2-1}
\end{align}
Before introducing the notations in (\ref{proofn-2-1}), first
notice that the $4$ inequalities in (\ref{proofn-2}),
from top to bottom order, are respectively proved in Lemma
\ref{lemma10} (see (\ref{lemma10-content})), Lemma \ref{lemma5}
(see (\ref{lemma5-1-content1}), Appendix \ref{appendix-B}), Lemma
\ref{lemma5-1} (see (\ref{lemma5-1-content}), Appendix
\ref{appendix-C}) and Lemma \ref{lemma9} (see
(\ref{lemma9-content1}), Appendix \ref{appendix-D}).

We now define the notations in (\ref{proofn-2-1}). Let
$\underline{S_{ij}}\triangleq \min_{i,j}\{S_{ij}\}$,
$\overline{S_{ij}} \triangleq\max_{i,j}\{S_{ij}\}$ and
$\overline{P_i}\triangleq\max_{i}\{P_{i}\}$. $\sigma<1$ is
the second largest singular value of the mixing matrix
$\boldsymbol{W}$ and $\alpha$ is the stepsize parameter in
Algorithm \ref{alg-4}. Also, $\{b_i\}_{i=1}^3$ and
$\{a_i\}_{i=1}^4$ in $\boldsymbol{T}$ are defined as
\begin{align}
b_1=&\textstyle 1-\mu\alpha+\frac{2c_2\alpha^2}{N},
b_2=\frac{2\alpha L^2+4\alpha \rho(1+\sigma^2)\overline{P_i}^2+
2\mu\alpha^2L^2+2\mu\alpha^2c_1}{\mu N}, \nonumber\\
b_3=&\textstyle\frac{2\alpha^2c_3}{N}, \\
&\textstyle a_1=\frac{25L^2(1+\sigma^2)+4(1+\sigma^2)c_1
+3(1+\sigma^2)(2c_1\sigma^2+c_2\bar{b}+
2c_3\overline{P_i}N)}{1-\sigma^2},
\nonumber\\
&\textstyle
a_2=\frac{NL^2(1+\sigma^2)+4(1+\sigma^2)c_2+
3(1+\sigma^2)(c_2(2+\frac{2c_2}{L^2})+
2c_3\overline{P_i}N)}{1-\sigma^2},
\nonumber\\
&\textstyle a_3=\frac{1+\sigma^2}{2}+
\frac{24\alpha^2 L^2(1+\sigma^2)}{1-\sigma^2}
+\frac{6\alpha^2c_1(1+\sigma^2)}{1-\sigma^2},
\nonumber\\
&\textstyle
a_4=\frac{4(1+\sigma^2)c_3}{1-\sigma^2}
+\frac{3(1+\sigma^2)(b_3c_2+c_3(1-\overline{S_{ij}}^{-1}))}{1-\sigma^2}
\end{align}
in which $c_i$, $1\leq i\leq 3$, is a constant number
(see (\ref{lemma4-1-content2})), and $\bar{b}$ in $a_1$ is defined as
\begin{align}
\textstyle \bar{b}=\frac{4\alpha^2L^2+4\alpha^2\rho(1+\sigma^2)
\overline{P_i}^2+2\alpha^2c_1}{N}.
\end{align}

From (\ref{proofn-2}), proving linear convergence rate of
Algorithm \ref{alg-4} is equivalent to proving
$\rho(\boldsymbol{T})<1$. According to Lemma \ref{lemma6}
provided in Appendix A, if we can find a positive vector
$\boldsymbol{\psi}\in\mathbb{R}^{4}$ such that
$\boldsymbol{T}\boldsymbol{\psi}\leq \gamma \boldsymbol{\psi}$
holds with $\gamma<1$, then we have $\rho(\boldsymbol{T})<1$. To
do this, set $\gamma=1-\frac{\mu\alpha}{4}+
\frac{2c_2\alpha^2}{N}$ and from now on we are going to find a
positive vector $\boldsymbol{\psi}$ such that $\boldsymbol{T}
\boldsymbol{\psi}\leq\gamma \boldsymbol{\psi}$. Recall that
the parameters are chosen such that $\gamma$ is guaranteed to
be smaller than $1$.

\subsection{Finding $\boldsymbol{\psi}$}
Using elementary algebra, it is easy to deduce that
$\boldsymbol{T}\boldsymbol{\psi} \leq \gamma \boldsymbol{\psi}$ is
equivalent to the following set of inequalities:
\begin{align}
&\textstyle \frac{\mu\alpha}{4}-\alpha^2\underbrace{\textstyle\big(\frac{2c_2}{N}-
\frac{2}{1-\sigma^2}
\frac{[\boldsymbol{\psi}]_4}{[\boldsymbol{\psi}]_1}\big)}_{\text{[(\ref{proof-17-1})-1]}}
\leq \frac{1-\sigma^2}{2},
\label{proof-17-1}\\
&\textstyle \frac{2\alpha^2c_3}{N}\cdot [\boldsymbol{\psi}]_3
\leq \underbrace{\textstyle\frac{3}{4}\mu\cdot [\boldsymbol{\psi}]_2-
\frac{b_2}{\alpha}\cdot[\boldsymbol{\psi}]_1}_{\text{[(\ref{proof-17-2})-1]}},
\label{proof-17-2}\\
&\textstyle 2\overline{P_i} [\boldsymbol{\psi}]_1+
2\overline{P_i}N [\boldsymbol{\psi}]_2
\leq \underbrace{\textstyle\big(1-\frac{\mu\alpha}{4}+
\frac{2c_2\alpha^2}{N}-(1-\frac{1}{\overline{S_{ij}}})\big)}_{\text{[(\ref{proof-17-3})-1]}}
[\boldsymbol{\psi}]_3,
\label{proof-17-3}\\
&\textstyle a_1\cdot[\boldsymbol{\psi}]_1+a_2\cdot[\boldsymbol{\psi}]_2+
a_4\cdot[\boldsymbol{\psi}]_3\leq
\underbrace{\textstyle(1-\frac{\mu\alpha}{4}
+\frac{2c_2\alpha^2}{N}-a_3)}_{\text{[(\ref{proof-17-4})-1]}}
[\boldsymbol{\psi}]_4,
\label{proof-17-4}
\end{align}
where we used the definition of $\gamma\triangleq1-\frac{\mu\alpha}{4}+
\frac{2c_2\alpha^2}{N}$ and that of $\boldsymbol{T}$. In
the above, $[\boldsymbol{\psi}]_i$ represents the $i$th element of
$\boldsymbol{\psi}$. Since $\boldsymbol{T}$ is positive and
$\boldsymbol{\psi}$ should be positive, we need to first ensure
the positiveness of [(\ref{proof-17-3})-1] and
[(\ref{proof-17-4})-1].

\subsubsection{Ensuring positiveness of [(\ref{proof-17-3})-1]}
It is easy to see that $\text{[(\ref{proof-17-3})-1]}>0$ is equivalent to
$\frac{\mu\alpha}{4}-\frac{2c_2\alpha^2}{N}<\frac{1}{\overline{S_{ij}}}$.
Set $\alpha_{\text{thresh},3}$ such that
$\frac{\mu\alpha_{\text{thresh},3}}{4}=\frac{1}{\overline{S_{ij}}}$.
It is easy to verify that
$\frac{\mu\alpha}{4}-\frac{2c_2\alpha^2}{N}
<\frac{1}{\overline{S_{ij}}}$ can always be guaranteed if
$\alpha\in(0,\alpha_{\text{thresh},3})$.

\subsubsection{Ensuring positiveness of $\text{[(\ref{proof-17-4})-1]}$}
According to the definition of $a_3$, we have
\begin{align}
&\text{[(\ref{proof-17-4})-1]}=\textstyle \frac{1-\sigma^2}{2}-
\frac{\mu\alpha}{4}+\frac{2c_2\alpha^2}{N}-
\frac{24\alpha^2 L^2(1+\sigma^2)}{1-\sigma^2}
-\frac{6\alpha^2c_1(1+\sigma^2)}{1-\sigma^2}
\nonumber\\
&=\textstyle \frac{1-\sigma^2}{2}-\frac{\mu\alpha}{4}+
\alpha^2\underbrace{\textstyle\big(\frac{2c_2}{N}-
\frac{24 L^2(1+\sigma^2)}{1-\sigma^2}-
\frac{6c_1(1+\sigma^2)}{1-\sigma^2}\big)}_{<0},
\label{proof-18}
\end{align}
where $<0$ can be verified by checking the definitions of $c_1$
and $c_2$. Since $0<\sigma<1$, we know that
$\frac{1-\sigma^2}{2}>0$. Set $\alpha_{\text{thresh},4}$ such that
$[(\ref{proof-17-4})-1]=0$ when $\alpha=\alpha_{\text{thresh},4}$.
Hence $\text{[(\ref{proof-17-4})-1]}>0$ can be ensured if
$\alpha\in(0,\alpha_{\text{thresh},4})$.

Suppose
$0<\alpha<\min\{\alpha_{\text{thres},3},\alpha_{\text{thres},4}\}$.
Then we have $\text{[(\ref{proof-17-3})-1]}>0$ and
$\text{[(\ref{proof-17-4})-1]}>0$. Next, we discuss how to determine
$[\boldsymbol{\psi}]_i$, $1\leq i\leq 4$.

\subsubsection{Determining $[\boldsymbol{\psi}]_1$ and $[\boldsymbol{\psi}]_2$}
To begin with, let $[\boldsymbol{\psi}]_1$ be an arbitrary
positive value. With $[\boldsymbol{\psi}]_1$ fixed, we can
find a sufficiently large $[\boldsymbol{\psi}]_2$ such that
[(\ref{proof-17-2})-1] is positive. This is because
\begin{align}
\textstyle\frac{b_2}{\alpha}=\frac{2L^2+4 \rho(1+\sigma^2)\overline{P_i}^2+
2\mu\alpha L^2+2\mu\alpha c_1}{\mu N}
\label{proof-18-1}
\end{align}
is upper bounded (since $\alpha$ is upper bounded).

\subsubsection{Determining $[\boldsymbol{\psi}]_3$}
With $[\boldsymbol{\psi}]_2$ and $[\boldsymbol{\psi}]_1$ fixed, we
can always choose a sufficiently large $[\boldsymbol{\psi}]_3$
such that (\ref{proof-17-3}) holds. Now since $[\boldsymbol{\psi}]_1$,
$[\boldsymbol{\psi}]_1$ and $[\boldsymbol{\psi}]_3$ are fixed,
(\ref{proof-17-2}) is guaranteed if
$\alpha\in(0,\alpha_{\text{thresh},2})$, where
$\alpha_{\text{thresh},2}$ satisfies
\begin{align}
&\textstyle \frac{2\alpha_{\text{thresh},2}^2 c_3}{N}\cdot
[\boldsymbol{\psi}]_3+\frac{b_2}{\alpha_{\text{thresh},2}}
\cdot[\boldsymbol{\psi}]_1
=\frac{3}{4}\mu\cdot [\boldsymbol{\psi}]_2.
\end{align}
Note that $\frac{b_2}{\alpha}$ is a polynomial of $\alpha$
(see (\ref{proof-18-1})), which means that $\frac{b_2}{\alpha}$
decreases as $\alpha$ decreases.

\subsubsection{Determining $[\boldsymbol{\psi}]_4$}
With $[\boldsymbol{\psi}]_1$, $[\boldsymbol{\psi}]_2$ and
$[\boldsymbol{\psi}]_3$ fixed as well as
$0<\alpha<\min_{i=2,3,4}\{\alpha_{\text{thres},i}\}$ (which means
that $[(\ref{proof-17-4})-1]>0$), there always exists a
sufficiently large $[\boldsymbol{\psi}]_4$ such that
(\ref{proof-17-4}) holds. Given a fixed $[\boldsymbol{\psi}]_1$
and $[\boldsymbol{\psi}]_4$, (\ref{proof-17-1}) can be guaranteed
by choosing a sufficiently small $\alpha$, no matter
$[(\ref{proof-17-1})-1]$ is positive or negative. The feasible
range of $\alpha$ for achieving this is denoted as
$(0,\alpha_{\text{thres},4})$.

Based on the above discussion, there exists $\boldsymbol{\psi}>0$
such that $\boldsymbol{T}\boldsymbol{\psi}\leq
\gamma\boldsymbol{\psi}$, provided that $\alpha<\min_{i=1,2,3,4}
\{\alpha_{\text{thres},i}\}$. As such, the spectral radius of
$\boldsymbol{T}$, $\rho(\boldsymbol{T})$, is no larger than
$\gamma=1-\frac{\mu\alpha}{4}+\frac{2c_2\alpha^2}{N}$. Combining
this with (\ref{proofn-2}) yields
\begin{align}
\textstyle \boldsymbol{\psi}^{k+1}\leq \boldsymbol{T}
\boldsymbol{\psi}^{k} \leq  \rho(\boldsymbol{T})
\cdot\boldsymbol{\psi}^{k} \leq (1-\frac{\mu\alpha}{4}+
\frac{2c_2\alpha^2}{N})\cdot\boldsymbol{\psi}^{k} \label{proof-19}
\end{align}
where the second inequality is obtained by realizing that
$\rho(\boldsymbol{T})$ is the largest absolute eigenvalue and
$\boldsymbol{T} \boldsymbol{\psi}^{k}$ is non-negative. Clearly,
(\ref{proof-19}) is the desired result.

\section{A Further Analysis of CTUS}
\label{sec-7} In this section, we provide a rigorous analysis to
show that the proposed CTUS mechanism can prune user uploads. This
is equivalent to showing that for a proper choice of $\rho$, the
triggering condition (\ref{main-alg-4}) dose not hold for a number
of users. Notice that the quantities on both sides of
(\ref{main-alg-4}) are random variables. Therefore if the
following inequality holds
\begin{align}
\textstyle \mathbb{E}\{\|\boldsymbol{\Delta}_{ij}^{k+1}\|_2^2\}<
\mathbb{E}\{\rho\|\sum_{i'=1}^{N}w_{ii'}\boldsymbol{x}_{i'}^{k+1}-
\boldsymbol{x}_i^{k+1}\|_2^2\},
\label{commu-redu-1}
\end{align}
then we can safely claim that
\begin{align}
\textstyle\mathbb{P}\Big\{\|\boldsymbol{\Delta}_{ij}^{k+1}\|_2^2<
\rho\|\sum_{i'=1}^{N}w_{ii'}\boldsymbol{x}_{i'}^{k+1}-
\boldsymbol{x}_i^{k+1}\|_2^2\Big\}\neq 0,
\label{commu-redu-2}
\end{align}
where $\mathbb{P}\{\cdot\}$ denotes the probability of an event,
and the expectation in (\ref{commu-redu-1}) is taken w.r.t. all
the random variables appeared up to the $k+1$th iteration. If
(\ref{commu-redu-2}) holds true, it means that user $u_{ij}$
has a nonzero probability not to upload its local gradient. To
show (\ref{commu-redu-1}) (for some $\rho$), we consider an
averaged version of (\ref{commu-redu-1}), that is,
\begin{align}
&\textstyle\frac{1}{\sum_{i=1}^N P_i}\cdot
\mathbb{E}\{\sum_{i=1}^N\sum_{j=1}^{P_i}
\|\boldsymbol{\Delta}_{ij}^{k+1}\|_2^2\}
\nonumber\\
\leq &\textstyle \frac{1}{\sum_{i=1}^N P_i}\cdot
\mathbb{E}\Big\{\sum_{i=1}^N
\rho P_i\|\sum_{i'=1}^{N}w_{ii'}\boldsymbol{x}_{i'}^{k+1}-
\boldsymbol{x}_i^{k+1}\|_2^2\Big\},
\label{commu-redu-3}
\end{align}
where the average is taken for all users. Clearly, if
(\ref{commu-redu-3}) holds true, then there must exist users
which satisfy (\ref{commu-redu-1}). To facilitate the
analysis, we suppose the sequence generated by Algorithm
\ref{alg-4} has reached to a point that is close to the optimal
solution, in which case we have the following result.

\newtheorem{proposition}{Proposition}
\begin{proposition}
\label{Proposition-1} Let $\boldsymbol{x}^*$ denote the optimal
solution to the CFL problem (\ref{problem-cfl}). Suppose we have
$D^{k+1}\approx D^k$ and
\begin{align}
&\|\bar{\boldsymbol{W}}_{\infty}\boldsymbol{x}^{k+1}-
\tilde{\boldsymbol{x}}^*\|_2^2\leq C_1\|\boldsymbol{x}^{k+1}-
\bar{\boldsymbol{W}}_{\infty}\boldsymbol{x}^{k+1}\|_2^2,
\label{theorem2-content-1}\\
&\|\bar{\boldsymbol{W}}\boldsymbol{x}^{k+1}-
\bar{\boldsymbol{W}}_{\infty}\boldsymbol{x}^{k+1}\|_2^2
\leq C_2\|\boldsymbol{x}^{k+1}-
\bar{\boldsymbol{W}}\boldsymbol{x}^{k+1}\|_2^2,
\label{theorem2-content-2}
\end{align}
where $\tilde{\boldsymbol{x}}^*\triangleq[\boldsymbol{x}^*;
\cdots;\boldsymbol{x}^*]$ is a vertical stack of $N$
$\boldsymbol{x}^*$s, and $C_1$ (resp. $C_2$) is a positive
constant. Then we have
\begin{align}
&\textstyle\frac{1}{\sum_{i=1}^N P_i}\cdot
\mathbb{E}\{\sum_{i=1}^N\sum_{j=1}^{P_i}
\|\boldsymbol{\Delta}_{ij}^{k+1}\|_2^2\}
\nonumber\\
\lesssim &\textstyle \mathbb{E}\Big\{
\frac{C_3}{N}\sum_{i=1}^N
\|\sum_{i'=1}^{N}w_{ii'}\boldsymbol{x}_{i'}^{k+1}-
\boldsymbol{x}_i^{k+1}\|_2^2\Big\},
\label{theorem2-content-3}
\end{align}
where $C_3\triangleq\frac{16(1+C_1)(1+C_2)\overline{S_{ij}}
\bar{L}\overline{P_i}N}{\sum_{i=1}^N P_i}$, and $\bar{L}$
is a constant defined in (\ref{comm-effici-3-3}).
\end{proposition}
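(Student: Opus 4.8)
The plan is to reduce \emph{both} sides of (\ref{theorem2-content-3}) to the single quantity $\|\bar{\boldsymbol{W}}\boldsymbol{x}^{k+1}-\boldsymbol{x}^{k+1}\|_2^2=\sum_{i=1}^N\|\sum_{i'=1}^N w_{ii'}\boldsymbol{x}_{i'}^{k+1}-\boldsymbol{x}_i^{k+1}\|_2^2$, which is precisely the term appearing on the right-hand side. The starting point is to control the gradient innovation $\boldsymbol{\Delta}_{ij}^{k+1}=\boldsymbol{g}_{ij}^{k+1}-\boldsymbol{g}_{ij}^k$ by the distances of the current and stale iterates to the optimum. Writing $\|\boldsymbol{\Delta}_{ij}^{k+1}\|_2^2\le 2\|\boldsymbol{g}_{ij}^{k+1}-\nabla f_{ij}(\boldsymbol{x}^*)\|_2^2+2\|\boldsymbol{g}_{ij}^{k}-\nabla f_{ij}(\boldsymbol{x}^*)\|_2^2$ and invoking the SAGA-type variance estimate used in the convergence analysis (the bound introducing the constant $\bar{L}$ of (\ref{comm-effici-3-3})), each term is dominated, after taking expectation over the random sample index and using $L$-Lipschitzness of the per-sample gradients, by $\|\boldsymbol{x}_i^{k+1}-\boldsymbol{x}^*\|_2^2$ together with the stale-iterate deviations $\|\boldsymbol{\phi}_{ij,t}^{k}-\boldsymbol{x}^*\|_2^2$.

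Summing the resulting per-user bound over all $i,j$ and using $S_{ij}\le\overline{S_{ij}}$ and $P_i\le\overline{P_i}$, the first group of terms collapses into a constant multiple of $\bar{L}\,\overline{S_{ij}}\,\overline{P_i}\,\|\boldsymbol{x}^{k+1}-\tilde{\boldsymbol{x}}^*\|_2^2$ (since $\sum_{j=1}^{P_i}\|\boldsymbol{x}_i^{k+1}-\boldsymbol{x}^*\|_2^2=P_i\|\boldsymbol{x}_i^{k+1}-\boldsymbol{x}^*\|_2^2$), while the stale-iterate terms collapse into a constant multiple of $\bar{L}\,\overline{S_{ij}}\big(\mathbb{E}\{D^k\}+\mathbb{E}\{D^{k-1}\}\big)$ by the definition of $D^k$; the remaining powers of $\overline{S_{ij}}$ and $L$ are bundled into $\bar{L}$ so as to match the single factor $\overline{S_{ij}}$ in $C_3$. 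Thus the whole left-hand side is controlled by $\bar{L}\,\overline{S_{ij}}\,\overline{P_i}\,\mathbb{E}\{\|\boldsymbol{x}^{k+1}-\tilde{\boldsymbol{x}}^*\|_2^2\}$ plus $\bar{L}\,\overline{S_{ij}}\,\mathbb{E}\{D^k\}$ up to a universal constant.

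Next I would trade the optimality terms for the consensus gap. Since $\bar{\boldsymbol{W}}_{\infty}$ is the orthogonal projection onto the consensus subspace and $\tilde{\boldsymbol{x}}^*$ lies in that subspace, the Pythagorean identity gives $\|\boldsymbol{x}^{k+1}-\tilde{\boldsymbol{x}}^*\|_2^2=X^{k+1}+N\bar{X}^{k+1}$; assumption (\ref{theorem2-content-1}), which reads $N\bar{X}^{k+1}\le C_1 X^{k+1}$, then yields $\|\boldsymbol{x}^{k+1}-\tilde{\boldsymbol{x}}^*\|_2^2\le(1+C_1)X^{k+1}$. To eliminate $D^k$, I would use the recursion for $\mathbb{E}\{D^k\}$ proved in Lemma \ref{lemma5-1} (the third row of $\boldsymbol{T}$), namely $\mathbb{E}\{D^k\}\le 2\overline{P_i}\mathbb{E}\{X^k\}+2\overline{P_i}N\mathbb{E}\{\bar{X}^k\}+(1-\frac{1}{\overline{S_{ij}}})\mathbb{E}\{D^{k-1}\}$; imposing the steady-state hypothesis $D^{k+1}\approx D^k$ turns this into the fixed-point inequality $\frac{1}{\overline{S_{ij}}}\mathbb{E}\{D^k\}\le 2\overline{P_i}\big(\mathbb{E}\{X^k\}+N\mathbb{E}\{\bar{X}^k\}\big)$, and a further application of (\ref{theorem2-content-1}) (with $X^k\approx X^{k+1}$ near convergence) solves it to give $\mathbb{E}\{D^k\}\lesssim\overline{S_{ij}}\,\overline{P_i}(1+C_1)X^{k+1}$. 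Hence the entire left-hand side is bounded by a constant multiple of $\bar{L}\,\overline{S_{ij}}\,\overline{P_i}(1+C_1)X^{k+1}$.

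Finally, I would convert the consensus gap $X^{k+1}=\|\boldsymbol{x}^{k+1}-\bar{\boldsymbol{W}}_{\infty}\boldsymbol{x}^{k+1}\|_2^2$ into the target. The elementary bound $\|a-c\|_2^2\le 2\|a-b\|_2^2+2\|b-c\|_2^2$ with $b=\bar{\boldsymbol{W}}\boldsymbol{x}^{k+1}$, combined with assumption (\ref{theorem2-content-2}), gives $X^{k+1}\le 2(1+C_2)\|\boldsymbol{x}^{k+1}-\bar{\boldsymbol{W}}\boldsymbol{x}^{k+1}\|_2^2=2(1+C_2)\sum_{i=1}^N\|\sum_{i'=1}^N w_{ii'}\boldsymbol{x}_{i'}^{k+1}-\boldsymbol{x}_i^{k+1}\|_2^2$. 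Chaining the three reductions, dividing by $\sum_{i=1}^N P_i$, and collecting the various factors of $2$ (accounting for the constant $16$) together with $(1+C_1)(1+C_2)\overline{S_{ij}}\bar{L}\overline{P_i}$ reproduces the claimed bound with $C_3$ as stated. I expect the main obstacle to be the first step, the SAGA-type variance estimate for $\mathbb{E}\{\|\boldsymbol{\Delta}_{ij}^{k+1}\|_2^2\}$: because $\boldsymbol{g}_{ij}^{k+1}$ and $\boldsymbol{g}_{ij}^{k}$ carry distinct sample randomness and $\boldsymbol{g}_{ij}^{k+1}$ is only a biased surrogate once stale gradients are reused, the expectation and the Lipschitz estimates must be taken with care. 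The secondary subtlety is the fixed-point elimination of $D^k$, where the hypothesis $D^{k+1}\approx D^k$ is exactly what closes the recursion from Lemma \ref{lemma5-1}.
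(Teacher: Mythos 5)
Your proposal is correct in outline and follows the same three-stage reduction as the paper: bound $\mathbb{E}\{\|\boldsymbol{\Delta}_{ij}^{k+1}\|_2^2\}$ by distances to the optimum, eliminate the $D$-terms through the recursion of Lemma \ref{lemma5-1} combined with the steady-state hypothesis, and then pass to the neighbor-averaging quantity via (\ref{theorem2-content-1})--(\ref{theorem2-content-2}); your last two stages match the paper's almost verbatim. Where you genuinely diverge is the first stage. The paper does not center $\boldsymbol{g}_{ij}^{k+1}$ and $\boldsymbol{g}_{ij}^{k}$ at $\nabla f_{ij}(\boldsymbol{x}^*)$; it exploits the exact SAGA bookkeeping — only the $t_{ij}^{k}$-th table entry changes between iterations, $\boldsymbol{\phi}_{ij,t_{ij}^{k+1}}^{k+1}=\boldsymbol{x}_i^{k+1}$ and $\boldsymbol{\phi}_{ij,t_{ij}^{k}}^{k+1}=\boldsymbol{\phi}_{ij,t_{ij}^{k}}^{k}$ — to write $\boldsymbol{\Delta}_{ij}^{k+1}$ \emph{exactly} as a combination of two $\boldsymbol{\phi}$-increments (see (\ref{comm-effici-1})), which after Lipschitz bounds and expectation gives $\mathbb{E}\{\sum_{i,j}\|\boldsymbol{\Delta}_{ij}^{k+1}\|_2^2\}\leq 2\bar{L}(D^{k+1}+D^k)$ with no separate iterate-optimality term; the single stated hypothesis $D^{k+1}\approx D^k$ then suffices, and $\bar{L}$ in (\ref{comm-effici-3-3}) is defined precisely as the ratio arising from this decomposition. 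Your centering at $\nabla f_{ij}(\boldsymbol{x}^*)$ also works, but it produces iterate terms at both indices $k+1$ and $k$ and forces you to apply the $D$-recursion at index $k$ rather than $k+1$, so you need the extra approximations $X^k\approx X^{k+1}$, $\bar{X}^k\approx\bar{X}^{k+1}$ and $D^{k}\approx D^{k-1}$, which are not among the stated hypotheses (though they are in the same near-convergence spirit). Relatedly, the data-dependent ratio into which you bundle constants is not literally the paper's $\bar{L}$, so you recover the bound only up to a different constant — acceptable for a $\lesssim$ statement, but it does not reproduce $C_3$ exactly as defined. The paper's telescoping identity is the cleaner and tighter route; your argument buys nothing extra in generality, only looser bookkeeping.
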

\begin{proof}
See Appendix \ref{appendix-proposi}.
\end{proof}

From (\ref{theorem2-content-3}) we know that (\ref{commu-redu-3})
holds when $\rho$ is set to a sufficiently large value. Note that
the assumptions made in the above proposition are reasonable. To
see this, first recall that (\ref{theorem-content-1-1}) indicates
$\mathbb{E}\{D^{k+1}\}\leq \gamma\cdot\mathbb{E}\{D^{k}\}$. Since
$\gamma$ is usually close to $1$, we can safely assume that
$D^{k+1}\approx D^k$. Condition (\ref{theorem2-content-1}) holds
valid when $\boldsymbol{x}^{k+1}$ is sufficiently close to the
optimal point. Condition (\ref{theorem2-content-2}) can be
justified as follows. Recall that the left hand side of
(\ref{theorem2-content-2}) can be bounded by
\begin{align}
&\|\bar{\boldsymbol{W}}\boldsymbol{x}^{k+1}-
\bar{\boldsymbol{W}}_{\infty}\boldsymbol{x}^{k+1}\|_2^2
\overset{(a)}{=}\|\bar{\boldsymbol{W}}(\boldsymbol{x}^{k+1}-
\bar{\boldsymbol{W}}_{\infty}\boldsymbol{x}^{k+1})\|_2^2
\nonumber\\
&\overset{(b)}{\leq} \|\boldsymbol{x}^{k+1}-
\bar{\boldsymbol{W}}_{\infty}\boldsymbol{x}^{k+1}\|_2^2
\label{theorem2-discuss-1}
\end{align}
where $(a)$ is because $\bar{\boldsymbol{W}}\bar{\boldsymbol{W}}_{\infty}
=\bar{\boldsymbol{W}}_{\infty}$, and $(b)$ is because
$\|\bar{\boldsymbol{W}}\|_2=1$. Recall that the
right hand side of (\ref{theorem2-content-2}) is the discrepancy
between each local variable and the average of its neighboring
variables. While the right hand side of (\ref{theorem2-discuss-1})
is the discrepancy between each local variable and the average of
all local variables. These two quantities should be close provided
that consensus is nearly achieved. Combining
(\ref{theorem2-discuss-1}) and
\begin{align}
\|\boldsymbol{x}^{k+1}-\bar{\boldsymbol{W}}_{\infty}\boldsymbol{x}^{k+1}\|_2^2
\approx \|\boldsymbol{x}^{k+1}-
\bar{\boldsymbol{W}}\boldsymbol{x}^{k+1}\|_2^2,
\end{align}
we can safely assume that (\ref{theorem2-content-2}) holds.

\begin{figure}
  \centering
  \includegraphics[width=6cm,height=4cm]{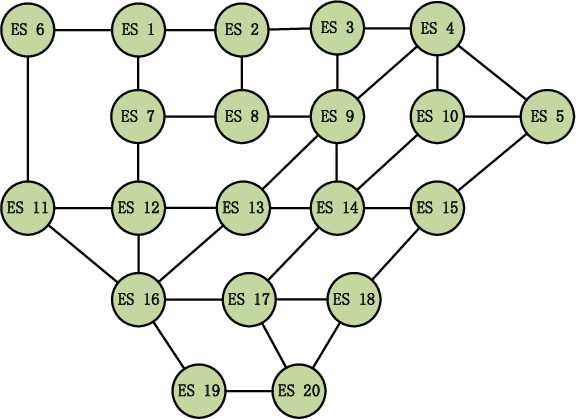}
  \caption{Topology of the server network}
  \label{fig1}
\end{figure}

\begin{figure*}[!htbp]
    \centering
    \subfigure[\scriptsize{Random graph.}]
    {\includegraphics[width=5cm,height=4cm]{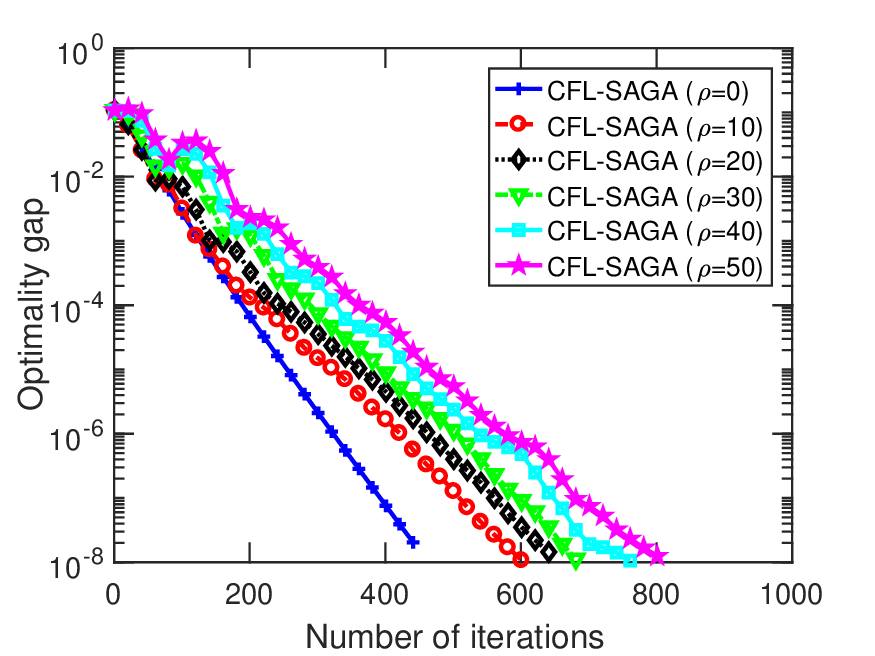}} \hfil
    \subfigure[\scriptsize{Ring graph.}]
    {\includegraphics[width=5cm,height=4cm]{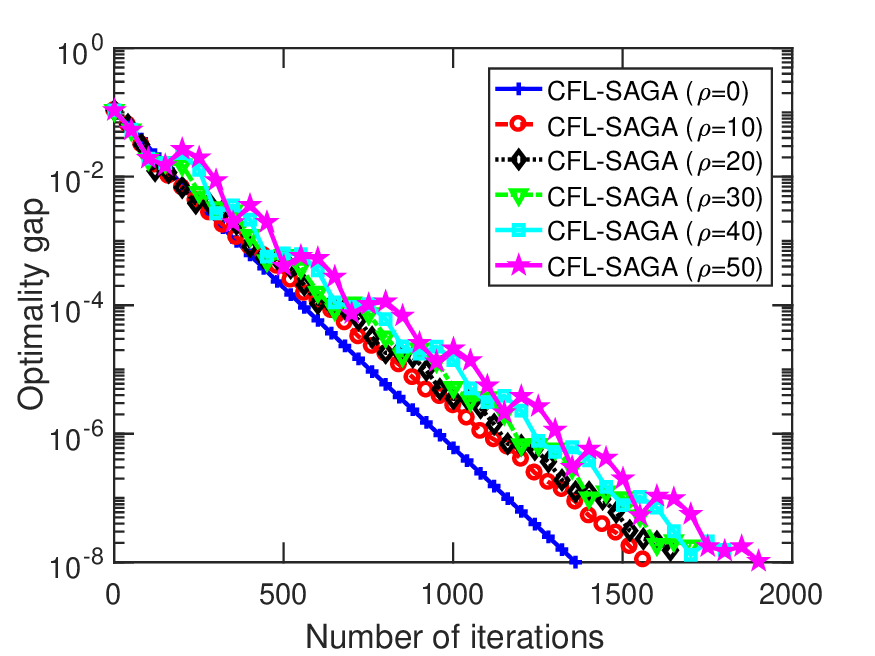}} \hfil
    \subfigure[\scriptsize{Fully connected graph.}]
    {\includegraphics[width=5cm,height=4cm]{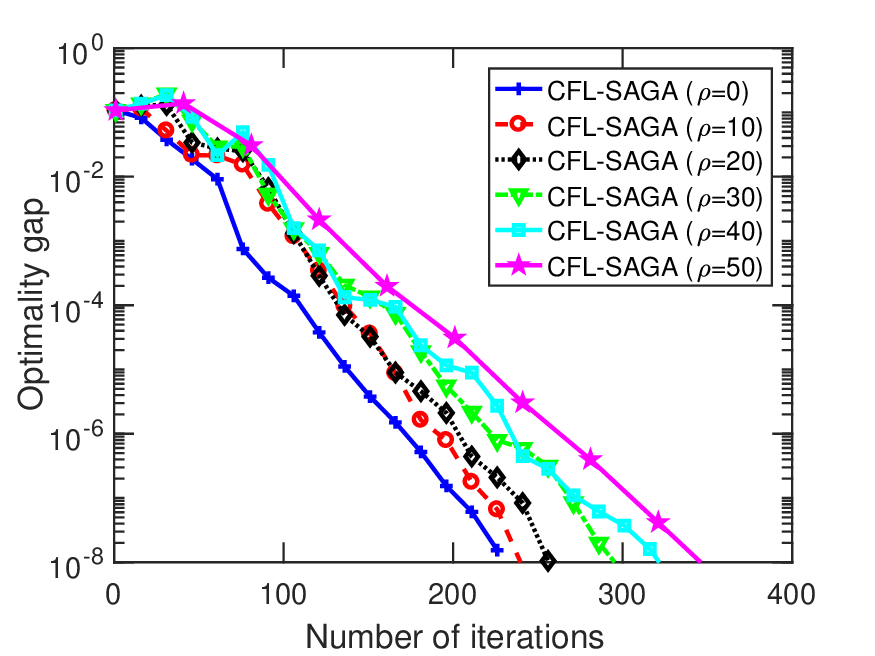}}
    \\
    \subfigure[\scriptsize{Random graph.}]
    {\includegraphics[width=5cm,height=4cm]{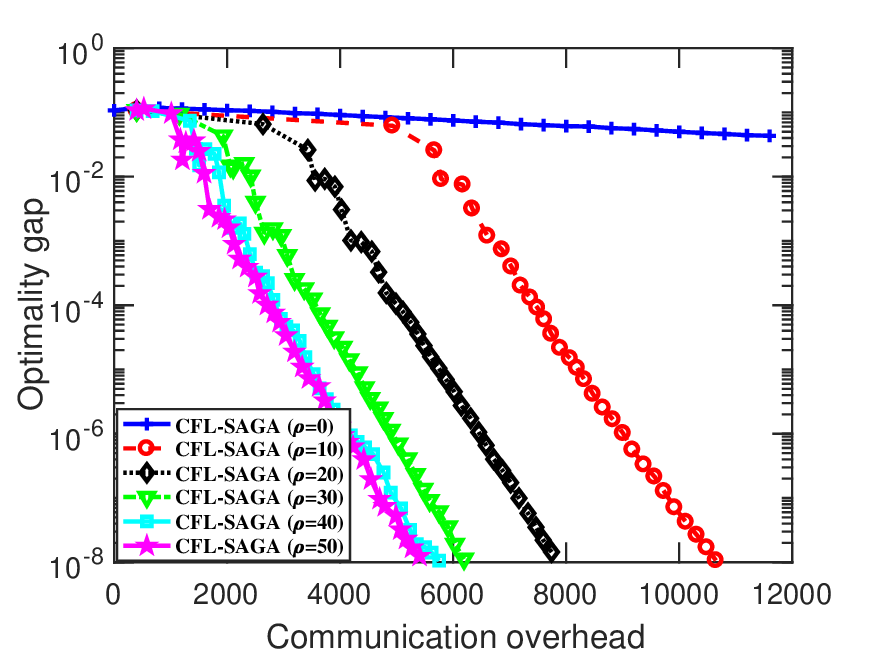}} \hfil
    \subfigure[\scriptsize{Ring graph.}]
    {\includegraphics[width=5cm,height=4cm]{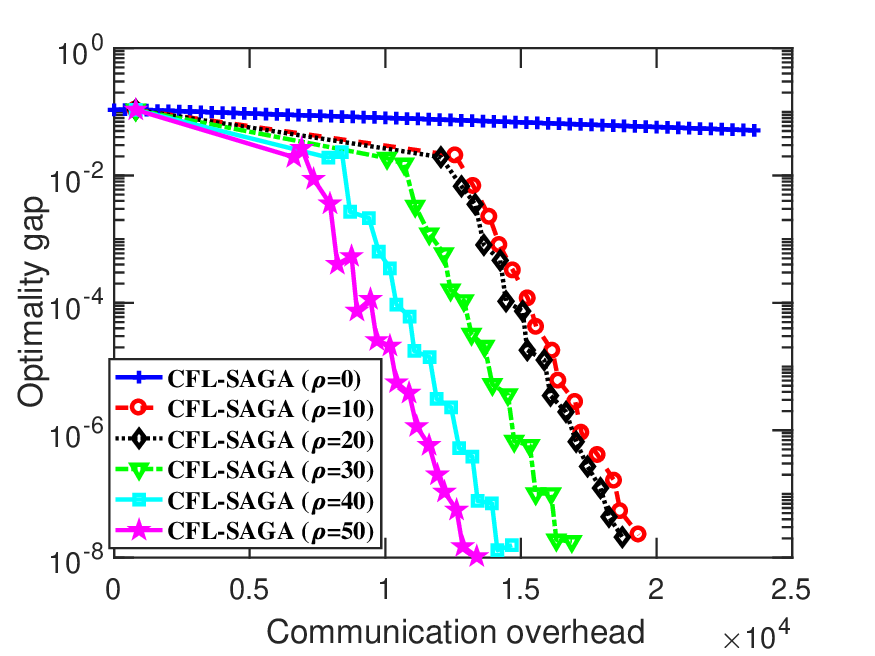}} \hfil
    \subfigure[\scriptsize{Fully connected graph.}]
    {\includegraphics[width=5cm,height=4cm]{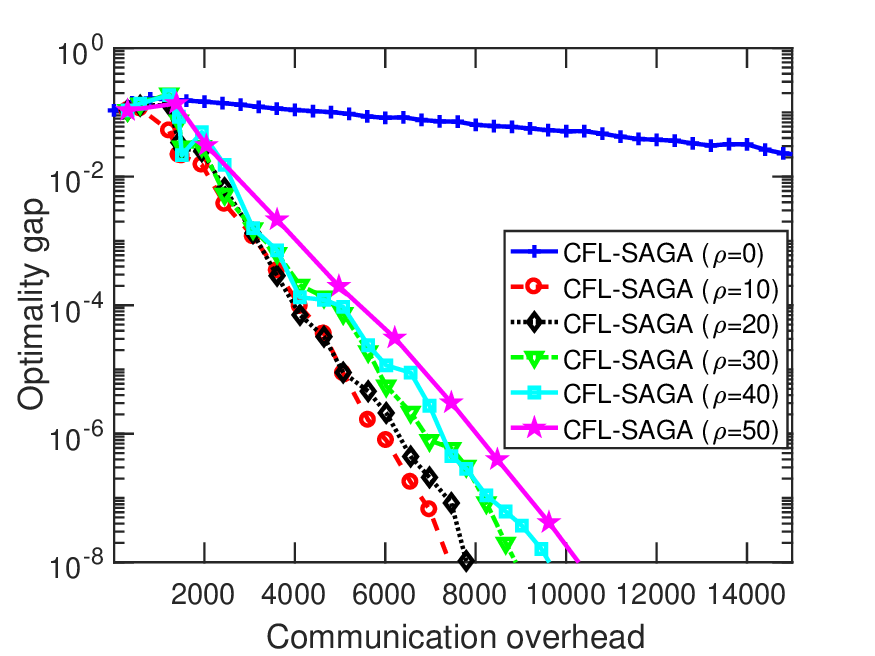}}
    \caption{Results on $\ell_2$-regularized logistic regression.
    First row: Optimality gap vs. number of iterations on different
    server networks; Second row: Optimality gap vs. communication overhead
    on different server networks}
    \label{fig2}
\end{figure*}

\section{Simulation Results}
\label{sec-8} In this section, we provide simulation results to
demonstrate the superiority of the proposed CFL-SAGA algorithm
over GT-SAGA and other competing algorithms. We compare the
performance of respective algorithms over different user sampling
rates as well as different server topologies. We first introduce
the experimental settings.

\subsection{Experimental Settings}
We consider a CFL system consisting of $N=20$ servers and $400$
users. Each server is assigned to $P_i=20$ users. To investigate
the performance of respective algorithms over different server
topologies, we consider three types of server networks. The first
one is a random graph depicted in Fig. \ref{fig1}. The second one
is the ring graph (cycle graph), and the last one is a fully
connected graph. Since the structures of the ring graph and the
fully connected graph are self-evident by their names, we do not
draw their topologies for sake of simplicity. Clearly, the fully
connected graph has the best connectivity, while the ring graph
has the poorest connectivity.

We adopt the $\ell_2$-regularized logistic regression problem as
our test problem:
\begin{align}
\mathop {\min }\limits_{\boldsymbol{x}\in\mathbb{R}^d}
& \ \textstyle f(\boldsymbol{x})\triangleq
\frac{1}{N}\sum_{i=1}^{N}\sum_{j=1}^{P_{i}}
f_{ij}(\boldsymbol{x}),
\label{simu-1}
\end{align}
where $f_{ij}(\boldsymbol{x})=\sum_{t=1}^{S_{ij}}
f_{ij,t}(\boldsymbol{x})$,
\begin{align}
 f_{ij,t}(\boldsymbol{x})=&\textstyle\sum\limits_{t'\in \mathcal{T}_{ij,t}}
\Big( \frac{\kappa}{2}\|\boldsymbol{x}\|_2^2 -y_{ij,t'}
\cdot\text{log}\big((1+e^{-\boldsymbol{\omega}_{ij,t'}^T\boldsymbol{x}})^{-1}\big)-
\nonumber\\
&(1-y_{ij,t'})\cdot\text{log}\big(1-(1+e^{-\boldsymbol{\omega}_{ij,t'}^T
\boldsymbol{x}})^{-1}\big)\Big)
\label{simu-2}
\end{align}
in which $\kappa$ is set to $0.05$, $\{\boldsymbol{\omega}_{ij,t'}
\in\mathbb{R}^{d},y_{ij,t'}\in\{0,1\}\}$ is the $t'$th training
sample stored at user $u_{ij}$, and $\mathcal{T}_{ij,t}$ is the
index set of the data samples in the $t$th mini-batch training
set. Clearly, $f_{ij,t}$ is strongly convex and its gradient is
Lipschitz continuous. In our experiments, both the data vector
$\boldsymbol{\omega}_{ij,t'}\in \mathbb{R}^{200}$ and the label
$y_{ij,t'}\in\{0,1\}$ are randomly generated. Each user is assumed
to hold $50$ training samples and each mini-batch training set
consists of $5$ training samples. As such, the total number of
training samples is $20000$.

To evaluate the performance of respective algorithms, we adopt the
optimality gap $opg^k$ to measure the distance between the current solution
and the optimal solution. The optimality gap $opg^k$ is defined as
$opg^k\triangleq \frac{\|\boldsymbol{x}^k-\bar{\boldsymbol{W}}_{\infty}
\boldsymbol{x}^*\|_2}{\sqrt{N}}$,
where $\boldsymbol{x}^k\triangleq[\boldsymbol{x}_1^k;\cdots;
\boldsymbol{x}_N^k]$, $\bar{\boldsymbol{W}}_{\infty}\boldsymbol{x}^*
\triangleq[\boldsymbol{x}^*;\cdots;\boldsymbol{x}^*]$, and
$\boldsymbol{x}^*$ is the optimal solution obtained by solving
(\ref{simu-1}) in a centralized manner.

\subsection{Experimental Results}
First, we examine the performance of the proposed CFL-SAGA under
different choices of the triggering parameter $\rho$ as well as
different server topologies. Fig. \ref{fig2} $(a)$ plots the
optimality gap of CFL-SAGA vs. the number of iterations for
the random network. The triggering parameter $\rho$ varies from
$0$ to $50$. Clearly, $\rho=0$ corresponds to the case of
full-uploads, that is, all users are required to upload their
VR-SGs in each iteration. In general, we see that the convergence
speed of CFL-SAGA becomes slower as $\rho$ increases. This is
expected since a larger $\rho$ leads to a smaller number of user
uploads, resulting in a larger approximate error in the aggregated
gradient. Fig. \ref{fig2} $(b)$ and $(c)$ plot the optimality gap
of CFL-SAGA vs. the number of iterations for the ring graph
and the fully connected graph, respectively. For these two server
networks, the convergence behavior of CFL-SAGA is similar as
that in Fig. \ref{fig2} $(a)$. Not surprisingly, the algorithm
exhibits a faster convergence speed over a more well-connected
server network. Fig. \ref{fig2} $(d)$ plots the optimality gap of
CFL-SAGA vs. the communication overhead for the random
network. In particular, the communication overhead is measured by
the total number of VR-SGs that are uploaded to servers. It is
observed that to reach the same accuracy, the required number of
user uploads decreases as $\rho$ increases. Nevertheless, our
empirical results suggest that the highest communication
efficiency is achieved when $\rho=50$, and a larger value of
$\rho$ beyond $50$ does not yield further improvement on the
communication efficiency. It is also noticed that the CFL-SAGA
exhibits a significant advantage in terms of communication
efficiency as compared to the full-upload case ($\rho=0$). The
reason is that, in distributed optimization, the user gradient
usually changes slowly, especially in the high-precision regime.
Hence using the stale gradient to generate the aggregated gradient
often leads to a very small approximation error. As a result, even
with a very small number of user uploads, the algorithm can still
maintain a fast convergence speed.

\begin{figure*}[!htbp]
    \centering
    \subfigure[\scriptsize{Random graph.}]
    {\includegraphics[width=5cm,height=4cm]{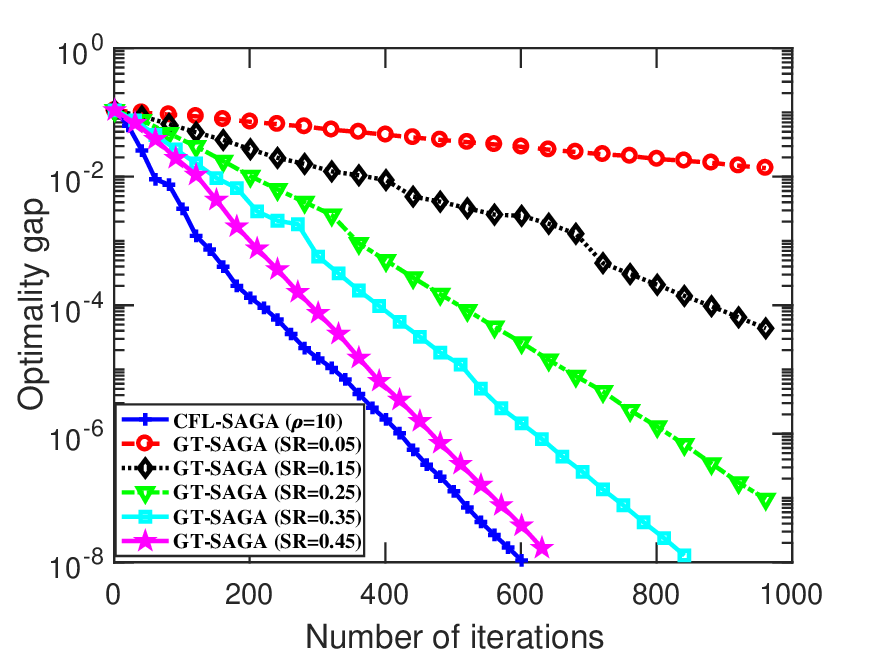}} \hfil
    \subfigure[\scriptsize{Ring graph.}]
    {\includegraphics[width=5cm,height=4cm]{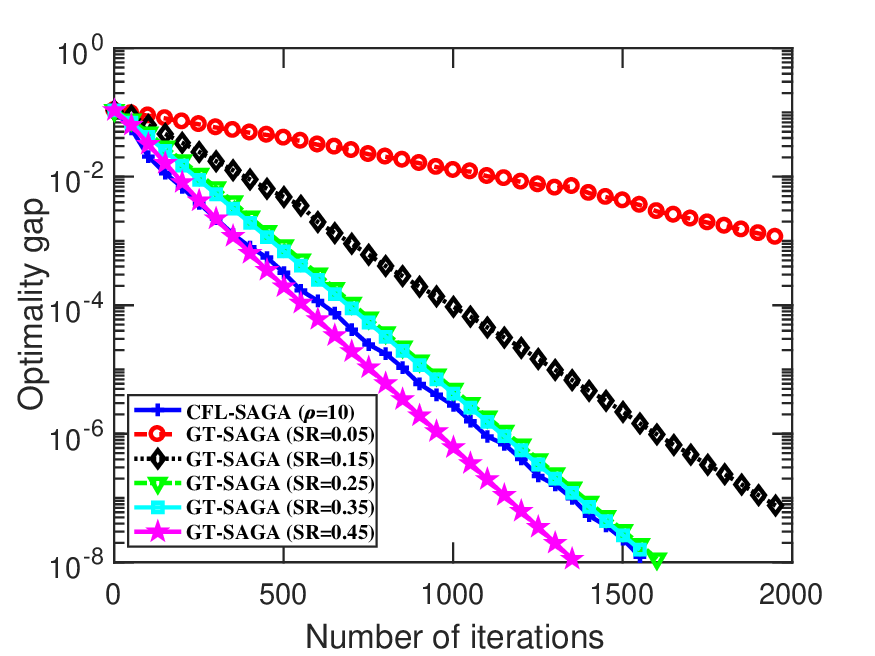}} \hfil
    \subfigure[\scriptsize{Fully connected graph.}]
    {\includegraphics[width=5cm,height=4cm]{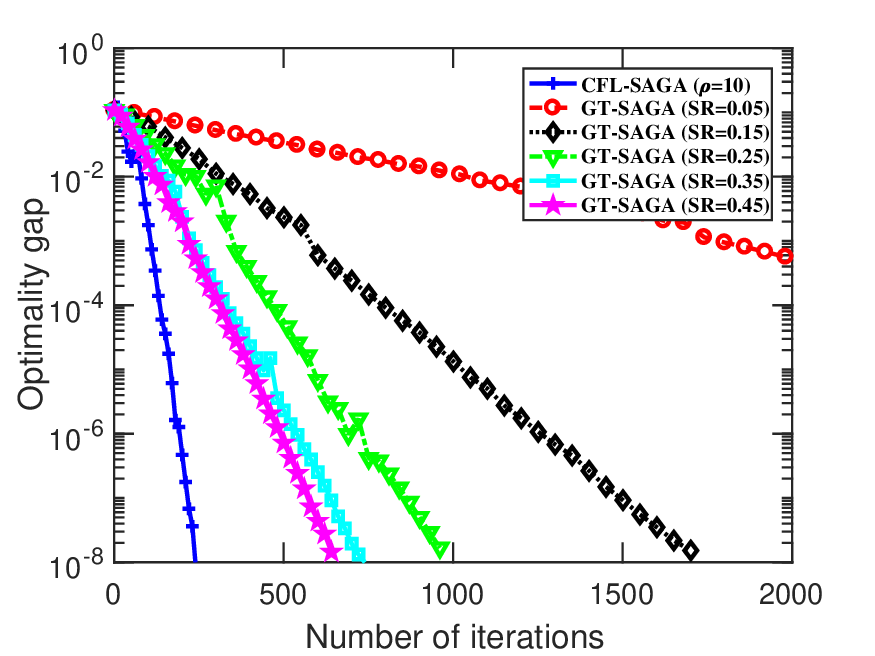}}
    \\
    \subfigure[\scriptsize{Random graph.}]
    {\includegraphics[width=5cm,height=4cm]{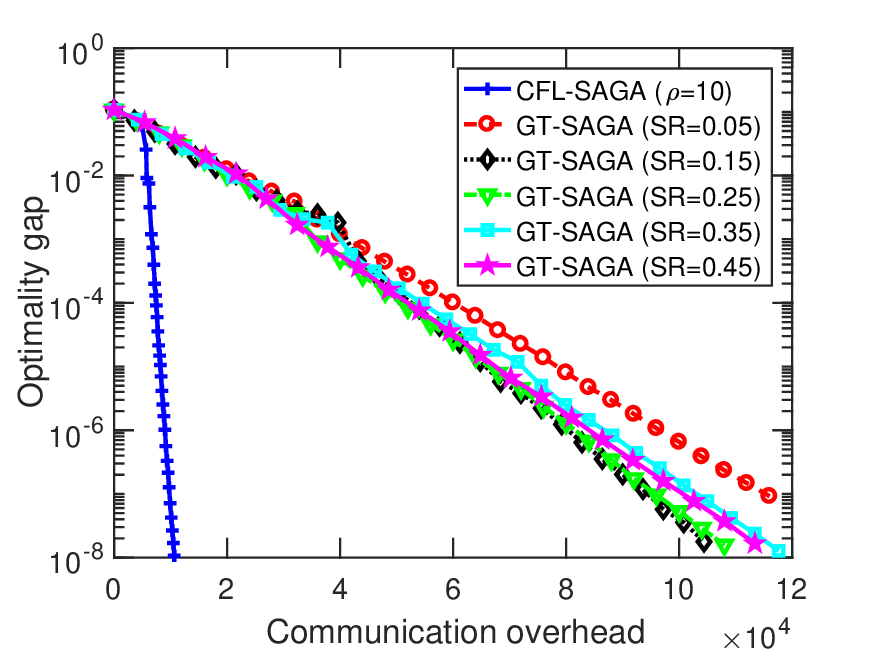}} \hfil
    \subfigure[\scriptsize{Ring graph.}]
    {\includegraphics[width=5cm,height=4cm]{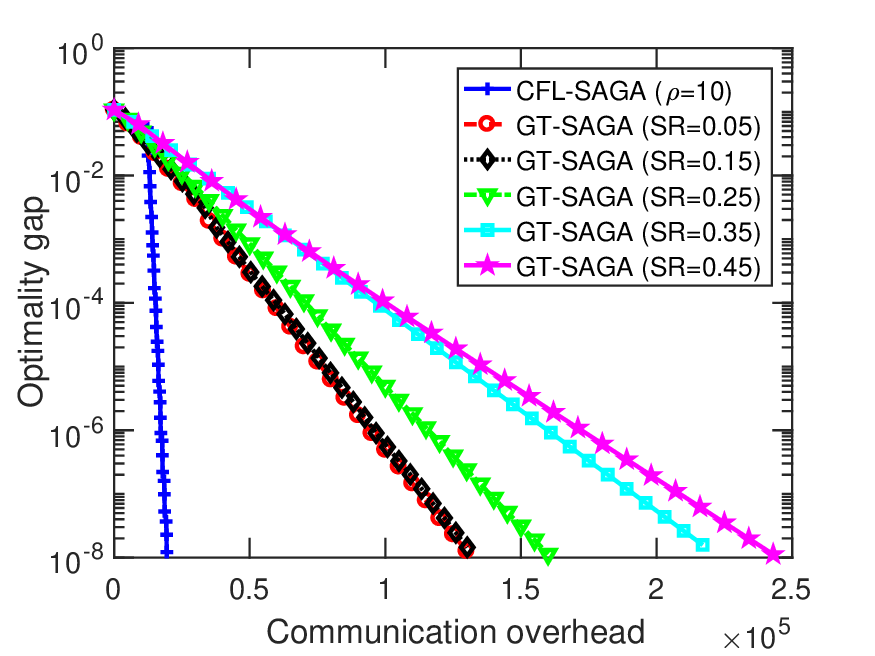}} \hfil
    \subfigure[\scriptsize{Fully connected graph.}]
    {\includegraphics[width=5cm,height=4cm]{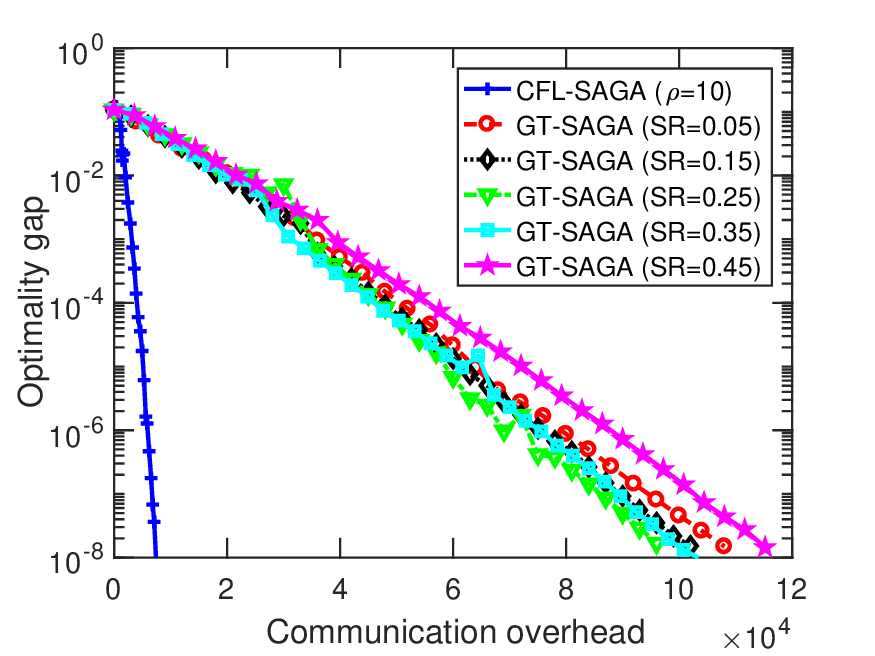}}
    \caption{Comparisons on $\ell_2$-regularized logistic regression.
    First row: Optimality gap vs. number of iterations on different
    server networks; Second row: Optimality gap vs. communication overhead
    on different server networks. SR is short for `Sampling rate'. }
    \label{fig3}
\end{figure*}

Next, we compare the performance of the proposed CFL-SAGA with
that of GT-SAGA, namely, Algorithm \ref{alg-3}. As shown in Fig.
\ref{fig2}, taking $\rho=10$ is sufficient to yield fast
convergence as well as high communication efficiency. We
thus fix $\rho=10$ for CFL-SAGA. Fig. \ref{fig3} $(a)$, $(b)$
and $(c)$ plot the optimality gap of respective algorithms vs. the
number of iterations for different networks. In Fig. \ref{fig3}
$(a)$, SR is an abbreviation for `sampling rate'. For instance,
$SR=0.15$ corresponds to the case that $0.15\times 20=3$ users are
selected by each server in each iteration. To make a full
comparison, the sampling rate of GT-SAGA is tuned from $0.05$ to
$0.45$. Clearly, the convergence speed of GT-SAGA becomes faster
as the sampling rate increases. It is observed that for the random
graph and the fully connected graph, the proposed CFL-SAGA is
faster than GT-SAGA that uses a sampling rate as large as $0.45$.
As for the communication overhead shown in Fig. \ref{fig3} $(d)$,
$(e)$ and $(f)$, we can see that the proposed CFL-SAGA exhibits
higher communication efficiency than GT-SAGA by orders of magnitude.
This advantage of CFL-SAGA is mainly due to the fact that the
proposed algorithm has the ability of uploading those most
informative gradients via the conditionally triggered user
selection mechanism, thus reducing the number of uploads substantially
without sacrificing the convergence speed of the proposed
algorithm. In fact, the averaged number of user uploads
per-iteration for our proposed algorithm is even smaller than that
of GT-SAGA with a sampling rate of $SR=0.05$.

To examine the computational complexity, we plot in Fig.
\ref{fig4} the average runtime of respective algorithms vs. the
number of iterations. We can see that GT-SAGA has a lower
per-iteration computational complexity than the CFL-SAGA. This is
because for CFL-SAGA, at each iteration each user is required to
compute its local stochastic gradient, whether or not this local
gradient is uploaded. As a comparison, the GT-SAGA only
requires those selected users to compute its stochastic gradient.
Note that the computational cost caused by the proposed CTUS
mechanism is usually negligible since the CTUS only involves very
simple calculations at each user with a complexity scaling
linearly with the dimension of the model variable
$\boldsymbol{x}$.

\begin{figure}
  \centering
  \includegraphics[width=5cm,height=4cm]{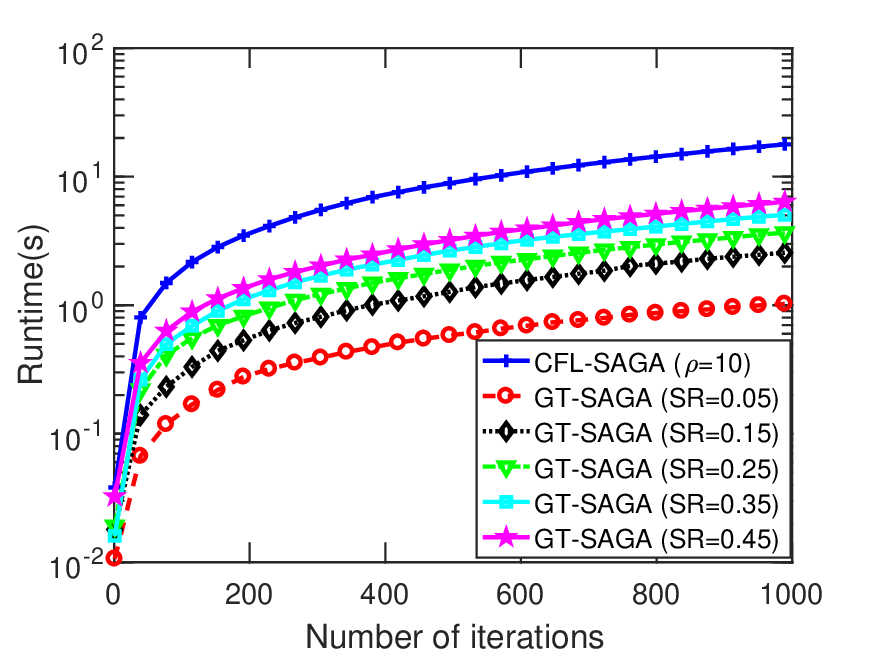}
  \caption{Runtime vs. number of iterations on the random graph}
  \label{fig4}
\end{figure}

\begin{figure}
  \centering
  \includegraphics[width=5cm,height=4cm]{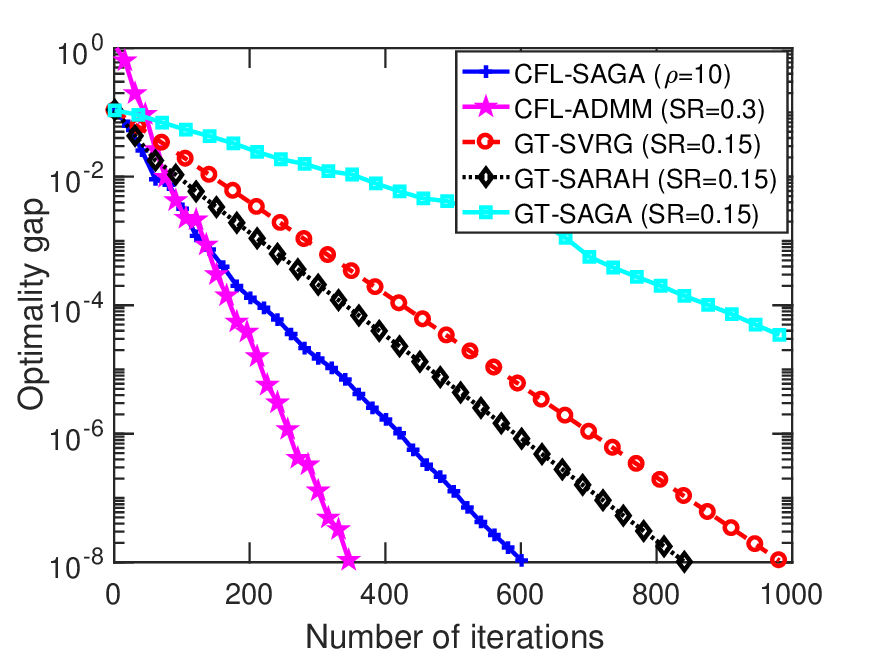}
  \caption{Optimality gap vs. number of iterations on the random graph}
  \label{fig5}
\end{figure}

\begin{figure}
  \centering
  \includegraphics[width=5cm,height=4cm]{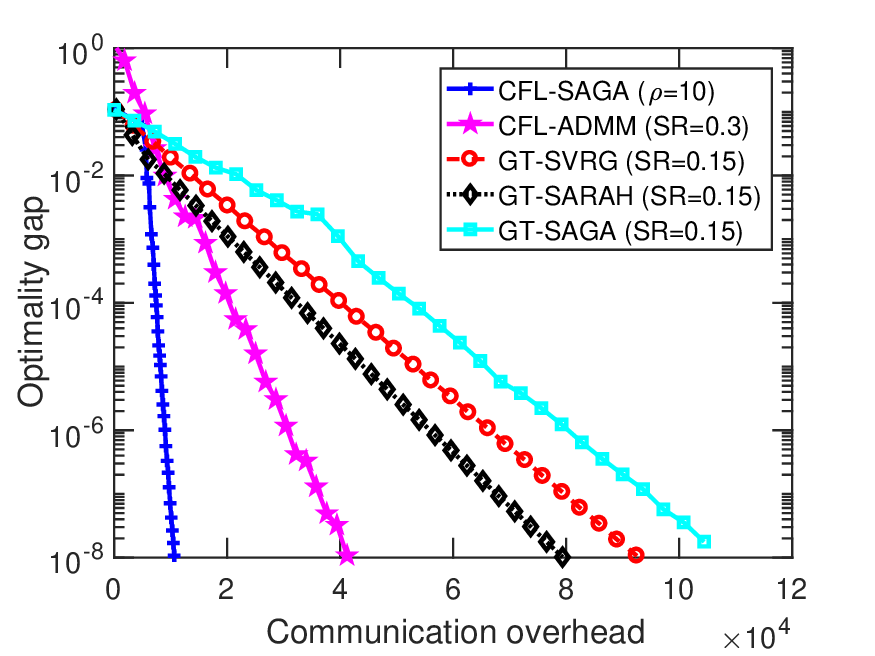}
  \caption{Optimality gap vs. communication overhead on the random graph}
  \label{fig6}
\end{figure}

At last, we compare the proposed CFL-SAGA with some
other state-of-the-art methods, namely, CFL-ADMM \cite{WangFang23},
GT-SVRG \cite{XinKhan20} and GT-SARAH \cite{XinKhan22}. Note that
CFL-ADMM randomly selects users to upload their local variables to
their respective servers. The user sampling rate of CFL-ADMM is
chosen as $0.3$ in our experiments, and $0.15$ for
GT-SVRG and GT-SARAG. Recall that both GT-SVRG and GT-SARAH are
double-loop based methods. Take GT-SVRG as an example, this
algorithms has an outer loop that aims to periodically update the
anchor gradient vector, a step very similar to the sum of
$\boldsymbol{g}_{ij}^{k+1}$. While the inner iteration is very
similar to each iteration of GT-SAGA. For this reason, both
GT-SVRG and GT-SARAH can be adapted to the CFL problem in a way
similar to GT-SAGA, except that they need to perform a full user
upload at the beginning of each outer loop. The parameters for
each algorithm is tuned to achieve the best communication
efficiency performance. Fig. \ref{fig5} and Fig. \ref{fig6}
respectively plot the optimality gap and the communication
overhead vs. the number of iterations on the random network.
Although the other algorithms achieve either similar or even faster
convergence speed compared to CFL-SAGA, the proposed CFL-SAGA
algorithm exhibits a significant advantage in terms of
communication efficiency. This is because the number of
per-iteration user uploads of CFL-SAGA is much smaller than those
in other algorithms.

\section{Conclusion}
\label{sec-9} In this paper, we proposed a SAGA-based
method for confederated learning. The proposed method employs
conditionally-triggered user selection (CTUS) to achieve
communication-efficient learning of the model vector. The major
innovation of the proposed method is the use of the CTUS
mechanism, which determines whether the user should upload its
local VR-SG by measuring its contribution relative to the progress
of the algorithm. Thanks to the CTUS mechanism, the proposed
algorithm only requires a very small number of uploads to maintain
fast convergence. Theoretical analysis indicates that the
proposed algorithm enjoys a fast linear convergence and
numerical results demonstrate the superior communication
efficiency of the proposed algorithm over GT-SAGA.

\appendices

\section{Preliminary Results}
First we list some inequalities that will be frequently
used in our analysis.
\begin{align}
&\textstyle\|\boldsymbol{x}+\boldsymbol{y}\|_2^2
\leq(1+p^{-1})\|\boldsymbol{x}\|_2^2+(1+p)\|\boldsymbol{y}\|_2^2,
\ \forall p>0,
\label{def-triangle-inequal}\\
&\textstyle\|\sum_{i=1}^N\boldsymbol{x}_i\|_2^2\leq N
\sum_{i=1}^N\|\boldsymbol{x}_i\|_2^2.
\label{def-multi-vect-inequal}\\
&\text{Variance decomposition}:
\nonumber\\
&\mathbb{E}\{\|\boldsymbol{x}-
\boldsymbol{y}\|_2^2\}=\mathbb{E}\{\|\boldsymbol{x}-
\mathbb{E}\{\boldsymbol{x}\}\|_2^2\}+\|\mathbb{E}\{\boldsymbol{x}\}-
\boldsymbol{y}\|_2^2, \ \forall \boldsymbol{y},
\label{def-VD}\\
&\textstyle\mathbb{E}\{\|\frac{1}{N}\sum_{i=1}^N
(x_i-\mathbb{E}\{x_i\})\|_2^2\}
=\frac{1}{N^2}\mathbb{E}\{\|x_i-\mathbb{E}\{x_i\}\|_2^2\},
\nonumber\\
&\text{with} \ \{x_i\}_{i=1}^{N} \ \text{independent of each other}.
\label{def-expectation-equal}
\end{align}

Next we present several intermediate results.
\newtheorem{lemma}{Lemma}
\begin{lemma}[Lemma 10 in \cite{QuLi17}]
\label{lemma1}
Suppose $f$ is $\mu$-strongly convex with its gradient
being $L$-Lipschitz continuous. Then for $\forall \boldsymbol{z}
\in\mathbb{R}^d$, it holds
\begin{align}
\textstyle\|\boldsymbol{z}-\alpha\nabla f(\boldsymbol{z})-\boldsymbol{z}^*\|_2
\leq (1-\mu\alpha)\|\boldsymbol{z}-\boldsymbol{z}^*\|_2
\end{align}
where $\boldsymbol{z}^*$ is the minimizer of $f$ and $\alpha\leq
\frac{1}{L}$ is a constant.
\end{lemma}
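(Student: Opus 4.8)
The plan is to show that the gradient-descent map $T(\boldsymbol{z})\triangleq\boldsymbol{z}-\alpha\nabla f(\boldsymbol{z})$ contracts toward its fixed point with factor $1-\mu\alpha$. Since $\boldsymbol{z}^*$ minimizes $f$, we have $\nabla f(\boldsymbol{z}^*)=\boldsymbol{0}$, hence $T(\boldsymbol{z}^*)=\boldsymbol{z}^*$, and the claim is exactly $\|T(\boldsymbol{z})-T(\boldsymbol{z}^*)\|_2\le(1-\mu\alpha)\|\boldsymbol{z}-\boldsymbol{z}^*\|_2$. I would first note that $1-\mu\alpha\ge 0$ because $\alpha\le 1/L\le 1/\mu$, so it suffices to bound $\|T(\boldsymbol{z})-\boldsymbol{z}^*\|_2^2$ by $(1-\mu\alpha)^2\|\boldsymbol{z}-\boldsymbol{z}^*\|_2^2$ and then take square roots.

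The key device is to peel off the quadratic part of $f$. Define $\phi(\boldsymbol{z})\triangleq f(\boldsymbol{z})-\frac{\mu}{2}\|\boldsymbol{z}\|_2^2$. By $\mu$-strong convexity of $f$, the function $\phi$ is convex, and since $\nabla f$ is $L$-Lipschitz, its gradient $\nabla\phi(\boldsymbol{z})=\nabla f(\boldsymbol{z})-\mu\boldsymbol{z}$ is $(L-\mu)$-Lipschitz (if $L=\mu$, then $\nabla\phi$ is constant and the estimate below holds trivially). Substituting $\nabla f(\boldsymbol{z})=\nabla\phi(\boldsymbol{z})+\mu\boldsymbol{z}$ into the map and using $\nabla f(\boldsymbol{z}^*)=\boldsymbol{0}$ yields the decomposition
\begin{align}
T(\boldsymbol{z})-\boldsymbol{z}^*=(1-\mu\alpha)(\boldsymbol{z}-\boldsymbol{z}^*)-\alpha\big(\nabla\phi(\boldsymbol{z})-\nabla\phi(\boldsymbol{z}^*)\big).
\nonumber
\end{align}

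I would then expand the squared norm into three pieces: $(1-\mu\alpha)^2\|\boldsymbol{z}-\boldsymbol{z}^*\|_2^2$, the cross term $-2\alpha(1-\mu\alpha)\langle\nabla\phi(\boldsymbol{z})-\nabla\phi(\boldsymbol{z}^*),\boldsymbol{z}-\boldsymbol{z}^*\rangle$, and $\alpha^2\|\nabla\phi(\boldsymbol{z})-\nabla\phi(\boldsymbol{z}^*)\|_2^2$. The target inequality then reduces to showing that the last two terms sum to something nonpositive. Here I invoke co-coercivity of the gradient of the convex, $(L-\mu)$-smooth function $\phi$, namely $\langle\nabla\phi(\boldsymbol{z})-\nabla\phi(\boldsymbol{z}^*),\boldsymbol{z}-\boldsymbol{z}^*\rangle\ge\frac{1}{L-\mu}\|\nabla\phi(\boldsymbol{z})-\nabla\phi(\boldsymbol{z}^*)\|_2^2$, together with the fact that this inner product is itself nonnegative. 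Writing $u$ for the inner product and $v$ for the squared gradient-difference norm, I would use $v\le(L-\mu)u$ to bound the residual $-2\alpha(1-\mu\alpha)u+\alpha^2 v$ from above by $\alpha\big(\alpha(L-\mu)-2(1-\mu\alpha)\big)u$. The bracket is nonpositive precisely when $\alpha(L+\mu)\le 2$, which is guaranteed by $\alpha\le 1/L\le 2/(L+\mu)$; since $u\ge 0$, the residual is $\le 0$, giving $\|T(\boldsymbol{z})-\boldsymbol{z}^*\|_2^2\le(1-\mu\alpha)^2\|\boldsymbol{z}-\boldsymbol{z}^*\|_2^2$.

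I expect the main obstacle to be obtaining the \emph{exact} factor $1-\mu\alpha$ rather than a looser one. The naive route --- lower-bounding $\langle\nabla f(\boldsymbol{z}),\boldsymbol{z}-\boldsymbol{z}^*\rangle\ge\mu\|\boldsymbol{z}-\boldsymbol{z}^*\|_2^2$ and upper-bounding $\|\nabla f(\boldsymbol{z})\|_2\le L\|\boldsymbol{z}-\boldsymbol{z}^*\|_2$ separately --- only delivers the weaker factor $\sqrt{1-2\mu\alpha+\mu L\alpha^2}$, which strictly exceeds $1-\mu\alpha$ whenever $L>\mu$ because it over-counts the gradient-norm term. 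The decomposition $f=\phi+\frac{\mu}{2}\|\cdot\|_2^2$ is precisely what resolves this: it isolates a plain convex smooth part whose co-coercivity cancels the offending term exactly. A cleaner alternative, available only when $f\in C^2$, is to write $T(\boldsymbol{z})-\boldsymbol{z}^*=\big(\int_0^1(\boldsymbol{I}-\alpha\nabla^2 f)\,dt\big)(\boldsymbol{z}-\boldsymbol{z}^*)$ and bound the operator norm of each $\boldsymbol{I}-\alpha\nabla^2 f$ by $1-\mu\alpha$ using the eigenvalue range $[\mu,L]$; I would nonetheless favour the decomposition argument, since it avoids the twice-differentiability assumption that the stated hypotheses do not supply.
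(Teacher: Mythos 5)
Your proof is correct, but note that the paper itself never proves this lemma --- it is quoted directly as Lemma 10 of \cite{QuLi17}, so the only ``paper proof'' here is a citation. Your argument is a legitimate self-contained derivation, and it is essentially the classical route behind the cited result: the textbook proof expands $\|\boldsymbol{z}-\alpha\nabla f(\boldsymbol{z})-\boldsymbol{z}^*\|_2^2$ and invokes Nesterov's inequality $\langle\nabla f(\boldsymbol{z})-\nabla f(\boldsymbol{z}^*),\boldsymbol{z}-\boldsymbol{z}^*\rangle\geq\frac{\mu L}{\mu+L}\|\boldsymbol{z}-\boldsymbol{z}^*\|_2^2+\frac{1}{\mu+L}\|\nabla f(\boldsymbol{z})-\nabla f(\boldsymbol{z}^*)\|_2^2$, then uses $\|\nabla f(\boldsymbol{z})-\nabla f(\boldsymbol{z}^*)\|_2\geq\mu\|\boldsymbol{z}-\boldsymbol{z}^*\|_2$ on the leftover nonpositive term to land exactly on $(1-\mu\alpha)^2$. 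Your splitting $f=\phi+\frac{\mu}{2}\|\cdot\|_2^2$ with Baillon--Haddad co-coercivity applied to $\phi$ is precisely how that Nesterov inequality is itself proved, so the two routes are the same idea packaged differently; your diagnosis of why the naive bound only yields $\sqrt{1-2\mu\alpha+\mu L\alpha^2}$ is also accurate, and your algebra (the decomposition identity, the bracket $\alpha(L+\mu)-2\leq 0$ under $\alpha\leq 1/L$, and the sign of the cross term) checks out.

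One justification is stated too casually: the claim that $\nabla\phi$ is $(L-\mu)$-Lipschitz does \emph{not} follow from $L$-Lipschitzness of $\nabla f$ alone --- the triangle inequality on $\nabla f-\mu\,\mathrm{id}$ only gives the constant $L+\mu$. What makes it true is the combination of smoothness and strong convexity: $L$-smoothness gives the quadratic upper bound $f(\boldsymbol{y})\leq f(\boldsymbol{x})+\langle\nabla f(\boldsymbol{x}),\boldsymbol{y}-\boldsymbol{x}\rangle+\frac{L}{2}\|\boldsymbol{y}-\boldsymbol{x}\|_2^2$, subtracting $\frac{\mu}{2}\|\cdot\|_2^2$ converts this into the same bound for $\phi$ with constant $L-\mu$, and convexity of $\phi$ (from strong convexity of $f$) then upgrades that bound to Lipschitz continuity and co-coercivity of $\nabla\phi$. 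Since this is a standard fact and holds under the lemma's hypotheses, your proof stands; just be aware that the reason you wrote down is insufficient as stated.
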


\newtheorem{lemma2}{Lemma}
\begin{lemma}[Lemma 2 in \cite{PuShi20}]
\label{lemma2}
Suppose $\boldsymbol{W}\in\mathbb{R}^{N\times N}$ is
primitive and doubly stochastic, then for $\forall \boldsymbol{z}\in
\mathbb{R}^{N}$, we have
\begin{align}
\textstyle\|\boldsymbol{W}\boldsymbol{z}-\boldsymbol{W}_{\infty}
\boldsymbol{z}\|_2
\leq \sigma\|\boldsymbol{z}-\boldsymbol{W}_{\infty}\boldsymbol{z}\|_2,
\end{align}
where $\sigma$ is the second largest singular value of $\boldsymbol{W}$.
\end{lemma}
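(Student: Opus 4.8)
The plan is to exploit that $\boldsymbol{W}_{\infty}=\frac{1}{N}\boldsymbol{1}_N\boldsymbol{1}_N^T$ is the orthogonal projector onto the consensus line $\mathrm{span}\{\boldsymbol{1}_N\}$ and that $\boldsymbol{W}$ acts as the identity there. Since $\boldsymbol{W}$ is doubly stochastic, $\boldsymbol{W}\boldsymbol{1}_N=\boldsymbol{1}_N$ and $\boldsymbol{1}_N^T\boldsymbol{W}=\boldsymbol{1}_N^T$, which immediately give $\boldsymbol{W}\boldsymbol{W}_{\infty}=\boldsymbol{W}_{\infty}$ and $\boldsymbol{W}_{\infty}\boldsymbol{W}_{\infty}=\boldsymbol{W}_{\infty}$; subtracting these yields the key annihilation identity
\begin{align}
(\boldsymbol{W}-\boldsymbol{W}_{\infty})\boldsymbol{W}_{\infty}=\boldsymbol{0}.
\nonumber
\end{align}
In words, the deviation operator $\boldsymbol{W}-\boldsymbol{W}_{\infty}$ annihilates the consensus component of any vector, so only the residual $\boldsymbol{z}-\boldsymbol{W}_{\infty}\boldsymbol{z}$ survives its action.

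With this in hand, the first step is to rewrite the left-hand side entirely in terms of the residual. By linearity $\boldsymbol{W}\boldsymbol{z}-\boldsymbol{W}_{\infty}\boldsymbol{z}=(\boldsymbol{W}-\boldsymbol{W}_{\infty})\boldsymbol{z}$, and since $(\boldsymbol{W}-\boldsymbol{W}_{\infty})\boldsymbol{W}_{\infty}\boldsymbol{z}=\boldsymbol{0}$ I may subtract the consensus part for free, obtaining
\begin{align}
\boldsymbol{W}\boldsymbol{z}-\boldsymbol{W}_{\infty}\boldsymbol{z}
=(\boldsymbol{W}-\boldsymbol{W}_{\infty})(\boldsymbol{z}-\boldsymbol{W}_{\infty}\boldsymbol{z}).
\nonumber
\end{align}
The second step is to take Euclidean norms and apply submultiplicativity of the spectral norm, giving $\|\boldsymbol{W}\boldsymbol{z}-\boldsymbol{W}_{\infty}\boldsymbol{z}\|_2\leq\|\boldsymbol{W}-\boldsymbol{W}_{\infty}\|_2\,\|\boldsymbol{z}-\boldsymbol{W}_{\infty}\boldsymbol{z}\|_2$. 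The final step is to identify $\|\boldsymbol{W}-\boldsymbol{W}_{\infty}\|_2$ with $\sigma$, which is exactly the relation $\sigma=\|\boldsymbol{W}-\frac{1}{N}\boldsymbol{1}_N\boldsymbol{1}_N^T\|_2$ already recorded in Section \ref{sec-2}; substituting then finishes the argument.

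The only genuinely nontrivial point is this last identification, and it is where I would be careful. For the symmetric $\boldsymbol{W}$ used here one writes the spectral decomposition $\boldsymbol{W}=\sum_{i=1}^N\lambda_i\boldsymbol{v}_i\boldsymbol{v}_i^T$ with $\lambda_1=1$ and $\boldsymbol{v}_1=\frac{1}{\sqrt{N}}\boldsymbol{1}_N$, so that $\boldsymbol{W}_{\infty}=\boldsymbol{v}_1\boldsymbol{v}_1^T$ and $\boldsymbol{W}-\boldsymbol{W}_{\infty}=\sum_{i\geq 2}\lambda_i\boldsymbol{v}_i\boldsymbol{v}_i^T$, whose spectral norm is $\max_{i\geq 2}|\lambda_i|=\sigma$. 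The subtle part is ensuring that subtracting the rank-one matrix $\boldsymbol{W}_{\infty}$ removes precisely the leading singular component and nothing else: primitivity forces the eigenvalue $1$ to be simple, which isolates the consensus direction and guarantees that the residual is governed by the strictly smaller $\sigma<1$. Without symmetry the same conclusion follows from the SVD once one notes that $\frac{1}{\sqrt{N}}\boldsymbol{1}_N$ is simultaneously the leading left and right singular vector. Because Section \ref{sec-2} already supplies the identity $\sigma=\|\boldsymbol{W}-\boldsymbol{W}_{\infty}\|_2$, in practice this step can simply be quoted and the whole argument collapses to the two short manipulations above.
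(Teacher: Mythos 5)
The paper never proves Lemma \ref{lemma2}: it is imported as Lemma 2 of \cite{PuShi20} and merely restated among the preliminary results, so your argument is being compared against a citation rather than an internal proof. That said, your proof is correct and self-contained, and it is the natural one. Double stochasticity gives $\boldsymbol{W}\boldsymbol{W}_{\infty}=\boldsymbol{W}_{\infty}=\boldsymbol{W}_{\infty}\boldsymbol{W}_{\infty}$, hence $(\boldsymbol{W}-\boldsymbol{W}_{\infty})\boldsymbol{W}_{\infty}=\boldsymbol{0}$; the factorization $\boldsymbol{W}\boldsymbol{z}-\boldsymbol{W}_{\infty}\boldsymbol{z}=(\boldsymbol{W}-\boldsymbol{W}_{\infty})(\boldsymbol{z}-\boldsymbol{W}_{\infty}\boldsymbol{z})$ and submultiplicativity of the spectral norm then reduce everything to the identity $\|\boldsymbol{W}-\boldsymbol{W}_{\infty}\|_2=\sigma$, which Section \ref{sec-2} records for the symmetric mixing matrices this paper uses and which your spectral decomposition proves in full for that case. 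The one spot to tighten is your non-symmetric aside: for a general primitive doubly stochastic $\boldsymbol{W}$, ``subtracting the leading singular component'' is not automatic, because primitivity makes the \emph{eigenvalue} $1$ simple but says nothing about the multiplicity of the \emph{singular value} $1$, and when that multiplicity exceeds one the leading singular vector pair is not unique. The clean general route is to compute $(\boldsymbol{W}-\boldsymbol{W}_{\infty})^T(\boldsymbol{W}-\boldsymbol{W}_{\infty})=\boldsymbol{W}^T\boldsymbol{W}-\boldsymbol{W}_{\infty}$ (again pure double stochasticity) and observe that the symmetric matrix $\boldsymbol{W}^T\boldsymbol{W}$ leaves $\mathrm{span}\{\boldsymbol{1}_N\}^{\perp}$ invariant, so the largest eigenvalue of $\boldsymbol{W}^T\boldsymbol{W}-\boldsymbol{W}_{\infty}$ equals the largest eigenvalue of $\boldsymbol{W}^T\boldsymbol{W}$ restricted to that subspace; since every singular value of a doubly stochastic matrix is at most $1$, this is exactly $\sigma^2$, giving $\|\boldsymbol{W}-\boldsymbol{W}_{\infty}\|_2=\sigma$ with no appeal to simplicity of the leading singular value. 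In the symmetric setting the paper actually works in, this is a refinement of your argument, not a gap.
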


\newtheorem{lemma3}{Lemma}
\begin{lemma}[Lemma 1 in \cite{XinKhan18}]
\label{lemma3}
Let the assumptions made in Section \ref{sec-obj-assumption} hold.
Then with $f(\bar{\boldsymbol{x}})=\frac{1}{N}\sum_{i=1}^N
f_i(\bar{\boldsymbol{x}})$ we have
\begin{align}
\textstyle\|\bar{\nabla} f(\boldsymbol{x})-\nabla f(\bar{\boldsymbol{x}})\|_2
\leq \frac{L}{\sqrt{N}}\|\boldsymbol{x}-
\bar{\boldsymbol{W}}_{\infty}\boldsymbol{x}\|_2.
\end{align}
\end{lemma}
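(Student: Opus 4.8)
The plan is to reduce the claim to a termwise application of Lipschitz continuity followed by a single Cauchy--Schwarz step that produces the $N^{-1/2}$ factor. First I would unpack the two quantities on the left-hand side. Since the objective is $f(\boldsymbol{x})=\frac{1}{N}\sum_{i=1}^N f_i(\boldsymbol{x})$, evaluating its gradient at the average gives $\nabla f(\bar{\boldsymbol{x}})=\frac{1}{N}\sum_{i=1}^N \nabla f_i(\bar{\boldsymbol{x}})$, whereas by definition $\bar{\nabla} f(\boldsymbol{x})=\frac{1}{N}\sum_{i=1}^N \nabla f_i(\boldsymbol{x}_i)$ is the average of the local gradients evaluated at the local iterates. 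Subtracting termwise yields
\begin{align}
\bar{\nabla} f(\boldsymbol{x})-\nabla f(\bar{\boldsymbol{x}})
=\frac{1}{N}\sum_{i=1}^N\big(\nabla f_i(\boldsymbol{x}_i)-\nabla f_i(\bar{\boldsymbol{x}})\big),
\nonumber
\end{align}
so that the entire task becomes bounding the norm of an average of local-gradient discrepancies.

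Next I would apply the triangle inequality to move the norm inside the sum, then invoke the $L$-Lipschitz continuity of each $\nabla f_i$ (Definition \ref{assumpt-lip}) on every summand, obtaining
\begin{align}
\big\|\bar{\nabla} f(\boldsymbol{x})-\nabla f(\bar{\boldsymbol{x}})\big\|_2
\leq\frac{1}{N}\sum_{i=1}^N\big\|\nabla f_i(\boldsymbol{x}_i)-\nabla f_i(\bar{\boldsymbol{x}})\big\|_2
\leq\frac{L}{N}\sum_{i=1}^N\|\boldsymbol{x}_i-\bar{\boldsymbol{x}}\|_2.
\nonumber
\end{align}
The final step is the one that yields the stated constant: I would apply Cauchy--Schwarz to the sum of scalar norms, $\sum_{i=1}^N\|\boldsymbol{x}_i-\bar{\boldsymbol{x}}\|_2\leq\sqrt{N}\big(\sum_{i=1}^N\|\boldsymbol{x}_i-\bar{\boldsymbol{x}}\|_2^2\big)^{1/2}$, and then recognize that the bracketed quantity is exactly $\|\boldsymbol{x}-\bar{\boldsymbol{W}}_{\infty}\boldsymbol{x}\|_2^2$. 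This identity holds because $\bar{\boldsymbol{W}}_{\infty}=\boldsymbol{W}_{\infty}\otimes\boldsymbol{I}_d$ with $\boldsymbol{W}_{\infty}=\frac{1}{N}\boldsymbol{1}_N\boldsymbol{1}_N^T$, so $\bar{\boldsymbol{W}}_{\infty}\boldsymbol{x}$ is the vertical stack of $N$ copies of $\bar{\boldsymbol{x}}$. Combining the $\frac{1}{N}$ with the $\sqrt{N}$ collapses to $\frac{L}{\sqrt{N}}$, giving precisely the claimed bound.

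There is no serious obstacle here, as this is a standard consensus-to-gradient estimate from the gradient-tracking literature. The only point requiring care is the bookkeeping of the two $N$-factors, namely the averaging $\frac{1}{N}$ and the $\sqrt{N}$ emerging from Cauchy--Schwarz, since mismatching them is the easiest route to a wrong power of $N$. It is also worth confirming at the outset that $\bar{\nabla} f(\boldsymbol{x})$ denotes the \emph{average} of the local gradients $\nabla f_i(\boldsymbol{x}_i)$ rather than their stack, as this is what makes the termwise subtraction against $\nabla f(\bar{\boldsymbol{x}})$ align cleanly and the Lipschitz bound applicable coordinate-block by coordinate-block.
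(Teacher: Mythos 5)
Your proof is correct: the termwise decomposition of $\bar{\nabla} f(\boldsymbol{x})-\nabla f(\bar{\boldsymbol{x}})$, the $L$-Lipschitz bound on each $\nabla f_i$ (which holds under the assumptions of Section \ref{sec-obj-assumption}), and the Cauchy--Schwarz step turning $\frac{L}{N}\sum_{i=1}^N\|\boldsymbol{x}_i-\bar{\boldsymbol{x}}\|_2$ into $\frac{L}{\sqrt{N}}\|\boldsymbol{x}-\bar{\boldsymbol{W}}_{\infty}\boldsymbol{x}\|_2$ are all valid, including the identification of $\bar{\boldsymbol{W}}_{\infty}\boldsymbol{x}$ as the stack of $N$ copies of $\bar{\boldsymbol{x}}$. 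The paper gives no proof of its own, importing the result as Lemma 1 of \cite{XinKhan18}, and your argument is exactly the standard proof of that lemma, so the two approaches coincide.
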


\newtheorem{lemma4}{Lemma}
\begin{lemma}[Corollary 8.1.29 in \cite{HornJohnson12}]
\label{lemma6}
Suppose $\boldsymbol{T}\in\mathbb{R}^{n\times n}$ is a nonnegative
matrix and $\gamma>0$ is a constant. If there exists a positive
vector $\boldsymbol{\psi}\in\mathbb{R}^{n}$ such that
$\boldsymbol{T}\boldsymbol{\psi}\leq \gamma \boldsymbol{\psi}$,
then $\rho(\boldsymbol{T})\leq \gamma$, where
$\rho(\boldsymbol{T})\triangleq\max\{|\lambda_i|\}$ is the spectral
radius of $\boldsymbol{T}$.
\end{lemma}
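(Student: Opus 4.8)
The plan is to invoke the Perron--Frobenius theory of nonnegative matrices. Since $\boldsymbol{T}\in\mathbb{R}^{n\times n}$ is nonnegative, its spectral radius $\rho(\boldsymbol{T})$ is itself an eigenvalue (the Perron root), and the same theory applied to the transpose $\boldsymbol{T}^T$ (which shares the spectral radius, as $\boldsymbol{T}$ and $\boldsymbol{T}^T$ have identical eigenvalues) furnishes a nonnegative \emph{left} eigenvector, i.e.\ a vector $\boldsymbol{u}\geq\boldsymbol{0}$ with $\boldsymbol{u}\neq\boldsymbol{0}$ satisfying $\boldsymbol{u}^T\boldsymbol{T}=\rho(\boldsymbol{T})\,\boldsymbol{u}^T$. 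The key idea is to pair this left eigenvector against the given strictly positive vector $\boldsymbol{\psi}$, which converts the entrywise hypothesis $\boldsymbol{T}\boldsymbol{\psi}\leq\gamma\boldsymbol{\psi}$ into a single scalar comparison between $\rho(\boldsymbol{T})$ and $\gamma$.

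Concretely, I would left-multiply the hypothesis $\boldsymbol{T}\boldsymbol{\psi}\leq\gamma\boldsymbol{\psi}$ by $\boldsymbol{u}^T$. Because $\boldsymbol{u}\geq\boldsymbol{0}$, forming this nonnegative combination preserves the inequality, so that
\begin{align}
\rho(\boldsymbol{T})\,(\boldsymbol{u}^T\boldsymbol{\psi})
&=(\boldsymbol{u}^T\boldsymbol{T})\boldsymbol{\psi}
=\boldsymbol{u}^T(\boldsymbol{T}\boldsymbol{\psi})\nonumber\\
&\leq\boldsymbol{u}^T(\gamma\boldsymbol{\psi})
=\gamma\,(\boldsymbol{u}^T\boldsymbol{\psi}).\nonumber
\end{align}
It then remains only to cancel the scalar $\boldsymbol{u}^T\boldsymbol{\psi}$. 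Since $\boldsymbol{\psi}>\boldsymbol{0}$ strictly while $\boldsymbol{u}\geq\boldsymbol{0}$ is nonzero, we have $\boldsymbol{u}^T\boldsymbol{\psi}=\sum_{i=1}^{n}u_i[\boldsymbol{\psi}]_i>0$, because at least one $u_i>0$ multiplies a strictly positive $[\boldsymbol{\psi}]_i$. Dividing through by this positive quantity yields $\rho(\boldsymbol{T})\leq\gamma$, as claimed.

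The only nontrivial ingredient, and hence the main obstacle, is the existence of a nonnegative left eigenvector attached to $\rho(\boldsymbol{T})$; this is exactly the content of the Perron--Frobenius theorem for (possibly reducible) nonnegative matrices, and it is precisely where the nonnegativity of $\boldsymbol{T}$ is used. If one prefers a self-contained route that avoids quoting that structural result, I would instead iterate the hypothesis: nonnegativity of $\boldsymbol{T}$ gives $\boldsymbol{T}^k\boldsymbol{\psi}\leq\gamma^k\boldsymbol{\psi}$ by induction, whence, bounding $\boldsymbol{\psi}$ above and below by its extreme entries $\psi_{\max}$ and $\psi_{\min}$, one obtains $\|\boldsymbol{T}^k\|_\infty\leq(\psi_{\max}/\psi_{\min})\,\gamma^k$; applying Gelfand's formula $\rho(\boldsymbol{T})=\lim_{k\to\infty}\|\boldsymbol{T}^k\|_\infty^{1/k}$ and letting $k\to\infty$ (so that $(\psi_{\max}/\psi_{\min})^{1/k}\to1$) again delivers $\rho(\boldsymbol{T})\leq\gamma$. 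Either route is short, with all the substance residing in the nonnegativity-based spectral facts rather than in any computation.
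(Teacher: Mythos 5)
Your proposal is correct, but note that the paper itself offers no proof of this lemma at all: it is quoted verbatim as Corollary 8.1.29 of Horn and Johnson \cite{HornJohnson12}, so the only meaningful comparison is with the textbook's argument, which is different from (and shorter than) both of your routes. Horn and Johnson's proof is essentially a diagonal-scaling computation: setting $\boldsymbol{D}=\operatorname{diag}([\boldsymbol{\psi}]_1,\dots,[\boldsymbol{\psi}]_n)$ (invertible since $\boldsymbol{\psi}>\boldsymbol{0}$), the hypothesis $\boldsymbol{T}\boldsymbol{\psi}\leq\gamma\boldsymbol{\psi}$ says exactly that every row sum of the nonnegative matrix $\boldsymbol{D}^{-1}\boldsymbol{T}\boldsymbol{D}$ is at most $\gamma$, whence $\rho(\boldsymbol{T})=\rho(\boldsymbol{D}^{-1}\boldsymbol{T}\boldsymbol{D})\leq\|\boldsymbol{D}^{-1}\boldsymbol{T}\boldsymbol{D}\|_\infty\leq\gamma$, using only similarity invariance of the spectrum and the elementary bound $\rho(\cdot)\leq\|\cdot\|_\infty$. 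Your first route is sound — the existence of a nonnegative left eigenvector for $\rho(\boldsymbol{T})$ of a possibly reducible nonnegative matrix is indeed the one structural fact you need (Horn and Johnson Theorem 8.3.1 applied to $\boldsymbol{T}^T$), the pairing $\rho(\boldsymbol{T})\,\boldsymbol{u}^T\boldsymbol{\psi}\leq\gamma\,\boldsymbol{u}^T\boldsymbol{\psi}$ is valid because $\boldsymbol{u}\geq\boldsymbol{0}$ preserves entrywise inequalities, and your verification that $\boldsymbol{u}^T\boldsymbol{\psi}>0$ (strict positivity of $\boldsymbol{\psi}$ against a nonzero nonnegative $\boldsymbol{u}$) is exactly the point where strict positivity of $\boldsymbol{\psi}$ is indispensable. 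Your second, Gelfand-formula route is also correct and has the merit of being self-contained modulo $\rho(\boldsymbol{T})=\lim_k\|\boldsymbol{T}^k\|_\infty^{1/k}$, trading the Perron--Frobenius structure theorem for an asymptotic argument; what the textbook's diagonal-similarity proof buys over both is that it is a finite, one-line computation requiring neither eigenvector existence nor limits.
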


At last, we mention that Step $1$ and $5$ in Algorithm \ref{alg-4}
can be compactly written as
\begin{align}
&\textstyle\boldsymbol{x}^{k+1}=\bar{\boldsymbol{W}}\boldsymbol{x}^{k}-
\alpha\boldsymbol{y}^k,
\nonumber\\
&\textstyle\boldsymbol{y}^{k+1}=\bar{\boldsymbol{W}}\boldsymbol{y}^{k}
+\boldsymbol{g}^{k+1}-\boldsymbol{g}^k,
\label{PropAlg-compact-form}
\end{align}
where $\boldsymbol{g}^k\triangleq[\boldsymbol{g}_1^k;\cdots;
\boldsymbol{g}_N^k]$.

The left hand side of (\ref{theorem2-content-3}) is the average
of $\mathbb{E}_{t_{ij}^{k+1},t_{ij}^{k}}
\{\|\boldsymbol{\Delta}_{ij}^{k+1}\|_2^2\}$s, while $\frac{1}{N}
\sum_{i=1}^N\|\sum_{i'=1}^{N}w_{ii'}\boldsymbol{x}_{i'}^{k+1}-
\boldsymbol{x}_i^{k+1}\|_2$ in the right hand side of
(\ref{theorem2-content-3}) is the average of
$\|\sum_{i'=1}^{N}w_{ii'}\boldsymbol{x}_{i'}^{k+1}-
\boldsymbol{x}_i^{k+1}\|_2^2$s. The inequality (\ref{theorem2-content-3})
indicates that
\begin{align}
\textstyle \mathbb{E}_{t_{ij}^{k+1},t_{ij}^{k}}
\{\|\boldsymbol{\Delta}_{ij}^{k+1}\|_2^2\}\lesssim
C_3\|\sum_{i'=1}^{N}w_{ii'}\boldsymbol{x}_{i'}^{k+1}-
\boldsymbol{x}_i^{k+1}\|_2^2
\end{align}
holds in the average sense. As such, if the value of $\rho$ is
set to be sufficiently large, then (\ref{main-alg-4}) will not
be triggered for most of the users. Consequently, the number
of uploads can be significantly reduced.

\section{Proof of Proposition \ref{Proposition-1}}
\label{appendix-proposi}
Without loss of generality, we assume that $t_{ij}^{k+1}\neq
t_{ij}^{k}$. Recall that $\boldsymbol{\Delta}_{ij}^{k+1}$ in
(\ref{discussion-2-2}) is defined as $\boldsymbol{\Delta}_{ij}^{k+1}
=\boldsymbol{g}_{ij}^{k+1}-\boldsymbol{g}_{ij}^k$. Combining this
with the definition of $\boldsymbol{g}_{ij}^{k+1}$ we have
\begin{align}
&\boldsymbol{\Delta}_{ij}^{k+1}
=S_{ij}\cdot\Big(\nabla f_{ij,t_{ij}^{k+1}}
(\boldsymbol{x}_{i}^{k+1})-\nabla f_{ij,t_{ij}^{k+1}}
(\boldsymbol{\phi}_{ij,t_{ij}^{k+1}}^{k})\Big)
\nonumber\\
&+\nabla f_{ij,t_{ij}^k}(\boldsymbol{x}_{i}^{k})
-S_{ij}\cdot\Big(\nabla f_{ij,t_{ij}^{k}}
(\boldsymbol{x}_{i}^{k})-\nabla f_{ij,t_{ij}^{k}}
(\boldsymbol{\phi}_{ij,t_{ij}^{k}}^{k})\Big)
\nonumber\\
&-\nabla f_{ij,t_{ij}^{k}}(\boldsymbol{\phi}_{ij,t_{ij}^{k}}^{k})
\nonumber\\
&=S_{ij}\cdot\Big(\nabla f_{ij,t_{ij}^{k+1}}
(\boldsymbol{x}_{i}^{k+1})-\nabla f_{ij,t_{ij}^{k+1}}
(\boldsymbol{\phi}_{ij,t_{ij}^{k+1}}^{k})\Big)
\nonumber\\
&-(S_{ij}-1)\cdot\Big(\nabla f_{ij,t_{ij}^{k}}
(\boldsymbol{x}_{i}^{k})-\nabla f_{ij,t_{ij}^{k}}
(\boldsymbol{\phi}_{ij,t_{ij}^{k}}^{k})\Big)
\nonumber\\
&\overset{(a)}{=}S_{ij}\cdot\Big(\nabla f_{ij,t_{ij}^{k+1}}
(\boldsymbol{\phi}_{ij,t_{ij}^{k+1}}^{k+1})-\nabla f_{ij,t_{ij}^{k+1}}
(\boldsymbol{\phi}_{ij,t_{ij}^{k+1}}^{k})\Big)
\nonumber\\
&-(S_{ij}-1)\cdot\Big(\nabla f_{ij,t_{ij}^{k}}
(\boldsymbol{\phi}_{ij,t_{ij}^{k}}^{k+1})-\nabla f_{ij,t_{ij}^{k}}
(\boldsymbol{\phi}_{ij,t_{ij}^{k}}^{k})\Big),
\label{comm-effici-1}
\end{align}
where in $(a)$ we have used the fact that
$\boldsymbol{\phi}_{ij,t_{ij}^{k+1}}^{k+1}=\boldsymbol{x}_i^{k+1}$
(resp. $\boldsymbol{\phi}_{ij,t_{ij}^{k}}^{k+1}=
\boldsymbol{\phi}_{ij,t_{ij}^{k}}^{k}=\boldsymbol{x}_{i}^{k}$).
Combing (\ref{comm-effici-1}) with the Lipschitz continuity
of $f_{ij,t_{ij}^{k}}$ and $f_{ij,t_{ij}^{k+1}}$ we have
\begin{align}
&\|\boldsymbol{\Delta}_{ij}^{k+1}\|_2^2
\leq 2S_{ij}^2\cdot L_{ij,t_{ij}^{k+1}}^2\cdot
\Big\|\boldsymbol{\phi}_{ij,t_{ij}^{k+1}}^{k+1}
-\boldsymbol{\phi}_{ij,t_{ij}^{k+1}}^{k}\Big\|_2^2
\nonumber\\
&+2S_{ij}^2\cdot L_{ij,t_{ij}^{k}}^2\cdot
\Big\|\boldsymbol{\phi}_{ij,t_{ij}^{k}}^{k+1}
-\boldsymbol{\phi}_{ij,t_{ij}^{k}}^{k}\Big\|_2^2
\nonumber\\
&\leq 4S_{ij}^2\cdot L_{ij,t_{ij}^{k+1}}^2\cdot
\Big(\big\|\boldsymbol{\phi}_{ij,t_{ij}^{k+1}}^{k+1}
-\boldsymbol{x}^*\big\|_2^2+
\big\|\boldsymbol{\phi}_{ij,t_{ij}^{k+1}}^{k}-
\boldsymbol{x}^*\big\|_2^2\Big)
\nonumber\\
&+4S_{ij}^2\cdot L_{ij,t_{ij}^{k}}^2\cdot
\Big(\big\|\boldsymbol{\phi}_{ij,t_{ij}^{k}}^{k+1}
-\boldsymbol{x}^*\big\|_2^2+
\big\|\boldsymbol{\phi}_{ij,t_{ij}^{k}}^{k}-
\boldsymbol{x}^*\big\|_2^2\Big).
\label{comm-effici-2}
\end{align}
where $L_{ij,t}$ is the Lipschitz constant corresponding to
$\nabla f_{ij,t}$. From (\ref{comm-effici-2}) we can further
deduce that
\begin{align}
&\textstyle
\sum\limits_{i=1}^N\sum\limits_{j=1}^{P_i}
\|\boldsymbol{\Delta}_{ij}^{k+1}\|_2^2\leq
 \sum\limits_{i=1}^N\sum\limits_{j=1}^{P_i}
4S_{ij}^2\Big(L_{ij,t_{ij}^{k+1}}^2
\big\|\boldsymbol{\phi}_{ij,t_{ij}^{k+1}}^{k+1}
-\boldsymbol{x}^*\big\|_2^2+
\nonumber\\
& \textstyle L_{ij,t_{ij}^{k}}^2
\big\|\boldsymbol{\phi}_{ij,t_{ij}^{k}}^{k+1}-
\boldsymbol{x}^*\big\|_2^2\Big)+
\sum\limits_{i=1}^N\sum\limits_{j=1}^{P_i}
4S_{ij}^2\Big(L_{ij,t_{ij}^{k+1}}^2
\big\|\boldsymbol{\phi}_{ij,t_{ij}^{k+1}}^{k}
\nonumber\\
&\textstyle -\boldsymbol{x}^*\big\|_2^2+L_{ij,t_{ij}^{k}}^2
\big\|\boldsymbol{\phi}_{ij,t_{ij}^{k}}^{k}-
\boldsymbol{x}^*\big\|_2^2\Big).
\label{comm-effici-3}
\end{align}
which implies that
\begin{align}
&\textstyle\mathbb{E}_{t_{ij}^{k+1},t_{ij}^{k}}
\{\sum_{i=1}^N\sum_{j=1}^{P_i}
\|\boldsymbol{\Delta}_{ij}^{k+1}\|_2^2\}
\nonumber\\
&\overset{(a)}{\leq}\textstyle
2\sum\limits_{i=1}^N\sum\limits_{j=1}^{P_i}
4S_{ij}^2\sum\limits_{t=1}^{S_{ij}} \frac{L_{ij,t}^2}{S_{ij}}
\Big(\big\|\boldsymbol{\phi}_{ij,t}^{k+1}
-\boldsymbol{x}^*\big\|_2^2+
\big\|\boldsymbol{\phi}_{ij,t}^{k}
-\boldsymbol{x}^*\big\|_2^2\Big)
\nonumber\\
&\leq 2 \bar{L} (D^{k+1}+D^k)
\label{comm-effici-3-1}
\end{align}
where
\begin{align}
&\bar{L}=\text{max}\{L_{k+1},L_{k}\},
\nonumber\\
&\textstyle L_{k+1}=\Big(\sum_{i=1}^N\sum_{j=1}^{P_i}
4S_{ij}^2\sum_{t=1}^{S_{ij}} \frac{L_{ij,t}^2}{S_{ij}}
\big\|\boldsymbol{\phi}_{ij,t}^{k+1}
-\boldsymbol{x}^*\big\|_2^2\Big)\Big/
\nonumber\\
&\qquad\quad\textstyle \Big(\underbrace{\textstyle
\sum_{i=1}^N\sum_{j=1}^{P_i}\sum_{t=1}^{S_{ij}}
\big\|\boldsymbol{\phi}_{ij,t}^{k+1}
-\boldsymbol{x}^*\big\|_2^2}_{=D^{k+1}}\Big),
\nonumber\\
&\textstyle L_k=\Big(\sum_{i=1}^N\sum_{j=1}^{P_i}
4S_{ij}^2\sum_{t=1}^{S_{ij}} \frac{L_{ij,t}^2}{S_{ij}}
\big\|\boldsymbol{\phi}_{ij,t}^{k}
-\boldsymbol{x}^*\big\|_2^2\Big)\Big/
\nonumber\\
&\qquad\quad\textstyle \Big(\underbrace{\textstyle
\sum_{i=1}^N\sum_{j=1}^{P_i}\sum_{t=1}^{S_{ij}}
\big\|\boldsymbol{\phi}_{ij,t}^{k}
-\boldsymbol{x}^*\big\|_2^2}_{D^{k}}\Big).
\label{comm-effici-3-3}
\end{align}
Taking the full expectation for both sides of (\ref{comm-effici-3-1})
yields
\begin{align}
&\textstyle\mathbb{E}\{\sum_{i=1}^N\sum_{j=1}^{P_i}
\|\boldsymbol{\Delta}_{ij}^{k+1}\|_2^2\}\leq
\mathbb{E}\{2 \bar{L} (D^{k+1}+D^k)\}
\nonumber\\
&\textstyle \approx  \mathbb{E}\{4\bar{L}D^{k+1}\}
\nonumber\\
&\overset{(\ref{lemma5-1-content})}{\leq}\textstyle
\mathbb{E}\Big\{4 \bar{L}\Big( (1-\frac{1}{\overline{S_{ij}}}) D^{k}
+ 2\overline{P_i}X^{k+1}+2\overline{P_i}N\bar{X}^{k+1}\Big)\Big\}
\label{comm-effici-3-2}
\end{align}
Assuming $\mathbb{E}\{D^{k+1}\}\approx \mathbb{E}\{D^{k}\}$,
the above inequality implies that
\begin{align}
&\textstyle \mathbb{E}\{4 \bar{L} D^{k+1}\}
\leq \mathbb{E}\Big\{8 \overline{S_{ij}}\bar{L}\overline{P_i}
\big(X^{k+1}+N\bar{X}^{k+1}\big)\Big\}
\nonumber\\
&=\mathbb{E}\Big\{8\overline{S_{ij}} \bar{L}\overline{P_i}
\big(\|\boldsymbol{x}^{k+1}-
\bar{\boldsymbol{W}}_{\infty}\boldsymbol{x}^{k+1}\|_2^2
+\|\bar{\boldsymbol{W}}_{\infty}\boldsymbol{x}^{k+1}-
\tilde{\boldsymbol{x}}^*\|_2^2\big)\Big\}
\nonumber\\
 &\overset{(\ref{theorem2-content-1})}{\leq}
 \mathbb{E}\Big\{8(1+C_1)\overline{S_{ij}}
 \bar{L}\overline{P_i}\cdot\|\boldsymbol{x}^{k+1}-
\bar{\boldsymbol{W}}_{\infty}\boldsymbol{x}^{k+1}\|_2^2\Big\}
\nonumber\\
&=  \mathbb{E}\Big\{8(1+C_1)\overline{S_{ij}}\bar{L}\overline{P_i}
\cdot\|\boldsymbol{x}^{k+1}-\bar{\boldsymbol{W}}\boldsymbol{x}^{k+1}
+\bar{\boldsymbol{W}}\boldsymbol{x}^{k+1}-
\nonumber\\
&\qquad\qquad\qquad\qquad\qquad\qquad\qquad\qquad
\bar{\boldsymbol{W}}_{\infty}\boldsymbol{x}^{k+1}\|_2^2\Big\}
\nonumber\\
&\leq \mathbb{E}\Big\{16(1+C_1)\overline{S_{ij}}\bar{L}
\overline{P_i}\big(\|\boldsymbol{x}^{k+1}-
\bar{\boldsymbol{W}}\boldsymbol{x}^{k+1}\|_2^2
\nonumber\\
&\qquad\qquad\qquad\qquad+
\|\bar{\boldsymbol{W}}\boldsymbol{x}^{k+1}-
\bar{\boldsymbol{W}}_{\infty}\boldsymbol{x}^{k+1}\|_2^2\big)\Big\}
\nonumber\\
&\overset{(\ref{theorem2-content-2})}{\leq}
\mathbb{E}\Big\{\underbrace{16(1+C_1)(1+C_2)
\overline{S_{ij}}\bar{L}\overline{P_i}}_{[(\ref{comm-effici-4})-1]}
\cdot\|\boldsymbol{x}^{k+1}-
\bar{\boldsymbol{W}}\boldsymbol{x}^{k+1}\|_2^2\Big\}
\nonumber\\
&=  \textstyle \mathbb{E}\Big\{[(\ref{comm-effici-4})-1]\cdot
\sum\limits_{i=1}^N\big\|\sum_{i'=1}^{N}w_{ii'}
\boldsymbol{x}_{i'}^{k+1}-\boldsymbol{x}_i^{k+1}\big\|_2^2\Big\}.
\label{comm-effici-4}
\end{align}
Combining (\ref{comm-effici-3-2}) and (\ref{comm-effici-4})
we obtain
\begin{align}
&\textstyle\mathbb{E}\{\sum_{i=1}^N\sum_{j=1}^{P_i}
\|\boldsymbol{\Delta}_{ij}^{k+1}\|_2^2\}
\overset{(\ref{comm-effici-3-2})}{\lesssim}
 \mathbb{E}\{4\bar{L}D^{k+1}\}
\nonumber\\
&\textstyle \overset{(\ref{comm-effici-4})}{\leq}
\mathbb{E}\Big\{[(\ref{comm-effici-4})-1]\cdot
\sum\limits_{i=1}^N\big\|\sum_{i'=1}^{N}w_{ii'}
\boldsymbol{x}_{i'}^{k+1}-\boldsymbol{x}_i^{k+1}\big\|_2^2\Big\}.
\label{comm-effici-5}
\end{align}
Multiplying $\frac{1}{\sum_{i=1}^N P_i}$ to both sides of
(\ref{comm-effici-5}) yields the desired result.

\section{Proving the First Inequality in (\ref{proofn-2})}
The first inequality, i.e., (\ref{lemma10-content}), is proved in
the following lemma. For completeness and clarity, we provide a
simple proof of this lemma.
\newtheorem{lemma5}{Lemma}
\begin{lemma}[Lemma 4 in \cite{XinKhan20}]
\label{lemma10}
Let the assumptions made in Section \ref{sec-obj-assumption} hold.
Then
\begin{align}
&\textstyle X^{k+1}\leq\frac{1+\sigma^2}{2}\cdot X^k+
\frac{2\alpha^2}{1-\sigma^2}\cdot Y^k,
\label{lemma10-content}\\
&\textstyle X^{k+1}\leq2\sigma^2\cdot X^{k}+2\alpha^2\cdot Y^k.
\label{lemma10-content2}
\end{align}
\end{lemma}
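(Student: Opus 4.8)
The plan is to work directly from the compact form of the $\boldsymbol{x}$-update in (\ref{PropAlg-compact-form}), namely $\boldsymbol{x}^{k+1}=\bar{\boldsymbol{W}}\boldsymbol{x}^{k}-\alpha\boldsymbol{y}^k$, and to isolate the consensus residual $\boldsymbol{x}^{k+1}-\bar{\boldsymbol{W}}_{\infty}\boldsymbol{x}^{k+1}$. The key algebraic fact is that $\boldsymbol{W}$ is doubly stochastic, so that $\bar{\boldsymbol{W}}_{\infty}\bar{\boldsymbol{W}}=\bar{\boldsymbol{W}}_{\infty}$. Applying $\bar{\boldsymbol{W}}_{\infty}$ to the update gives $\bar{\boldsymbol{W}}_{\infty}\boldsymbol{x}^{k+1}=\bar{\boldsymbol{W}}_{\infty}\boldsymbol{x}^{k}-\alpha\bar{\boldsymbol{W}}_{\infty}\boldsymbol{y}^k$, and subtracting this from the update itself yields the decomposition
\[
\boldsymbol{x}^{k+1}-\bar{\boldsymbol{W}}_{\infty}\boldsymbol{x}^{k+1}=\big(\bar{\boldsymbol{W}}\boldsymbol{x}^{k}-\bar{\boldsymbol{W}}_{\infty}\boldsymbol{x}^{k}\big)-\alpha\big(\boldsymbol{y}^k-\bar{\boldsymbol{W}}_{\infty}\boldsymbol{y}^k\big).
\]
This is the crucial step: it splits the new residual into a term that the mixing operator contracts and a term proportional to the $\boldsymbol{y}$-consensus gap.

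Next I would take squared norms and apply the Young-type inequality (\ref{def-triangle-inequal}) with a free parameter $p>0$ to the two summands. The first summand is controlled by Lemma \ref{lemma2}, which gives $\|\bar{\boldsymbol{W}}\boldsymbol{x}^{k}-\bar{\boldsymbol{W}}_{\infty}\boldsymbol{x}^{k}\|_2^2\leq\sigma^2 X^k$, while the second is exactly $\alpha^2 Y^k$. This produces the master bound $X^{k+1}\leq(1+p^{-1})\sigma^2 X^k+(1+p)\alpha^2 Y^k$, from which both claimed inequalities follow by specializing $p$.

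For (\ref{lemma10-content}) I would choose $p$ so that $(1+p^{-1})\sigma^2=\tfrac{1+\sigma^2}{2}$, i.e. $p=\tfrac{2\sigma^2}{1-\sigma^2}$, giving coefficient $\tfrac{1+\sigma^2}{2}$ on $X^k$ and $(1+p)\alpha^2=\tfrac{(1+\sigma^2)\alpha^2}{1-\sigma^2}$ on $Y^k$; the latter is then loosened to $\tfrac{2\alpha^2}{1-\sigma^2}$ via $1+\sigma^2\leq 2$. For (\ref{lemma10-content2}) the choice $p=1$ immediately delivers $X^{k+1}\leq 2\sigma^2 X^k+2\alpha^2 Y^k$.

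Since this argument is essentially Lemma 4 of \cite{XinKhan20}, no genuinely deep obstacle arises; the points that require care are verifying the operator identity $\bar{\boldsymbol{W}}_{\infty}\bar{\boldsymbol{W}}=\bar{\boldsymbol{W}}_{\infty}$ so that $\bar{\boldsymbol{W}}_{\infty}$ passes cleanly through the update and the residual decomposes as above, together with the bookkeeping of the parameter $p$ in the Young-type inequality to land exactly on the stated coefficients.
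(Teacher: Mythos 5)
Your proposal is correct and follows essentially the same route as the paper's proof: substitute the compact update $\boldsymbol{x}^{k+1}=\bar{\boldsymbol{W}}\boldsymbol{x}^{k}-\alpha\boldsymbol{y}^k$, use $\bar{\boldsymbol{W}}_{\infty}\bar{\boldsymbol{W}}=\bar{\boldsymbol{W}}_{\infty}$ to split the consensus residual, apply the Young-type inequality (\ref{def-triangle-inequal}) together with Lemma \ref{lemma2}, and then specialize the free parameter (your $p=\tfrac{2\sigma^2}{1-\sigma^2}$ and $p=1$ are exactly the paper's choices up to the relabeling $p\mapsto 1/p$). The only cosmetic difference is that you isolate the residual decomposition before taking norms, whereas the paper performs the same manipulation inside the norm.
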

\begin{proof}
According to the first equality of (\ref{PropAlg-compact-form}),
it holds that
\begin{align}
&\textstyle X^{k+1}\overset{(\ref{PropAlg-compact-form})}{=}
\|\bar{\boldsymbol{W}}\boldsymbol{x}^{k}-\alpha\boldsymbol{y}^k-
\bar{\boldsymbol{W}}_{\infty}(\bar{\boldsymbol{W}}\boldsymbol{x}^{k}-
\alpha\boldsymbol{y}^k)\|_2^2
\nonumber\\
\overset{(a)}{\leq}&
\textstyle(1+p)\|\bar{\boldsymbol{W}}\boldsymbol{x}^{k}
-\bar{\boldsymbol{W}}_{\infty}\boldsymbol{x}^{k}\|_2^2
+(1+\frac{1}{p})\alpha^2\|\boldsymbol{y}^k
-\bar{\boldsymbol{W}}_{\infty}\boldsymbol{y}^k\|_2^2
\nonumber\\
\overset{(b)}{\leq}&
\textstyle (1+p)\sigma^2\cdot X^k
+(1+\frac{1}{p})\alpha^2\cdot Y^k, \ \forall p>0,
\end{align}
where $(a)$ is because $\bar{\boldsymbol{W}}_{\infty}\bar{\boldsymbol{W}}
=\bar{\boldsymbol{W}}_{\infty}$ as well as (\ref{def-triangle-inequal}),
and $(b)$ comes from Lemma \ref{lemma2} (note that $\boldsymbol{W}$ and
$\bar{\boldsymbol{W}}$ shares similar properties because $\bar{\boldsymbol{W}}
\triangleq\boldsymbol{W}\otimes\boldsymbol{I}_{d}$). Setting
$p=\frac{1-\sigma^2}{2\sigma^2}$ (resp. $1$) and using $\sigma<1$ leads
to (\ref{lemma10-content}) (resp. (\ref{lemma10-content2})).
\end{proof}

\section{Proving the Second Inequality in (\ref{proofn-2})}
\label{appendix-B}
Before starting, we introduce several notations that will be used
later. let $\mathcal{F}^{k}=\{t_{ij}^{k'}\}_{i,j,k'=1}^{i=N,j=P_i,k'=k}$
denote the set of random variables appeared before the $(k+1)$th
iteration. Also let $\mathbb{E}^{\mathcal{F}^{k}}\{\cdot\}$
represent the conditional expectation that is conditioned on
$\mathcal{F}^{k}$. At last, we use $r^k$ to refer to
$\{t_{ij}^{k}\}_{i=1,j=1}^{i=N,j=P_i}$, which is the set of random
variables in the $k$th iteration.
The second inequality in (\ref{proofn-2}) reads as follows
\begin{align}
\textstyle\mathbb{E}\{\bar{X}^{k+1}\}
\leq \mathbb{E}\{b_1\cdot\bar{X}^{k}+b_2\cdot X^k+
b_3\cdot D^{k-1}\}
\label{lemma4-1-1-content}
\end{align}
We will prove this inequality in Lemma \ref{lemma5}. Before
presenting Lemma \ref{lemma5}, we first provide an important
intermediate result.

\newtheorem{lemma4-1}{Lemma}
\begin{lemma}
\label{lemma4-1}
Suppose the assumptions in Section \ref{sec-obj-assumption}
hold. Let $(\boldsymbol{x}^k,\boldsymbol{y}^k)$ be generated by
the proposed Algorithm \ref{alg-4}. Then
\begin{align}
\textstyle\mathbb{E}
\{\|\boldsymbol{g}^k-\nabla f(\boldsymbol{x}^k)\|_2^2\}
\leq \textstyle\mathbb{E}\{c_1 X^k+c_2\bar{X}^k+c_3 D^{k-1}\},
\label{lemma4-1-content}
\end{align}
where $\nabla f(\boldsymbol{x}^k)\triangleq [\nabla f_1(\boldsymbol{x}_1^k);
\cdots;\nabla f_N(\boldsymbol{x}_N^k)]$, $\boldsymbol{g}^k\triangleq
[\boldsymbol{g}_1^k;\cdots;\boldsymbol{g}_N^k]$, $\boldsymbol{g}_i^k$
is defined in Algorithm \ref{alg-4},
\begin{align}
&\textstyle c_1\triangleq 8L^2(1+\overline{P_i}\cdot\overline{S_{ij}}^2)
+12\rho(1+\sigma^2)\overline{P_i}^2,
\nonumber\\
&\textstyle c_2\triangleq 8L^2(1+\overline{P_i}\cdot\overline{S_{ij}}^2)N, \
c_3\triangleq 4L^2\overline{S_{ij}},
\label{lemma4-1-content2}
\end{align}
$\overline{P_i}\triangleq\max_i\{P_i\}$ and $\overline{S_{ij}}=
\max_{i,j}\{S_{ij}\}$.
\end{lemma}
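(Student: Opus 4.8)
The plan is to split the aggregated-gradient error $\boldsymbol{g}_i^k-\nabla f_i(\boldsymbol{x}_i^k)$ into a \emph{staleness/bias} part produced by the CTUS rule and a \emph{stochastic variance} part inherited from the SAGA estimator, bound each over all servers, and then convert everything into $X^k$, $\bar{X}^k$ and $D^{k-1}$. I would introduce the hypothetical ``full-upload'' aggregate $\tilde{\boldsymbol{g}}_i^k\triangleq\sum_{j=1}^{P_i}\boldsymbol{g}_{ij}^k$. Unfolding the server update (\ref{main-alg-3}) at index $k$ shows $\boldsymbol{g}_i^k=\tilde{\boldsymbol{g}}_i^k-\sum_{j\in\mathcal{N}_i^k}\boldsymbol{\Delta}_{ij}^k$, so (\ref{def-triangle-inequal}) with $p=1$ gives $\|\boldsymbol{g}_i^k-\nabla f_i(\boldsymbol{x}_i^k)\|_2^2\leq 2\|\sum_{j\in\mathcal{N}_i^k}\boldsymbol{\Delta}_{ij}^k\|_2^2+2\|\tilde{\boldsymbol{g}}_i^k-\nabla f_i(\boldsymbol{x}_i^k)\|_2^2$. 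The second summand is a genuine variance, since each $\boldsymbol{g}_{ij}^k$ in (\ref{main-alg-1}) is conditionally unbiased, $\mathbb{E}^{\mathcal{F}^{k-1}}\{\boldsymbol{g}_{ij}^k\}=\nabla f_{ij}(\boldsymbol{x}_i^k)$, whence $\mathbb{E}^{\mathcal{F}^{k-1}}\{\tilde{\boldsymbol{g}}_i^k\}=\nabla f_i(\boldsymbol{x}_i^k)$; the first summand is the bias caused by reusing stale gradients.

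For the bias term I would invoke the triggering rule directly. Every $j\in\mathcal{N}_i^k$ \emph{violates} (\ref{main-alg-4}), so $\|\boldsymbol{\Delta}_{ij}^k\|_2^2\leq\rho\|\sum_{i'=1}^{N}w_{ii'}\boldsymbol{x}_{i'}^k-\boldsymbol{x}_i^k\|_2^2$ holds pathwise. Applying (\ref{def-multi-vect-inequal}) with $|\mathcal{N}_i^k|\leq\overline{P_i}$ and summing over $i$, the bias collapses to $\overline{P_i}^2\rho\,\|\bar{\boldsymbol{W}}\boldsymbol{x}^k-\boldsymbol{x}^k\|_2^2$, where I used $\sum_i\|\sum_{i'}w_{ii'}\boldsymbol{x}_{i'}^k-\boldsymbol{x}_i^k\|_2^2=\|\bar{\boldsymbol{W}}\boldsymbol{x}^k-\boldsymbol{x}^k\|_2^2$ since $\bar{\boldsymbol{W}}=\boldsymbol{W}\otimes\boldsymbol{I}_d$. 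I would then control this consensus-type quantity by $2(1+\sigma^2)X^k$: decomposing $\bar{\boldsymbol{W}}\boldsymbol{x}^k-\boldsymbol{x}^k=(\bar{\boldsymbol{W}}\boldsymbol{x}^k-\bar{\boldsymbol{W}}_\infty\boldsymbol{x}^k)-(\boldsymbol{x}^k-\bar{\boldsymbol{W}}_\infty\boldsymbol{x}^k)$, using (\ref{def-triangle-inequal}), and invoking Lemma \ref{lemma2} (which yields $\|\bar{\boldsymbol{W}}\boldsymbol{x}^k-\bar{\boldsymbol{W}}_\infty\boldsymbol{x}^k\|_2^2\leq\sigma^2 X^k$). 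This produces the $\rho(1+\sigma^2)\overline{P_i}^2$ contribution to $c_1$.

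For the variance term I would use the conditional independence of the draws $\{t_{ij}^k\}_{j}$ across users, so that the cross terms vanish (cf.\ (\ref{def-expectation-equal})) and $\mathbb{E}^{\mathcal{F}^{k-1}}\{\|\tilde{\boldsymbol{g}}_i^k-\nabla f_i(\boldsymbol{x}_i^k)\|_2^2\}=\sum_{j=1}^{P_i}\mathbb{E}^{\mathcal{F}^{k-1}}\{\|\boldsymbol{g}_{ij}^k-\nabla f_{ij}(\boldsymbol{x}_i^k)\|_2^2\}$. Bounding each per-user variance by its second moment, $S_{ij}L^2\sum_{t=1}^{S_{ij}}\|\boldsymbol{x}_i^k-\boldsymbol{\phi}_{ij,t}^{k-1}\|_2^2$ via Lipschitz continuity, and then splitting $\|\boldsymbol{x}_i^k-\boldsymbol{\phi}_{ij,t}^{k-1}\|_2^2$ around $\boldsymbol{x}^*$ with (\ref{def-triangle-inequal}), gives a piece $\propto\overline{P_i}\,\overline{S_{ij}}^2 L^2\sum_i\|\boldsymbol{x}_i^k-\boldsymbol{x}^*\|_2^2$ and a piece $\propto\overline{S_{ij}}L^2 D^{k-1}$, the latter being exactly the definition of $D^{k-1}$. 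Finally I would use $\boldsymbol{x}_i^k-\boldsymbol{x}^*=(\boldsymbol{x}_i^k-\bar{\boldsymbol{x}}^k)+(\bar{\boldsymbol{x}}^k-\boldsymbol{x}^*)$ together with $\sum_i\|\boldsymbol{x}_i^k-\bar{\boldsymbol{x}}^k\|_2^2=X^k$ and $N\|\bar{\boldsymbol{x}}^k-\boldsymbol{x}^*\|_2^2=N\bar{X}^k$ to get $\sum_i\|\boldsymbol{x}_i^k-\boldsymbol{x}^*\|_2^2\leq 2X^k+2N\bar{X}^k$, collect all terms, and take the full expectation to read off $c_1$, $c_2$ and $c_3$.

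The main obstacle is precisely the bias term, which is what the paper flags as breaking the unbiasedness of $\boldsymbol{g}_i^k$: in contrast to standard GT--SAGA, $\boldsymbol{g}_i^k$ is no longer an unbiased estimate of $\nabla f_i(\boldsymbol{x}_i^k)$, so the textbook variance argument alone does not close the recursion. The crucial observation that rescues the proof is that the triggering condition (\ref{main-alg-4}) is engineered so that every unuploaded innovation $\boldsymbol{\Delta}_{ij}^k$ is dominated by the \emph{consensus gap} $\|\bar{\boldsymbol{W}}\boldsymbol{x}^k-\boldsymbol{x}^k\|_2^2$, which Lemma \ref{lemma2} ties back to $X^k$; hence the bias is absorbed into the existing consensus metric rather than producing an irreducible error floor. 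The remainder is careful constant bookkeeping through repeated use of (\ref{def-triangle-inequal}) and (\ref{def-multi-vect-inequal}).
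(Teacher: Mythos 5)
Your proposal is correct, and it proves the lemma by a genuinely cleaner decomposition than the paper's own proof. The paper does not form the full-upload aggregate pathwise; instead it uses the identity $\nabla f_i(\boldsymbol{x}_i^k)=\mathbb{E}_{r^k}^{\mathcal{F}^{k-1}}\{\boldsymbol{g}_i^k+\sum_{j\in\mathcal{N}_i^k}\boldsymbol{\Delta}_{ij}^k\}$ inside the expectation (see (\ref{lemmaX-1}) and (\ref{lemmaX-1-1})) and then routes everything through the reference point $\nabla f_i(\boldsymbol{x}^*)$ via the variance decomposition (\ref{def-VD}), producing a variance term $T_2$ and a bias term $T_1=\|\mathbb{E}_{r^k}^{\mathcal{F}^{k-1}}\{\boldsymbol{g}_i^k\}-\nabla f_i(\boldsymbol{x}^*)\|_2^2$; bounding $T_1$ in (\ref{lemmaX-5}) forces a second invocation of the triggering condition and an extra Lipschitz term, which is exactly why the paper's constants carry the factor $12\rho(1+\sigma^2)\overline{P_i}^2$ (from $6|\mathcal{N}_i^k|^2\rho$ after (\ref{lemmaX-14})) and the ``$1+$'' inside $8L^2(1+\overline{P_i}\cdot\overline{S_{ij}}^2)$. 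Your route---the exact pathwise identity $\boldsymbol{g}_i^k=\tilde{\boldsymbol{g}}_i^k-\sum_{j\in\mathcal{N}_i^k}\boldsymbol{\Delta}_{ij}^k$ followed by Young's inequality---uses the violated trigger only once and keeps the stochastic part centered at $\nabla f_i(\boldsymbol{x}_i^k)$, where $\tilde{\boldsymbol{g}}_i^k$ is genuinely conditionally unbiased; working through your bookkeeping gives roughly $4\rho(1+\sigma^2)\overline{P_i}^2$ and $8L^2\overline{P_i}\cdot\overline{S_{ij}}^2$ in place of the corresponding pieces of $c_1$ (and the same $c_3=4L^2\overline{S_{ij}}$), and since these are dominated by the stated constants and $X^k,\bar{X}^k,D^{k-1}\geq 0$, the lemma as stated follows. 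A further advantage of your split is technical hygiene: your variance term is the variance of $\tilde{\boldsymbol{g}}_i^k=\sum_{j=1}^{P_i}\boldsymbol{g}_{ij}^k$, a sum over a \emph{deterministic} index set of conditionally independent, conditionally centered terms, so (\ref{def-expectation-equal}) applies verbatim; the paper's $T_2$ is instead the variance of $\boldsymbol{g}_i^k$ itself, whose expression involves the random set $\mathcal{N}_i^k$, and the manipulation in (\ref{lemmaX-6}) quietly treats that set as fixed before extending the sum to all $j$. What the paper's detour buys is only stylistic consistency with the standard GT-SAGA analysis (which tracks $\|\boldsymbol{g}_i^k-\nabla f_i(\boldsymbol{x}^*)\|_2^2$); it costs larger constants and the extra subtlety just mentioned. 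Both proofs ultimately rest on the same two pillars you identified: every unuploaded innovation is dominated pathwise by $\rho\|[\bar{\boldsymbol{W}}]_i\boldsymbol{x}^k-\boldsymbol{x}_i^k\|_2^2$, absorbed into $X^k$ through $\|\boldsymbol{x}^k-\bar{\boldsymbol{W}}\boldsymbol{x}^k\|_2^2\leq 2(1+\sigma^2)X^k$, and the SAGA variance is converted into $X^k$, $\bar{X}^k$, $D^{k-1}$ by Lipschitz continuity and splitting around $\boldsymbol{x}^*$.
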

\begin{proof}
For notational convenience, denote $G^k=\|\boldsymbol{g}^k-
\nabla f(\boldsymbol{x}^k)\|_2^2$ and $G_i^k=\|\boldsymbol{g}_i^k-
\nabla f_i(\boldsymbol{x}^k)\|_2^2$. Since $\mathbb{E}\{G^k\}=
\mathbb{E}\{\sum_{i=1}^N G_i^k\}$, we should separately bound each
$\mathbb{E}\{G_i^k\}$. For this term, we first consider its
conditional expectation:
\begin{align}
&\textstyle\mathbb{E}_{r^{k}}^{\mathcal{F}^{k-1}}\{G_i^k\}
\overset{(a)}{=}
\mathbb{E}_{r^{k}}^{\mathcal{F}^{k-1}}
\Big\{\Big\|\boldsymbol{g}_i^k-\mathbb{E}_{r^{k}}^{\mathcal{F}^{k-1}}
\{\boldsymbol{g}_i^k+\sum\limits_{j\in\mathcal{N}_{i}^{k}}
\boldsymbol{\Delta}_{ij}^{k}\}\Big\|_2^2\Big\}
\nonumber\\
&\textstyle\overset{(\ref{def-multi-vect-inequal})}{\leq}
2\mathbb{E}_{r^{k}}^{\mathcal{F}^{k-1}}
\Big\{\|\boldsymbol{g}_i^k-\mathbb{E}_{r^{k}}^{\mathcal{F}^{k-1}}
\{\boldsymbol{g}_i^k\}\|_2^2
+\big\|\mathbb{E}_{r^{k}}^{\mathcal{F}^{k-1}}
\big\{\sum\limits_{j\in\mathcal{N}_{i}^{k}}
\boldsymbol{\Delta}_{ij}^{k}\big\}\big\|_2^2\Big\}
\nonumber\\
&\textstyle\overset{(b)}{\leq}
2\mathbb{E}_{r^{k}}^{\mathcal{F}^{k-1}}
\{\|\boldsymbol{g}_i^k-\nabla f_i(\boldsymbol{x}^*)\|_2^2
+\textstyle\|\sum_{j\in\mathcal{N}_{i}^{k}}
\boldsymbol{\Delta}_{ij}^{k}\|_2^2\},
\label{lemmaX-1}
\end{align}
where $(a)$ follows from
\begin{align}
&\textstyle\mathbb{E}_{r^{k}}^{\mathcal{F}^{k-1}}
\{\boldsymbol{g}_i^k+\sum_{j\in\mathcal{N}_{i}^{k}}
\boldsymbol{\Delta}_{ij}^{k}\}
\nonumber\\
&\textstyle\overset{(\ref{main-alg-3})}{=}\mathbb{E}_{r^{k}}^{\mathcal{F}^{k-1}}
\{\textstyle\sum_{j=1}^{P_i}(\boldsymbol{g}_{ij}^{k-1}
+\boldsymbol{\Delta}_{ij}^{k})\}
=\textstyle\nabla f_i(\boldsymbol{x}^k),
\label{lemmaX-1-1}
\end{align}
$(b)$ has invoked (\ref{def-VD}) for the first term (by treating
$\nabla f_i(\boldsymbol{x}^*)$ as $\boldsymbol{y}$) and
Jensen's inequality for the second term. Regarding
$\|\sum_{j\in\mathcal{N}_{i}^{k}}
\boldsymbol{\Delta}_{ij}^{k}\|_2^2$ in the last line of (\ref{lemmaX-1}),
we have
\begin{align}
&\mathbb{E}_{r^{k}}^{\mathcal{F}^{k-1}}
\{\textstyle \|\sum_{j\in\mathcal{N}_{i}^{k}}
\boldsymbol{\Delta}_{ij}^{k}\|_2^2\}
\overset{(\ref{def-multi-vect-inequal})}{\leq}
\mathbb{E}_{r^{k}}^{\mathcal{F}^{k-1}}
\{|\mathcal{N}_{i}^{k}|\sum_{j\in\mathcal{N}_{i}^{k}}
\|\boldsymbol{\Delta}_{ij}^{k}\|_2^2\}
\nonumber\\
&\textstyle\overset{(a)}{\leq} |\mathcal{N}_{i}^{k}|^2
\rho\|\boldsymbol{x}_i^{k}-
[\bar{\boldsymbol{W}}]_{i}\boldsymbol{x}^{k}\|_2^2,
\label{lemmaX-2}
\end{align}
where $[\bar{\boldsymbol{W}}]_{i}$ represents the $i$th block-row
of $\bar{\boldsymbol{W}}$, and $(a)$ comes from Step 3.2 of Algorithm
\ref{alg-4}. Substituting (\ref{lemmaX-2}) into (\ref{lemmaX-1})
yields
\begin{align}
&\textstyle\mathbb{E}_{r^{k}}^{\mathcal{F}^{k-1}}
\{G_i^k\}
\nonumber\\
&\textstyle
\leq \mathbb{E}_{r^{k}}^{\mathcal{F}^{k-1}}
\{2\|\boldsymbol{g}_i^k-\nabla f_i(\boldsymbol{x}^*)\|_2^2\}
+2|\mathcal{N}_{i}^{k}|^2\rho\|\boldsymbol{x}_i^{k}-
[\bar{\boldsymbol{W}}]_{i}\boldsymbol{x}^{k}\|_2^2
\nonumber\\
&\overset{(\ref{def-VD})}{=}\textstyle
2|\mathcal{N}_{i}^{k}|^2\rho\|\boldsymbol{x}_i^{k}-
[\bar{\boldsymbol{W}}]_{i}\boldsymbol{x}^{k}\|_2^2+
2\underbrace{\textstyle\mathbb{E}_{r^k}^{\mathcal{F}^{k-1}}
\{\|\boldsymbol{g}_i^k-\mathbb{E}_{r^k}^{\mathcal{F}^{k-1}}
\{\boldsymbol{g}_i^k)\|_2^2\}}_{\triangleq T_2}
\nonumber\\
&\textstyle+2\underbrace{\textstyle\|\mathbb{E}_{r^k}^{\mathcal{F}^{k-1}}
\{\boldsymbol{g}_i^k\}-
\nabla f_i(\boldsymbol{x}^*)\|_2^2}_{\triangleq T_1}.
\label{lemmaX-3}
\end{align}
We next separately bound $T_1$ and $T_2$. For $T_1$ we have
\begin{align}
&T_1=\textstyle \|\mathbb{E}_{r^{k}}^{\mathcal{F}^{k-1}}
\{\boldsymbol{g}_i^k+
\sum_{j\in\mathcal{N}_{i}^{k}}(\boldsymbol{\Delta}_{ij}^{k}
-\boldsymbol{\Delta}_{ij}^{k})\}-\nabla f_i(\boldsymbol{x}^*)\|_2^2
\nonumber\\
&\overset{(a)}{\leq}\textstyle 2\|\nabla f_i(\boldsymbol{x}_i^k)
-\nabla f_i(\boldsymbol{x}^*)\|_2^2+
2\|\mathbb{E}_{r^{k}}^{\mathcal{F}^{k-1}}\{
\sum_{j\in\mathcal{N}_{i}^{k}}\boldsymbol{\Delta}_{ij}^{k}\}\|_2^2
\nonumber\\
&\overset{(b)}{\leq}\textstyle 2L^2\|\boldsymbol{x}_i^k-\boldsymbol{x}^*\|_2^2+
2\mathbb{E}_{r^{k}}^{\mathcal{F}^{k-1}}\{|\mathcal{N}_{i}^{k}|
\sum_{j\in\mathcal{N}_{i}^{k}}
\|\boldsymbol{\Delta}_{ij}^{k}\|_2^2\}
\nonumber\\
&\overset{(\ref{lemmaX-2})}{\leq}\textstyle 2L^2\|\boldsymbol{x}_i^k-\boldsymbol{x}^*\|_2^2+
2|\mathcal{N}_{i}^{k}|^2\rho\|\boldsymbol{x}_i^{k}-
[\bar{\boldsymbol{W}}]_{i}\boldsymbol{x}^{k}\|_2^2,
\label{lemmaX-5}
\end{align}
where $(a)$ has used (\ref{lemmaX-1-1}) and (\ref{def-triangle-inequal}),
$(b)$ has invoked the Lipschitz continuity of $\nabla f_i$ and
Jensen's inequality. For $T_2$ it holds that
\begin{align}
&T_2\overset{(\ref{main-alg-3})}{=}
\textstyle\mathbb{E}_{r^{k}}^{\mathcal{F}^{k-1}}
\{\|\sum_{j\notin\mathcal{N}_{i}^{k}}
\boldsymbol{g}_{ij}^{k}-\mathbb{E}_{r^{k}}^{\mathcal{F}^{k-1}}
\{\sum_{j\notin\mathcal{N}_{i}^{k}}
\boldsymbol{g}_{ij}^{k}\}\|_2^2\}
\nonumber\\
&\overset{(\ref{def-expectation-equal})}{=}
\textstyle\mathbb{E}_{r^{k}}^{\mathcal{F}^{k-1}}
\{\sum_{j\notin\mathcal{N}_{i}^{k}}
\|\boldsymbol{g}_{ij}^k-\mathbb{E}_{r^{k}}^{\mathcal{F}^{k-1}}
\{\boldsymbol{g}_{ij}^k\}\|_2^2\},
\nonumber\\
&\leq \textstyle \mathbb{E}_{\{t_{ij}^{k}\}}^{\mathcal{F}^{k-1}}
\{\textstyle\sum_{j=1}^{P_i}
\|\boldsymbol{g}_{ij}^k-\mathbb{E}_{t_{ij}^k}^{\mathcal{F}^{k-1}}
\{\boldsymbol{g}_{ij}^k\}\|_2^2\}
%\label{lemmaX-5-1}
%\end{align}
%Regarding $[(\ref{lemmaX-5-1})-1]$, we have
%\begin{align}
\nonumber\\
&\textstyle\overset{(a)}{=}
%\mathbb{E}_{\{t_{ij}^{k}\}}^{\mathcal{F}^{k-1}}
%\Big\{\|\nabla f_{ij,t_{ij}^{k}}(\boldsymbol{x}_i^{k})-
%\nabla f_{ij,t_{ij}^{k}}(\boldsymbol{\phi}_{ij,t_{ij}^{k}}^{k-1})
%\nonumber\\
%&\textstyle\qquad\qquad\quad-\frac{1}{S_{ij}}\sum_{t=1}^{S_{ij}}\big(\nabla f_{ij,t}
%(\boldsymbol{x}_i^{k})-\nabla f_{ij,t}(\boldsymbol{\phi}_{ij,t}^{k-1})
%\big)\|_2^2\Big\}
\textstyle\textstyle\sum_{j=1}^{P_i}\mathbb{E}_{\{t_{ij}^{k}\}}^{\mathcal{F}^{k-1}}
\Big\{\Big\|S_{ij}\cdot\Big(\nabla f_{ij,t_{ij}^{k}}(\boldsymbol{x}_i^{k})-
\nabla f_{ij,t_{ij}^{k}}(\boldsymbol{\phi}_{ij,t_{ij}^{k}}^{k-1})
\nonumber\\
&\textstyle\qquad\qquad\quad-\mathbb{E}_{t_{ij}^k}^{\mathcal{F}^{k-1}}
\big\{\nabla f_{ij,t_{ij}^k}(\boldsymbol{x}_i^{k})-
\nabla f_{ij,t_{ij}^k}(\boldsymbol{\phi}_{ij,t_{ij}^k}^{k-1})\big\}\Big)\Big\|_2^2\Big\}
\nonumber\\
&\overset{(\ref{def-VD})}{\leq}\textstyle
\textstyle\sum_{j=1}^{P_i}S_{ij}^2\cdot\mathbb{E}_{t_{ij}^{k}}^{\mathcal{F}^{k-1}}
\Big\{\|\nabla f_{ij,t_{ij}^k}(\boldsymbol{x}_i^{k})-\nabla f_{ij,t_{ij}^{k}}
(\boldsymbol{\phi}_{ij,t_{ij}^{k}}^{k-1})\|_2^2\Big\}
\nonumber\\
&= \textstyle \textstyle\sum_{j=1}^{P_i}S_{ij}\sum_{t=1}^{S_{ij}}
\|\nabla f_{ij,t}(\boldsymbol{x}_i^{k})-\nabla f_{ij,t}
(\boldsymbol{\phi}_{ij,t}^{k-1})\|_2^2
\nonumber\\
&\overset{(b)}{\leq}\textstyle\textstyle\sum\limits_{j=1}^{P_i}
2L^2S_{ij}\sum\limits_{t=1}^{S_{ij}}
\Big(\|\boldsymbol{x}_i^{k}-\boldsymbol{x}^{*}\|_2^2+\|\boldsymbol{x}^*
-\boldsymbol{\phi}_{ij,t}^{k-1}\|_2^2\Big),
\label{lemmaX-6}
\end{align}
where $(a)$ used the definition of $\boldsymbol{g}_{ij}^k$ and
the fact that $\mathbb{E}_{t_{ij}^{k}}^{\mathcal{F}^{k-1}}
\{\boldsymbol{g}_{ij}^k\}=\nabla f_{ij}(\boldsymbol{x}_i^k)=
S_{ij}\cdot\mathbb{E}_{t_{ij}^k}^{\mathcal{F}^{k-1}}
\{\nabla f_{ij,t_{ij}^k}(\boldsymbol{x}_i^{k})\}$, and
$(b)$ used the Lipschitz continuity of $\nabla f_{ij,t}$ as well as
(\ref{def-multi-vect-inequal}). Substituting (\ref{lemmaX-5}) and
(\ref{lemmaX-6}) into (\ref{lemmaX-3}) yields
\begin{align}
&\textstyle\mathbb{E}_{r^{k}}^{\mathcal{F}^{k-1}}
\{G^k\}=\mathbb{E}_{r^{k}}^{\mathcal{F}^{k-1}}
\{\sum_{i=1}^{N}G_i^k\}
\nonumber\\
&\leq\textstyle \sum_{i=1}^N \big(4L^2(1+P_iS_{ij}^2)
\|\boldsymbol{x}_i^k-\boldsymbol{x}^*\|_2^2+
\nonumber\\
&\textstyle \sum\limits_{j=1}^{P_i}4L^2S_{ij}
\sum\limits_{t=1}^{S_{ij}}\|\boldsymbol{x}^*
-\boldsymbol{\phi}_{ij,t}^{k-1}\|_2^2+6|\mathcal{N}_{i}^{k}|^2\rho
\|\boldsymbol{x}_i^{k}-[\bar{\boldsymbol{W}}]_{i}\boldsymbol{x}^{k}\|_2^2\big)
%\nonumber\\
%&\leq\textstyle \big(\sum_{i=1}^N (2L^2+\frac{2L^2}{\underline{P_i}})
%\|\boldsymbol{x}_i^k-\boldsymbol{x}^*\|_2^2\big)
%+\frac{2L^2}{\underline{P_i}^2 \underline{S_{ij}}}D^{k-1}+
%4\rho\|\boldsymbol{x}^{k}-\bar{\boldsymbol{W}}\boldsymbol{x}^{k}\|_2^2,
\nonumber\\
&\leq\textstyle \sum_{i=1}^N 8L^2(1+P_iS_{ij}^2)
(\|\bar{\boldsymbol{x}}^k-\boldsymbol{x}^*\|_2^2+
\|\boldsymbol{x}_i^k-\bar{\boldsymbol{x}}^k\|_2^2)
\nonumber\\
&\textstyle \qquad+4L^2\overline{S_{ij}}\cdot D^{k-1}+
6\overline{P_i}^2\rho\|\boldsymbol{x}^{k}-
\bar{\boldsymbol{W}}\boldsymbol{x}^{k}\|_2^2,
\nonumber\\
%&\leq\textstyle (4L^2+\frac{4L^2}{\underline{P_i}})N\cdot \bar{X}^k+
%(4L^2+\frac{4L^2}{\underline{P_i}})\cdot X^k
%+\frac{2L^2}{\underline{P_i}^2 \underline{S_{ij}}}D^{k-1}+
%4\rho\|\boldsymbol{x}^{k}-\bar{\boldsymbol{W}}\boldsymbol{x}^{k}\|_2^2,
%\nonumber\\
%&\overset{(a)}{\leq}\textstyle
%\underbrace{\textstyle 8(L^2+\frac{L^2}{\underline{P_i}})N}_{\triangleq c_2}\bar{X}^k+
%\underbrace{\textstyle\big(8(L^2+\frac{L^2}{\underline{P_i}})+
%12\rho(1+\sigma^2)\big)}_{\triangleq c_1} X^k+
%\underbrace{\textstyle\frac{4L^2}{\underline{P_i}^2\underline{S_{ij}}}}_{\triangleq c_3}D^{k-1},
&\overset{(a)}{\leq}\textstyle
c_2\bar{X}^k+c_1 X^k+c_3D^{k-1},
\label{lemmaX-7}
\end{align}
where $\overline{S_{ij}}=\max_{i,j}\{S_{ij}\}$,
$\overline{P_i}\triangleq\max_i\{P_i\}$, $\{c_i\}_{i=1}^3$ are
defined in (\ref{lemma4-1-content2}), and $(a)$ is because
\begin{align}
&\textstyle\|\boldsymbol{x}^{k}
-\bar{\boldsymbol{W}}\boldsymbol{x}^{k}\|_2^2
\leq2\|\boldsymbol{x}^{k}-\bar{\boldsymbol{W}}_{\infty}\boldsymbol{x}^{k}\|_2^2
+2\|\bar{\boldsymbol{W}}\boldsymbol{x}^{k}-
\bar{\boldsymbol{W}}_{\infty}\boldsymbol{x}^{k}\|_2^2
\nonumber\\
&\textstyle\overset{\text{Lemma} \ \ref{lemma2}}{\leq}
%2\|\boldsymbol{x}^{k}-\bar{\boldsymbol{W}}_{\infty}\boldsymbol{x}^{k}\|_2^2
%+2\sigma^2\|\boldsymbol{x}^{k}-
%\bar{\boldsymbol{W}}_{\infty}\boldsymbol{x}^{k}\|_2^2
%\nonumber\\
%&\textstyle
2(1+\sigma^2)\|\boldsymbol{x}^{k}-
\bar{\boldsymbol{W}}_{\infty}\boldsymbol{x}^{k}\|_2^2
=2(1+\sigma^2)X^k.
\label{lemmaX-14}
\end{align}
Taking a full expectation for both sides of (\ref{lemmaX-7})
yields (\ref{lemma4-1-content}).
\end{proof}

With the help of Lemma \ref{lemma4-1}, we can proceed to prove
(\ref{lemma4-1-1-content}).

\newtheorem{lemma8}{Lemma}
\begin{lemma}
\label{lemma5}
Suppose the assumptions in Section \ref{sec-obj-assumption} hold.
Let $(\boldsymbol{x}^{k+1},\boldsymbol{y}^{k+1})$ be generated by
the proposed Algorithm \ref{alg-4}. It is also assumed that
$\alpha\leq\frac{1}{L}$ and $\boldsymbol{y}^0=\boldsymbol{g}^0=
\boldsymbol{0}$. Then the following inequalities hold:
\begin{align}
&\textstyle\mathbb{E}\{\bar{X}^{k+1}\}
\leq \mathbb{E}\{b_1\cdot\bar{X}^{k}+b_2\cdot X^k+
b_3\cdot D^{k-1}\}
\label{lemma5-1-content1}\\
&\textstyle\mathbb{E}\{\bar{X}^{k+1}\}
\leq\mathbb{E}\{ (2+\frac{2c_1}{L^2})\cdot\bar{X}^{k}
+\bar{b}\cdot X^k+b_3\cdot D^{k-1}\}
\label{lemma5-1-conteng2}
\end{align}
where
\begin{align}
&\textstyle b_1=1-\mu\alpha+\frac{2c_2\alpha^2}{N},
b_2=\frac{2\alpha L^2+4\alpha \rho(1+\sigma^2)\overline{P_i}^2+
2\mu\alpha^2L^2+2\mu\alpha^2c_1}{\mu N},
\nonumber\\
&\textstyle b_3=\frac{2\alpha^2c_3}{N}, \
\bar{b}=\frac{4\alpha^2L^2+4\alpha^2\rho(1+\sigma^2)
\overline{P_i}^2+2\alpha^2c_1}{N}.
\label{lemma5-1-conteng3}
\end{align}
\end{lemma}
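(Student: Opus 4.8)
The plan is to project the iteration onto the consensus (average) direction and thereby reduce the analysis to a perturbed centralized gradient step. Averaging Step~1 of Algorithm~\ref{alg-4} and using that $\boldsymbol{W}$ is doubly stochastic (so $(\boldsymbol{1}_N^T\otimes\boldsymbol{I}_d)\bar{\boldsymbol{W}}=\boldsymbol{1}_N^T\otimes\boldsymbol{I}_d$) gives $\bar{\boldsymbol{x}}^{k+1}=\bar{\boldsymbol{x}}^k-\alpha\bar{\boldsymbol{y}}^k$. Averaging the $\boldsymbol{y}$-recursion in (\ref{PropAlg-compact-form}) and using $\boldsymbol{y}^0=\boldsymbol{g}^0=\boldsymbol{0}$ yields by induction the gradient-tracking identity $\bar{\boldsymbol{y}}^k=\bar{\boldsymbol{g}}^k\triangleq\frac{1}{N}\sum_{i=1}^N\boldsymbol{g}_i^k$, so that $\bar{\boldsymbol{x}}^{k+1}-\boldsymbol{x}^*=\big(\bar{\boldsymbol{x}}^k-\alpha\nabla f(\bar{\boldsymbol{x}}^k)-\boldsymbol{x}^*\big)+\alpha\,\boldsymbol{e}^k$, where $\boldsymbol{e}^k\triangleq\nabla f(\bar{\boldsymbol{x}}^k)-\bar{\boldsymbol{g}}^k$. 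The first bracket is an exact gradient step on the strongly convex $f$, and the second is a gradient-approximation error.

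First I would expand $\bar{X}^{k+1}=\|\cdot\|_2^2$ and take the conditional expectation given $\mathcal{F}^{k-1}$, under which $\bar{\boldsymbol{x}}^k$ is deterministic, splitting the result into the squared descent term, a cross term, and the second moment $\alpha^2\mathbb{E}\{\|\boldsymbol{e}^k\|_2^2\}$. By Lemma~\ref{lemma1} (valid since $\alpha\le 1/L$) the descent term obeys $\|\bar{\boldsymbol{x}}^k-\alpha\nabla f(\bar{\boldsymbol{x}}^k)-\boldsymbol{x}^*\|_2\le(1-\mu\alpha)\sqrt{\bar X^k}$. For the second moment I would decompose $\boldsymbol{e}^k=\big(\nabla f(\bar{\boldsymbol{x}}^k)-\bar{\nabla}f(\boldsymbol{x}^k)\big)+\big(\bar{\nabla}f(\boldsymbol{x}^k)-\bar{\boldsymbol{g}}^k\big)$, bounding the first piece by Lemma~\ref{lemma3} ($\le\tfrac{L^2}{N}X^k$) and the second by $\tfrac1N\|\nabla f(\boldsymbol{x}^k)-\boldsymbol{g}^k\|_2^2$ via (\ref{def-multi-vect-inequal}), the latter then controlled through Lemma~\ref{lemma4-1}. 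Multiplying by $\alpha^2$ reproduces exactly the $\tfrac{2c_2\alpha^2}{N}\bar X^k$ term of $b_1$, the $\alpha^2$-part of $b_2 X^k$, and the full $b_3 D^{k-1}=\tfrac{2\alpha^2 c_3}{N}D^{k-1}$.

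The delicate part, and the step I expect to be the main obstacle, is the cross term $2\alpha\langle\bar{\boldsymbol{x}}^k-\alpha\nabla f(\bar{\boldsymbol{x}}^k)-\boldsymbol{x}^*,\ \mathbb{E}^{\mathcal{F}^{k-1}}\{\boldsymbol{e}^k\}\rangle$. Because reusing stale gradients under CTUS makes $\bar{\boldsymbol{g}}^k$ biased, $\mathbb{E}^{\mathcal{F}^{k-1}}\{\boldsymbol{e}^k\}$ does not vanish: by (\ref{lemmaX-1-1}) it equals the consensus discrepancy $\nabla f(\bar{\boldsymbol{x}}^k)-\bar{\nabla}f(\boldsymbol{x}^k)$ plus the aggregated innovation $\tfrac1N\sum_i\mathbb{E}\{\sum_{j\in\mathcal{N}_i^k}\boldsymbol{\Delta}_{ij}^k\}$. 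I would bound the discrepancy with Lemma~\ref{lemma3} and the bias with the CTUS triggering estimate (\ref{lemmaX-2}) together with (\ref{lemmaX-14}), which is where the factor $\rho(1+\sigma^2)\overline{P_i}^2$ enters. Applying Young's inequality $2\langle u,v\rangle\le\mu\alpha\|u\|_2^2+(\mu\alpha)^{-1}\|v\|_2^2$ then splits this: the $\mu\alpha\|u\|_2^2$ piece is reabsorbed into the descent term, and since $(1+\mu\alpha)(1-\mu\alpha)^2\le 1-\mu\alpha$ this turns the squared contraction into the clean factor $b_1=1-\mu\alpha+\tfrac{2c_2\alpha^2}{N}$, while the $(\mu\alpha)^{-1}$ piece, weighted by $\tfrac{\alpha}{\mu}$, supplies the remaining $\tfrac{2\alpha L^2+4\alpha\rho(1+\sigma^2)\overline{P_i}^2}{\mu N}$ part of $b_2$. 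Taking full expectation yields (\ref{lemma5-1-content1}).

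Finally, I would obtain the companion bound (\ref{lemma5-1-conteng2}) from the same decomposition but with the coarser splitting $\|u+v\|_2^2\le 2\|u\|_2^2+2\|v\|_2^2$ and the crude estimate $(1-\mu\alpha)^2\le 1$, which discards the contraction in favor of an $O(1)$ coefficient on $\bar X^k$ but dispenses with the cross-term analysis; this non-contractive form is the one needed later when bounding $Y^{k+1}$. The overall difficulty throughout is that, unlike standard GT-SAGA, the estimator $\bar{\boldsymbol{g}}^k$ is biased, so the argument hinges on showing that the bias is dominated by the consensus gap $X^k$ through the CTUS condition.
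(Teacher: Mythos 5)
Your derivation of the first inequality (\ref{lemma5-1-content1}) coincides with the paper's own proof in Appendix \ref{appendix-B}: the same averaging identities $\bar{\boldsymbol{x}}^{k+1}=\bar{\boldsymbol{x}}^k-\alpha\bar{\boldsymbol{y}}^k$ and $\bar{\boldsymbol{y}}^k=\bar{\boldsymbol{g}}^k$, the same three-way expansion into descent term, cross term and second moment as in (\ref{appendix-a-2}), Lemma \ref{lemma1} for the contraction, Lemma \ref{lemma3} for the consensus discrepancy, Lemma \ref{lemma4-1} for $\|\boldsymbol{g}^k-\nabla f(\boldsymbol{x}^k)\|_2^2$, and---the crucial point---the same treatment of the CTUS bias via (\ref{lemmaX-1-1}), (\ref{lemmaX-2}) and (\ref{lemmaX-14}). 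Your bookkeeping reproduces $b_1$, $b_2$, $b_3$ exactly. The only blemish is that the Young inequality you state, $2\langle u,v\rangle\le\mu\alpha\|u\|_2^2+(\mu\alpha)^{-1}\|v\|_2^2$, is inconsistent with the factor $(1+\mu\alpha)(1-\mu\alpha)^2$ and the $\tfrac{\alpha}{\mu}$-weighting you then use; what you evidently mean, and what makes the constants come out, is $2\langle u,v\rangle\le\mu\|u\|_2^2+\mu^{-1}\|v\|_2^2$ multiplied through by $\alpha$.

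Where you genuinely depart from the paper is the second inequality (\ref{lemma5-1-conteng2}), and there your route falls short of the stated constants. The paper never redoes the analysis: it keeps the Young parameter $p$ generic in a single master bound (\ref{appendix-a-7}) and obtains (\ref{appendix-a-8}) with $p=\mu/2$ and (\ref{appendix-a-9}) with $p=\tfrac{1}{2\alpha}$, so the cross term is retained in both cases. (Incidentally, the $c_1$ in the lemma statement appears to be a typo for $c_2$; both (\ref{appendix-a-9}) and the use of this bound inside Lemma \ref{lemma9} carry $2+\tfrac{2c_2}{L^2}$.) Your coarse splitting $\|u+\alpha\boldsymbol{e}^k\|_2^2\le 2\|u\|_2^2+2\alpha^2\|\boldsymbol{e}^k\|_2^2$ is a valid bound, but tracking it gives the coefficient $\tfrac{4\alpha^2c_3}{N}$ on $D^{k-1}$ (twice $b_3$) and $\tfrac{4\alpha^2(L^2+c_1)}{N}$ on $X^k$, which strictly exceeds $\bar{b}$ because $2c_1>4\rho(1+\sigma^2)\overline{P_i}^2$ by the definition (\ref{lemma4-1-content2}); the coefficient on $\bar{X}^k$ also degrades to $2+\tfrac{4\alpha^2c_2}{N}$. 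Since these constants are consumed verbatim inside $a_2$ and $a_4$ of Lemma \ref{lemma9} and hence inside the matrix $\boldsymbol{T}$, your version does not prove the lemma as stated and would force all downstream constants to be rewritten (the qualitative linear-rate conclusion would survive, since Theorem \ref{theorem-1} only needs sufficiently small $\alpha$). The repair is immediate: reuse your first-inequality derivation with the Young parameter left generic and make the second choice $p=\tfrac{1}{2\alpha}$, exactly as the paper does.
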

\newtheorem{comment}{Comment}
\begin{comment}
Note that (\ref{lemma5-1-content1}) is the second inequality in (\ref{proofn-2}).
While the inequality (\ref{lemma5-1-conteng2}) will be used in Lemma \ref{lemma9}.
\end{comment}
\begin{proof}
Define $\bar{\boldsymbol{g}}^{k}\triangleq\frac{1}{N} (\boldsymbol{1}_N^T\otimes
\boldsymbol{I}_d)\boldsymbol{g}^k$ as the average of $\boldsymbol{g}^k_i$s.
Since $\boldsymbol{y}^0=\boldsymbol{g}^0=\boldsymbol{0}$, according
to (\ref{PropAlg-compact-form}) it holds that
\begin{align}
&\textstyle\boldsymbol{y}^{1}\overset{(\ref{PropAlg-compact-form})}{=}
\bar{\boldsymbol{W}}\boldsymbol{y}^{0}+\boldsymbol{g}^{1}-\boldsymbol{g}^0
\Rightarrow
\boldsymbol{y}^{1}=\boldsymbol{g}^{1}
\nonumber\\
&\textstyle
\Rightarrow
\bar{\boldsymbol{W}}_{\infty}\boldsymbol{y}^{2}
\overset{(\ref{PropAlg-compact-form})}{=}
\bar{\boldsymbol{W}}_{\infty}(\bar{\boldsymbol{W}}
\boldsymbol{y}^{1}+\boldsymbol{g}^{2}-\boldsymbol{y}^1)
\overset{(a)}{\Rightarrow}
\bar{\boldsymbol{y}}^{2}=\bar{\boldsymbol{g}}^{2}
\label{lemma5-2-1}
\end{align}
where $(a)$ is because $\bar{\boldsymbol{W}}_{\infty}\boldsymbol{y}^{2}
=\bar{\boldsymbol{y}}^{2}$, $\bar{\boldsymbol{W}}_{\infty}
\bar{\boldsymbol{W}}=\bar{\boldsymbol{W}}_{\infty}$ and
$\bar{\boldsymbol{W}}_{\infty}\boldsymbol{g}^{2}=\bar{\boldsymbol{g}}^{2}$.
Using deduction we have
\begin{align}
\bar{\boldsymbol{g}}^{k}=\bar{\boldsymbol{y}}^{k}, \ \forall k.
\label{lemma5-2-2}
\end{align}
Multiplying $\frac{\boldsymbol{1}_N^T\otimes \boldsymbol{I}_d}{N}$
to the first line of (\ref{PropAlg-compact-form}) yields
\begin{align}
\bar{\boldsymbol{x}}^{k+1}\overset{(a)}{=}\bar{\boldsymbol{x}}^{k}-
\alpha\bar{\boldsymbol{y}}^{k}\overset{(b)}{=}\bar{\boldsymbol{x}}^{k}-
\alpha\bar{\boldsymbol{g}}^{k}
\label{appendix-a-1}
\end{align}
where $(a)$ is because $\frac{\boldsymbol{1}_N^T\otimes
\boldsymbol{I}_d}{N}\bar{\boldsymbol{W}}=\frac{\boldsymbol{1}_N^T
\otimes\boldsymbol{I}_d}{N}$ ($\boldsymbol{W}$ is doubly
stochastic), and $(b)$ is because $\bar{\boldsymbol{g}}^{k}=
\bar{\boldsymbol{y}}^{k}$. Then we have
\begin{align}
&\textstyle\mathbb{E}_{r^k}^{\mathcal{F}^{k-1}}
\{\bar{X}^{k+1}\}
\overset{(\ref{appendix-a-1})}{=}
\mathbb{E}_{r^k}^{\mathcal{F}^{k-1}}
\{\|\bar{\boldsymbol{x}}^{k}-\alpha\bar{\boldsymbol{g}}^{k}
-\boldsymbol{x}^*\|_2^2\}
\nonumber\\
&=\textstyle\mathbb{E}_{r^k}^{\mathcal{F}^{k-1}}
\{\|\bar{\boldsymbol{x}}^{k}-\alpha\nabla f(\bar{\boldsymbol{x}}^k)+
\alpha\nabla f(\bar{\boldsymbol{x}}^k)-\alpha\bar{\boldsymbol{g}}^{k}-
\boldsymbol{x}^*\|_2^2\}
\nonumber\\
&=\textstyle \|\bar{\boldsymbol{x}}^{k}-\boldsymbol{x}^*-
\alpha\nabla f(\bar{\boldsymbol{x}}^k)\|_2^2+
\mathbb{E}_{r^k}^{\mathcal{F}^{k-1}}
\Big\{\alpha^2\underbrace{\|\nabla f(\bar{\boldsymbol{x}}^k)
-\bar{\boldsymbol{g}}^{k}\|_2^2}_{\text{[(\ref{appendix-a-2})-2]}}
\nonumber\\
&\textstyle+2\alpha\underbrace{\langle\bar{\boldsymbol{x}}^{k}-
\boldsymbol{x}^*-\alpha\nabla f(\bar{\boldsymbol{x}}^k),
\nabla f(\bar{\boldsymbol{x}}^k)
-\bar{\boldsymbol{g}}^{k}\rangle}_{\text{[(\ref{appendix-a-2})-1]}}\Big\}
\nonumber\\
%=&\textstyle \|\bar{\boldsymbol{x}}^{k}-
%\boldsymbol{x}^*-\alpha\nabla f(\bar{\boldsymbol{x}}^k)\|_2^2
%+\mathbb{E}_{\{\boldsymbol{Q}_{ij}^{k},t_{ij}^{k}\}}
%\big\{2\alpha\cdot[(\ref{appendix-a-2})-1]+
%\alpha^2\cdot[(\ref{appendix-a-2})-2]|\mathcal{F}^{k-1}\big\}
%\nonumber\\
&\overset{(a)}{\leq}\textstyle (1-\mu\alpha)^2\bar{X}^k
+\mathbb{E}_{r^k}^{\mathcal{F}^{k-1}}
\{2\alpha\text{[(\ref{appendix-a-2})-1]}
+\alpha^2\text{[(\ref{appendix-a-2})-2]}\}
\label{appendix-a-2}
\end{align}
where $(a)$ invoked Lemma \ref{lemma1}.

\subsubsection{Bounding $\text{[(\ref{appendix-a-2})-1]}$ and $\text{[(\ref{appendix-a-2})-2]}$}
Regarding the term $\text{[(\ref{appendix-a-2})-1]}$, we have
\begin{align}
&\mathbb{E}_{r^k}^{\mathcal{F}^{k-1}}\{\text{[(\ref{appendix-a-2})-1]}\}
\nonumber\\
&\textstyle
=\mathbb{E}_{r^k}^{\mathcal{F}^{k-1}}
\{\langle\underbrace{\bar{\boldsymbol{x}}^{k}-\boldsymbol{x}^*-
\alpha\nabla f(\bar{\boldsymbol{x}}^k)}_{\text{[(\ref{appendix-a-4})-1]}},
\nabla f(\bar{\boldsymbol{x}}^k)-\bar{\boldsymbol{g}}^{k}\rangle\}
\nonumber\\
&=\textstyle  \mathbb{E}_{r^k}^{\mathcal{F}^{k-1}}
\Big\{\langle\text{[(\ref{appendix-a-4})-1]},\nabla f(\bar{\boldsymbol{x}}^k)-
\bar{\nabla} f(\boldsymbol{x}^k)\rangle+
\nonumber\\
&\textstyle
\qquad\langle\text{[(\ref{appendix-a-4})-1]},\bar{\nabla} f(\boldsymbol{x}^k)
-(\bar{\boldsymbol{g}}^{k}+\frac{1}{N}\sum_{i=1}^N
\sum_{j\in\mathcal{N}_i^{k}}\boldsymbol{\Delta}_{ij}^{k})\rangle
\nonumber\\
&\qquad\qquad+\textstyle
\underbrace{\textstyle\langle\text{[(\ref{appendix-a-4})-1]},
\frac{1}{N}\sum_{i=1}^N\sum_{j\in\mathcal{N}_i^{k}}
\boldsymbol{\Delta}_{ij}^{k}\rangle}_{\text{[(\ref{appendix-a-4})-2]}}
\Big\}
\nonumber\\
&\overset{(a)}{\leq}\textstyle \mathbb{E}_{r^k}^{\mathcal{F}^{k-1}}
\{\|\text{[(\ref{appendix-a-4})-1]}\|_2\cdot\|\nabla f(\bar{\boldsymbol{x}}^k)-
\bar{\nabla} f(\boldsymbol{x}^k)\|_2+\text{[(\ref{appendix-a-4})-2]}\}
\nonumber\\
&\overset{(b)}{\leq}\textstyle \mathbb{E}_{r^k}^{\mathcal{F}^{k-1}}
\{(1-\mu\alpha)\bar{X}^k\cdot
\|\nabla f(\bar{\boldsymbol{x}}^k)-\bar{\nabla} f(\boldsymbol{x}^k)\|_2
+\text{[(\ref{appendix-a-4})-2]}\}
\nonumber\\
&\overset{(c)}{\leq}\textstyle \mathbb{E}_{r^k}^{\mathcal{F}^{k-1}}
\Big\{\frac{p(1-\mu\alpha)}{2}\bar{X}^k+
\frac{(1-\mu\alpha)}{2p}\|\nabla f(\bar{\boldsymbol{x}}^k)-
\bar{\nabla} f(\boldsymbol{x}^k)\|_2^2
\nonumber\\
&\textstyle\qquad\qquad+\text{[(\ref{appendix-a-4})-2]}\Big\}
\nonumber\\
&\overset{(d)}{\leq}\textstyle\mathbb{E}_{r^k}^{\mathcal{F}^{k-1}}
\big\{\frac{p(1-\mu\alpha)}{2}\bar{X}^k+\frac{L^2}{2pN}X^k
+\text{[(\ref{appendix-a-4})-2]}\big\}, \forall p>0,
\label{appendix-a-4}
\end{align}
where $\bar{\nabla} f(\boldsymbol{x})
=\frac{1}{N}(\boldsymbol{1}_N^T\otimes\boldsymbol{I}_d) \nabla
f(\boldsymbol{x})$ is the average of $\nabla
f_i(\boldsymbol{x}_i)$s, and $(a)$ has invoked the Cauchy-Schwarz
inequality as well as the fact that
\begin{align}
\textstyle\mathbb{E}_{r^k}^{\mathcal{F}^{k-1}}
\{\bar{\nabla} f(\boldsymbol{x}^k)-(\bar{\boldsymbol{g}}^{k}
+\frac{1}{N}\sum_{i=1}^N\sum_{j\in\mathcal{N}_i^{k}}
\boldsymbol{\Delta}_{ij}^{k})\}=\boldsymbol{0},
%\label{appendix-a-4-1}
\nonumber
\end{align}
$(b)$ used Lemma \ref{lemma1}, $(c)$ uses the inequality
$\langle\boldsymbol{x},\boldsymbol{y}\rangle\leq \frac{p}{2}
\|\boldsymbol{x}\|_2^2+\frac{1}{2p}\|\boldsymbol{y}\|_2^2$, $\forall p>0$,
and $(d)$ invoked Lemma \ref{lemma3} as well as $1-\mu\alpha<1$.
For [(\ref{appendix-a-4})-2] we have
\begin{align}
&\textstyle
\mathbb{E}_{r^k}^{\mathcal{F}^{k-1}}\{\text{[(\ref{appendix-a-4})-2]}\}
%\nonumber\\
%&\textstyle\leq \mathbb{E}_{r^k}^{\mathcal{F}^{k-1}}\{
%\|[(\ref{appendix-a-4})-1]\|_2\cdot\|\sum_{i=1}^N\frac{1}{P_i}
%\sum_{j\in\mathcal{N}_i^{k}}\boldsymbol{\Delta}_{ij}^{k}\|_2\}
\overset{(a)}{\leq}
\mathbb{E}_{r^k}^{\mathcal{F}^{k-1}}\Big\{
(1-\mu\alpha)\bar{X}^k\|\frac{1}{N}\sum\limits_{i=1}^N
\sum\limits_{j\in\mathcal{N}_i^{k}}\boldsymbol{\Delta}_{ij}^{k}\|_2\Big\}
\nonumber\\
&\textstyle\leq
\frac{p(1-\mu\alpha)}{2}\bar{X}^k+
\mathbb{E}_{r^k}^{\mathcal{F}^{k-1}}\Big\{\frac{(1-\mu\alpha)}{2p}
\|\frac{1}{N}\sum_{i=1}^N\sum_{j\in\mathcal{N}_i^{k}}
\boldsymbol{\Delta}_{ij}^{k}\|_2^2\Big\}
\nonumber\\
&\overset{(b)}{\leq} \textstyle \frac{p(1-\mu\alpha)}{2}\bar{X}^k+
\frac{1}{2pN}\sum_{i=1}^NP_i\sum_{j=1}^{P_i}\rho\|\boldsymbol{x}_i^{k}-
[\bar{\boldsymbol{W}}]_{i}\boldsymbol{x}^{k}\|_2^2
\nonumber\\
%&\leq \textstyle \frac{p(1-\mu\alpha)}{2}
%\bar{X}^k+\frac{N\rho}{2p}\sum_{i=1}^N\|\boldsymbol{x}_i^{k}-
%[\bar{\boldsymbol{W}}]_{i}\boldsymbol{x}^{k}\|_2^2
%\nonumber\\
&\textstyle\overset{(\ref{lemmaX-14})}{\leq} \textstyle \frac{p(1-\mu\alpha)}{2}
\bar{X}^k+\frac{\rho(1+\sigma^2)\overline{P_i}^2}{pN}X^k, \ \forall p>0,
\label{appendix-a-4-2}
\end{align}
where $(a)$ has invoked the Cauchy-Schwarz inequality as well as
Lemma \ref{lemma1}, and $(b)$ used (\ref{def-multi-vect-inequal})
and (\ref{lemmaX-2}). As for $\text{[(\ref{appendix-a-2})-2]}$, we
have
\begin{align}
&\textstyle\mathbb{E}_{r^k}^{\mathcal{F}^{k-1}}
\{\text{[(\ref{appendix-a-2})-2]}\}
\nonumber\\
&\textstyle
\overset{(\ref{def-triangle-inequal})}{\leq}
\textstyle2\|\nabla f(\bar{\boldsymbol{x}}^k)-
\bar{\nabla} f(\boldsymbol{x}^k)\|_2^2
+\mathbb{E}_{r^k}^{\mathcal{F}^{k-1}}
\{2\|\bar{\nabla} f(\boldsymbol{x}^k)-\bar{\boldsymbol{g}}^{k}\|_2^2\}
%\nonumber\\
%&\overset{(a)}{=}\textstyle 2\|\nabla f(\bar{\boldsymbol{x}}^k)-
%\bar{\nabla} f(\boldsymbol{x}^k)\|_2^2
%+\mathbb{E}_{r^k}^{\mathcal{F}^{k-1}}
%\{2\|\frac{1}{N}\sum\limits_{i=1}^N(\nabla f_i(\boldsymbol{x}_i^k)-
%\boldsymbol{g}_i^{k})\|_2^2\}
\nonumber\\
&\overset{(a)}{\leq}\textstyle\frac{2L^2}{N}X^k+
\mathbb{E}_{r^k}^{\mathcal{F}^{k-1}}\{\frac{2}{N}
\sum_{i=1}^N\|\boldsymbol{g}_i^k-\nabla f_i(\boldsymbol{x}^k)\|_2^2\}
\nonumber\\
&=\textstyle\frac{2L^2}{N}X^k+
\mathbb{E}_{r^k}^{\mathcal{F}^{k-1}}\{\frac{2}{N}
\|\boldsymbol{g}^k-\nabla f(\boldsymbol{x}^k)\|_2^2\}
\label{appendix-a-5}
\end{align}
where $(a)$ comes from the definitions of $\bar{\nabla} f(\boldsymbol{x}^k)$
and $\bar{\boldsymbol{g}}^{k}$, Lemma \ref{lemma3} as well as
(\ref{def-multi-vect-inequal}).

\subsubsection{Combining}
Substituting (\ref{appendix-a-4}), (\ref{appendix-a-4-2}) and
(\ref{appendix-a-5}) into (\ref{appendix-a-2}) yields
\begin{align}
&\textstyle\mathbb{E}_{r^k}^{\mathcal{F}^{k-1}}
\{\|\bar{\boldsymbol{x}}^{k+1}-\boldsymbol{x}^*\|_2^2\}
\leq \underbrace{\textstyle\big((1-\mu\alpha)^2+
2\alpha p(1-\mu\alpha)\big)\bar{X}^k}_{\text{[(\ref{appendix-a-7})-1]}}
\nonumber\\
&\textstyle
+\underbrace{\textstyle\big(\frac{\alpha L^2}{pN}+
\frac{2\alpha\rho(1+\sigma^2)\overline{P_i}^2}{pN}+
\frac{2\alpha^2L^2}{N}\big)X^k}_{\text{[(\ref{appendix-a-7})-2]}}
\nonumber\\
&\textstyle+\mathbb{E}_{r^k}^{\mathcal{F}^{k-1}}
\{\frac{2\alpha^2}{N}\|\boldsymbol{g}^k-\nabla f(\boldsymbol{x}^k)\|_2^2\}
\nonumber\\
&\overset{(\ref{lemmaX-7})}{\leq}\textstyle \text{[(\ref{appendix-a-7})-1]}
+\text{[(\ref{appendix-a-7})-2]}+
\frac{2\alpha^2}{N}(c_1X^k+c_2\bar{X}^k+c_3D^{k-1})
\label{appendix-a-7}
\end{align}
In (\ref{appendix-a-7}), respectively setting $p=\frac{\mu}{2}$ and
$p=\frac{1}{2\alpha}$ yields
\begin{align}
&\textstyle\mathbb{E}_{r^k}^{\mathcal{F}^{k-1}}
\{\bar{X}^{k+1}\}
 \leq b_1\cdot\bar{X}^{k}+b_2\cdot X^k+b_3\cdot D^{k-1},
\label{appendix-a-8}\\
%\end{align}
%\noindent
%\begin{align}
&\textstyle\mathbb{E}_{r^k}^{\mathcal{F}^{k-1}}\{\bar{X}^{k+1}\}
\overset{(a)}{\leq}\textstyle (2+\frac{2c_2}{L^2})\bar{X}^k
+\bar{b} X^k+b_3 D^{k-1}
\label{appendix-a-9}
\end{align}
where in $(a)$ we used the assumption that $\alpha\leq\frac{1}{L}$,
and $b_2$, $b_3$ and $\bar{b}$ are defined in (\ref{lemma5-1-conteng3}).
Taking the full expectation for both sides of (\ref{appendix-a-8})
and (\ref{appendix-a-9}) yields the desired results.
\end{proof}

\section{Proving the Third Inequality in (\ref{proofn-2})}
\label{appendix-C}
The third inequality in (\ref{proofn-2}), i.e., (\ref{lemma5-1-content}),
is proved in the following lemma.
\newtheorem{lemma8-1}{Lemma}
\begin{lemma}
\label{lemma5-1}
Suppose the assumptions in Section \ref{sec-obj-assumption} hold. Also
let $(\boldsymbol{x}^{k+1}, \boldsymbol{y}^{k+1})$ be generated by
Algorithm \ref{alg-4}. Then
\begin{align}
&\textstyle\mathbb{E}\{D^{k}\}\leq
\mathbb{E}\big\{ (1-\frac{1}{\overline{S_{ij}}}) D^{k-1}
+ 2\overline{P_i}X^k+2\overline{P_i}N\cdot\bar{X}^k\big\}
\label{lemma5-1-content}
\end{align}
\end{lemma}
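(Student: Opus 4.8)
The plan is to compute the one-step conditional expectation of $D^k$ over the random index draws $r^k=\{t_{ij}^k\}$ and then reduce the resulting server-side term to the consensus metrics $X^k$ and $\bar X^k$. Recall from Step 3.1 of Algorithm \ref{alg-4} that each user $u_{ij}$ draws $t_{ij}^k$ uniformly from $\{1,\dots,S_{ij}\}$ and refreshes a single reference point via $\boldsymbol{\phi}_{ij,t_{ij}^k}^{k}=\boldsymbol{x}_i^{k}$, leaving all others unchanged, i.e. $\boldsymbol{\phi}_{ij,t}^{k}=\boldsymbol{\phi}_{ij,t}^{k-1}$ for $t\neq t_{ij}^k$. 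Conditioning on $\mathcal{F}^{k-1}$ (so $\boldsymbol{x}_i^k$ is deterministic), for a fixed pair $(i,j)$ and a fixed $t$ the event that $t$ is the drawn index has probability $1/S_{ij}$, which gives
\begin{align}
\mathbb{E}_{r^k}^{\mathcal{F}^{k-1}}\{\|\boldsymbol{x}^*-\boldsymbol{\phi}_{ij,t}^{k}\|_2^2\}
=\tfrac{1}{S_{ij}}\|\boldsymbol{x}^*-\boldsymbol{x}_i^{k}\|_2^2
+\big(1-\tfrac{1}{S_{ij}}\big)\|\boldsymbol{x}^*-\boldsymbol{\phi}_{ij,t}^{k-1}\|_2^2 .
\end{align}

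Next I would sum this identity over $t=1,\dots,S_{ij}$; the first term collapses since there are $S_{ij}$ copies of $\tfrac{1}{S_{ij}}\|\boldsymbol{x}^*-\boldsymbol{x}_i^{k}\|_2^2$. Summing the result over all $(i,j)$, and bounding each aging factor $1-\tfrac{1}{S_{ij}}$ by $1-\tfrac{1}{\overline{S_{ij}}}$ (using $S_{ij}\leq\overline{S_{ij}}$), yields
\begin{align}
\mathbb{E}_{r^k}^{\mathcal{F}^{k-1}}\{D^{k}\}
\leq \sum_{i=1}^N P_i\|\boldsymbol{x}^*-\boldsymbol{x}_i^{k}\|_2^2
+\big(1-\tfrac{1}{\overline{S_{ij}}}\big)D^{k-1}.
\end{align}

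The final step is to control the server-side term. I would use $\sum_{i=1}^N P_i\|\boldsymbol{x}^*-\boldsymbol{x}_i^k\|_2^2\leq\overline{P_i}\sum_{i=1}^N\|\boldsymbol{x}^*-\boldsymbol{x}_i^k\|_2^2$ together with the consensus splitting $\|\boldsymbol{x}^*-\boldsymbol{x}_i^k\|_2^2\leq 2\|\boldsymbol{x}^*-\bar{\boldsymbol{x}}^k\|_2^2+2\|\bar{\boldsymbol{x}}^k-\boldsymbol{x}_i^k\|_2^2$ from (\ref{def-triangle-inequal}). Summing over $i$ and recognizing $\sum_{i}\|\boldsymbol{x}_i^k-\bar{\boldsymbol{x}}^k\|_2^2=\|\boldsymbol{x}^k-\bar{\boldsymbol{W}}_{\infty}\boldsymbol{x}^k\|_2^2=X^k$ and $\|\boldsymbol{x}^*-\bar{\boldsymbol{x}}^k\|_2^2=\bar X^k$ gives $\sum_{i=1}^N\|\boldsymbol{x}^*-\boldsymbol{x}_i^k\|_2^2\leq 2X^k+2N\bar X^k$. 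Substituting back produces $2\overline{P_i}X^k+2\overline{P_i}N\bar X^k+(1-\tfrac{1}{\overline{S_{ij}}})D^{k-1}$, and taking the full expectation yields (\ref{lemma5-1-content}). This argument is essentially the standard SAGA reference-point recursion, so I expect no genuine obstacle; the only point demanding care is correctly accounting for the refresh probability $1/S_{ij}$ and verifying the collapse of the first term, after which the remainder is routine consensus bookkeeping.
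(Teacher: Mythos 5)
Your proposal is correct and follows essentially the same route as the paper's proof: the same one-step conditional expectation giving the $\tfrac{1}{S_{ij}}$ refresh / $\bigl(1-\tfrac{1}{S_{ij}}\bigr)$ aging decomposition of each reference point, the same collapse of the refreshed term to $\sum_i P_i\|\boldsymbol{x}^*-\boldsymbol{x}_i^k\|_2^2$, and the same consensus splitting into $2\overline{P_i}X^k+2\overline{P_i}N\bar{X}^k$ before taking full expectation. No gaps.
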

\begin{proof}
For $D^{k}$ we have
\begin{align}
&\textstyle\mathbb{E}_{r^k}^{\mathcal{F}^{k-1}}\{D^{k}\}
=\mathbb{E}_{\{t_{ij}^{k}\}}^{\mathcal{F}^{k-1}}
\{\sum_{i=1}^{N}\sum_{j=1}^{P_i}\sum_{t_{ij}^{k}=1}^{S_{ij}}
\|\boldsymbol{x}^*-\boldsymbol{\phi}_{ij,t_{ij}^{k}}^{k}\|_2^2\}
\nonumber\\
&\textstyle= \sum\limits_{i=1}^{N}\sum\limits_{j=1}^{P_i}\sum\limits_{t=1}^{S_{ij}}
\big((1-\frac{1}{S_{ij}})\|\boldsymbol{x}^*-\boldsymbol{\phi}_{ij,t}^{k-1}\|_2^2
+\frac{1}{S_{ij}}\|\boldsymbol{x}^*-\boldsymbol{x}_{i}^{k}\|_2^2\big)
\nonumber\\
&\textstyle\leq (1-\frac{1}{\overline{S_{ij}}}) D^{k-1}
+ 2\overline{P_i}\sum_{i=1}^N (\|\boldsymbol{x}_i^{k}-
\bar{\boldsymbol{x}}^{k}\|_2^2+\|\bar{\boldsymbol{x}}^{k}-\boldsymbol{x}^*\|_2^2)
\nonumber\\
&\textstyle= (1-\frac{1}{\overline{S_{ij}}}) D^{k-1}
+ 2\overline{P_i}X^k+2\overline{P_i}N\cdot \bar{X}^k
\label{lemmaY-1}
\end{align}
where $\overline{S_{ij}}=\max_{i,j}\{S_{ij}\}$ and $\overline{P_i}=
\max_{i}\{P_{i}\}$.
\end{proof}

\section{Proving the Forth Inequality in (\ref{proofn-2})}
\label{appendix-D}
The forth inequality in (\ref{proofn-2}), i.e., (\ref{lemma9-content1}),
is proved in the following lemma.
\newtheorem{lemma7}{Lemma}
\begin{lemma}
\label{lemma9}
Suppose the assumptions in Section \ref{sec-obj-assumption} hold.
Let $(\boldsymbol{x}^{k+1},\boldsymbol{y}^{k+1})$ be generated
by Algorithm \ref{alg-4}. Also assume that
$\alpha<\frac{1}{4\sqrt{2}L}$. Then
\begin{align}
\textstyle\mathbb{E}\{Y^{k+1}\}
\leq\mathbb{E}\big\{a_1 X^k+a_2 \bar{X}^k+
a_3 Y^k+a_4 D^{k-1}\big\}
\label{lemma9-content1}
\end{align}
where
\begin{align}
&\textstyle a_1=\frac{25L^2(1+\sigma^2)+4(1+\sigma^2)c_1
+3(1+\sigma^2)(2c_1\sigma^2+c_2\bar{b}+
2c_3\overline{P_i}N)}{1-\sigma^2},
\nonumber\\
&\textstyle
a_2=\frac{NL^2(1+\sigma^2)+4(1+\sigma^2)c_2+
3(1+\sigma^2)(c_2(2+\frac{2c_2}{L^2})+
2c_3\overline{P_i}N)}{1-\sigma^2},
\nonumber\\
&\textstyle a_3=\frac{1+\sigma^2}{2}+
\frac{24\alpha^2 L^2(1+\sigma^2)}{1-\sigma^2}
+\frac{6\alpha^2c_1(1+\sigma^2)}{1-\sigma^2},
\nonumber\\
&\textstyle
a_4=\frac{4(1+\sigma^2)c_3}{1-\sigma^2}
+\frac{3(1+\sigma^2)(b_3c_2+c_3(1-\overline{S_{ij}}^{-1}))}{1-\sigma^2}
\label{lemma9-content2}
\end{align}
\end{lemma}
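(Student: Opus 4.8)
The plan is to mirror the standard gradient-tracking consensus estimate, with the extra care that the aggregated gradient $\boldsymbol{g}^{k+1}$ produced by the CTUS step is biased. First I would take the $\boldsymbol{y}$-update in (\ref{PropAlg-compact-form}) and, using $\bar{\boldsymbol{W}}_{\infty}\bar{\boldsymbol{W}}=\bar{\boldsymbol{W}}_{\infty}$, write
\[
\boldsymbol{y}^{k+1}-\bar{\boldsymbol{W}}_{\infty}\boldsymbol{y}^{k+1}
=(\bar{\boldsymbol{W}}-\bar{\boldsymbol{W}}_{\infty})\boldsymbol{y}^{k}
+(\boldsymbol{I}-\bar{\boldsymbol{W}}_{\infty})(\boldsymbol{g}^{k+1}-\boldsymbol{g}^{k}).
\]
Applying the splitting inequality (\ref{def-triangle-inequal}) with $p=\frac{2\sigma^2}{1-\sigma^2}$, the contraction of Lemma \ref{lemma2} on the first summand, and $\|\boldsymbol{I}-\bar{\boldsymbol{W}}_{\infty}\|_2=1$ on the second, gives the recursion skeleton
\[
Y^{k+1}\leq \tfrac{1+\sigma^2}{2}\,Y^{k}
+\tfrac{1+\sigma^2}{1-\sigma^2}\,\|\boldsymbol{g}^{k+1}-\boldsymbol{g}^{k}\|_2^2 .
\]
This already explains the leading $\frac{1+\sigma^2}{2}$ in $a_3$ and the common factor $\frac{1+\sigma^2}{1-\sigma^2}$ in every $a_i$; the whole task reduces to bounding $\|\boldsymbol{g}^{k+1}-\boldsymbol{g}^{k}\|_2^2$ by the four state quantities at time $k$.

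For the gradient-innovation term I would insert the true gradients and use (\ref{def-multi-vect-inequal}) to split it three ways,
\[
\|\boldsymbol{g}^{k+1}-\boldsymbol{g}^{k}\|_2^2\leq
3\|\boldsymbol{g}^{k+1}-\nabla f(\boldsymbol{x}^{k+1})\|_2^2
+3\|\nabla f(\boldsymbol{x}^{k+1})-\nabla f(\boldsymbol{x}^{k})\|_2^2
+3\|\nabla f(\boldsymbol{x}^{k})-\boldsymbol{g}^{k}\|_2^2 .
\]
The two outer variance terms are precisely what Lemma \ref{lemma4-1} controls: (\ref{lemma4-1-content}) at index $k$ gives $c_1X^{k}+c_2\bar{X}^{k}+c_3D^{k-1}$, and the same bound at index $k+1$ gives $c_1X^{k+1}+c_2\bar{X}^{k+1}+c_3D^{k}$. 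The middle term I would bound by Lipschitz continuity as $L^2\|\boldsymbol{x}^{k+1}-\boldsymbol{x}^{k}\|_2^2$ and then, from the $\boldsymbol{x}$-update, control $\boldsymbol{x}^{k+1}-\boldsymbol{x}^{k}$ by splitting it along the consensus subspace (yielding $X^{k+1}$ and $X^{k}$) and the averaging subspace, the latter equal to $\alpha^2\|\bar{\boldsymbol{g}}^{k}\|_2^2$ by (\ref{appendix-a-1}) and $\bar{\boldsymbol{g}}^{k}=\bar{\boldsymbol{y}}^{k}$. I would bound $\|\bar{\boldsymbol{g}}^{k}\|_2^2$ exactly as in the proof of Lemma \ref{lemma5}: insert $\bar{\nabla}f(\boldsymbol{x}^{k})$ and $\nabla f(\bar{\boldsymbol{x}}^{k})$, use Lemma \ref{lemma3}, the estimate $\|\nabla f(\bar{\boldsymbol{x}}^{k})\|_2^2\leq L^2\bar{X}^{k}$ (as $\nabla f(\boldsymbol{x}^*)=\boldsymbol{0}$), and Lemma \ref{lemma4-1} once more.

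The decisive point is that the terms carried at the future index $k+1$ must be pushed back to index $k$, for otherwise the scalar recursion would couple $Y^{k+1}$ to $X^{k+1}$, $\bar{X}^{k+1}$, $D^{k}$ and could not be assembled into the linear system (\ref{proofn-2}). This is exactly what the \emph{non-contracting} companion bounds are for: I would substitute (\ref{lemma10-content2}) for $X^{k+1}$, the looser estimate (\ref{lemma5-1-conteng2}) for $\bar{X}^{k+1}$, and Lemma \ref{lemma5-1}, i.e. (\ref{lemma5-1-content}), for $D^{k}$. After these back-substitutions every contribution lives at time $k$; in particular $Y^{k}$ re-enters only through the $2\alpha^2Y^{k}$ term inside (\ref{lemma10-content2}), which is why $a_3$ carries $\frac{1+\sigma^2}{2}$ plus $O(\alpha^2)$ corrections while $a_1$, $a_2$, $a_4$ are stepsize-free once the remaining $\alpha^2$-weighted pieces are dominated using $\alpha\leq\frac{1}{4\sqrt{2}L}$. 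Collecting the coefficients of $X^{k}$, $\bar{X}^{k}$, $Y^{k}$, $D^{k-1}$ and taking full expectation then produces (\ref{lemma9-content1}).

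I expect the main obstacle to be the constant bookkeeping in this final collection, where each coefficient threads through three nested layers — the $Y^{k+1}$ split, the three-way split of $\|\boldsymbol{g}^{k+1}-\boldsymbol{g}^{k}\|_2^2$, and the back-substitution of the $k+1$-indexed quantities — and where the $\alpha^2$-weighted contributions of $X^{k+1}$, $\bar{X}^{k+1}$, $D^{k}$ and of $\|\bar{\boldsymbol{g}}^{k}\|_2^2$ must be uniformly bounded so they can be folded into the constants $a_1$, $a_2$, $a_4$. The conceptual check that makes the lemma legitimate is verifying that $\|\boldsymbol{g}^{k+1}-\boldsymbol{g}^{k}\|_2^2$ never reaches beyond index $k$ after substitution, so that no $Y^{k+1}$ reappears on the right-hand side.
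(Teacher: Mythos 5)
Your proposal follows essentially the same route as the paper's proof: the same contraction step giving $\frac{1+\sigma^2}{2}Y^k+\frac{1+\sigma^2}{1-\sigma^2}\|\boldsymbol{g}^{k+1}-\boldsymbol{g}^k\|_2^2$, the same three-way split of the gradient innovation into $G^{k+1}$, $G^k$ and $L^2\|\boldsymbol{x}^{k+1}-\boldsymbol{x}^{k}\|_2^2$ controlled by Lemma \ref{lemma4-1}, and the same back-substitution of the $(k+1)$-indexed quantities via (\ref{lemma10-content2}), (\ref{lemma5-1-conteng2}) and (\ref{lemma5-1-content}). The only deviation is minor: you bound $\|\boldsymbol{x}^{k+1}-\boldsymbol{x}^{k}\|_2^2$ by splitting along the consensus and averaging subspaces (introducing $X^{k+1}$, later pushed back with (\ref{lemma10-content2}), and $\alpha^2 N\|\bar{\boldsymbol{g}}^k\|_2^2$), whereas the paper writes $\boldsymbol{x}^{k+1}-\boldsymbol{x}^{k}=(\bar{\boldsymbol{W}}-\boldsymbol{I})(\boldsymbol{x}^{k}-\bar{\boldsymbol{W}}_{\infty}\boldsymbol{x}^{k})-\alpha\boldsymbol{y}^k$ and bounds $\|\boldsymbol{y}^k\|_2^2$ directly at index $k$; both yield the same structural bound, only with slightly different numerical constants (and note that $a_1$, $a_4$ are not literally stepsize-free, since $\bar{b}$ and $b_3$ carry $\alpha^2$).
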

\begin{proof}
The proof of this lemma is split into three parts. The first part provides
an upper bound of $\mathbb{E}\{Y^{k+1}\}$. However, the term
$G^{k+1}\triangleq\|\boldsymbol{g}^{k+1}-\nabla f(\boldsymbol{x}^{k+1})\|_2^2$
contained in this bound needs to be further bounded. After
bounding $G^{k+1}$ in the second part, the desired result is proved in
the third part.

\emph{Part I}: According to the second line of (\ref{PropAlg-compact-form}),
we have
\begin{align}
&\textstyle Y^{k+1}\overset{(a)}{=} %\triangleq\|\boldsymbol{y}^{k+1}-
%\bar{\boldsymbol{W}}_{\infty}\boldsymbol{y}^{k+1}\|_2^2
%=\|\bar{\boldsymbol{W}}\boldsymbol{y}^{k}
%+\boldsymbol{g}^{k+1}-\boldsymbol{g}^k-\bar{\boldsymbol{W}}_{\infty}
%(\bar{\boldsymbol{W}}\boldsymbol{y}^{k}
%+\boldsymbol{g}^{k+1}-\boldsymbol{g}^k)\|_2^2
%\nonumber\\
%\overset{(a)}{=}&\textstyle
\|(\bar{\boldsymbol{W}}-
\bar{\boldsymbol{W}}_{\infty})\boldsymbol{y}^{k}
+(\boldsymbol{I}-\bar{\boldsymbol{W}}_{\infty})(\boldsymbol{g}^{k+1}
-\boldsymbol{g}^k))\|_2^2
\nonumber\\
&\overset{(\ref{def-triangle-inequal})}
{\leq}\textstyle(1+p)\|(\bar{\boldsymbol{W}}-
\bar{\boldsymbol{W}}_{\infty})\boldsymbol{y}^{k}\|_2^2
\nonumber\\
&\textstyle+(1+\frac{1}{p})\|(\boldsymbol{I}-\bar{\boldsymbol{W}}_{\infty})
(\boldsymbol{g}^{k+1}-\boldsymbol{g}^k))\|_2^2
\nonumber\\
&\overset{(b)}{\leq}\textstyle(1+p)\sigma^2 Y^k
+(1+p^{-1})\|\boldsymbol{g}^{k+1}-\boldsymbol{g}^k\|_2^2
\nonumber\\
&\overset{(c)}{=}%\textstyle\frac{1+\sigma^2}{2}\cdot Y^k
%+\frac{1+\sigma^2}{1-\sigma^2}\|\boldsymbol{g}^{k+1}-
%\boldsymbol{g}^k\|_2^2
%\nonumber\\
%=&
\textstyle\frac{1+\sigma^2}{2} Y^k
+\frac{1+\sigma^2}{1-\sigma^2}\|\boldsymbol{g}^{k+1}-\boldsymbol{g}^k+
\nabla f(\boldsymbol{x}^{k+1})-\nabla f(\boldsymbol{x}^{k+1})
\nonumber\\
&\textstyle\qquad\qquad\qquad\qquad\qquad-\nabla f(\boldsymbol{x}^{k})+
\nabla f(\boldsymbol{x}^{k})\|_2^2
\nonumber\\
&\overset{(d)}{\leq}\textstyle\frac{1+\sigma^2}{2} Y^k
+\frac{3(1+\sigma^2)}{1-\sigma^2}(G^{k+1}+G^k
+L^2\|\boldsymbol{x}^{k+1}-\boldsymbol{x}^{k}\|_2^2)
\label{lemma4-4}
\end{align}
where $(a)$ is because of $\bar{\boldsymbol{W}}_{\infty}
\bar{\boldsymbol{W}}=\bar{\boldsymbol{W}}_{\infty}$, $(b)$ is
because of Lemma \ref{lemma2} as well as $\|\boldsymbol{I}-
\bar{\boldsymbol{W}}_{\infty}\|_2=1$, $(c)$ is obtained by
setting $p=\frac{1-\sigma^2}{2\sigma^2}$, and $(d)$ used
(\ref{def-multi-vect-inequal}) as well as the Lipschitz continuity
of $\nabla f$. Regarding $\|\boldsymbol{x}^{k+1}-\boldsymbol{x}^{k}\|_2^2$,
we have
\begin{align}
&\|\boldsymbol{x}^{k+1}-\boldsymbol{x}^{k}\|_2^2
%\|\bar{\boldsymbol{W}}\boldsymbol{x}^{k}-
%\alpha\boldsymbol{y}^k-\boldsymbol{x}^{k}+
%\bar{\boldsymbol{W}}_{\infty}\boldsymbol{x}^{k}-
%\bar{\boldsymbol{W}}_{\infty}\boldsymbol{x}^{k}\|_2^2
%\nonumber\\
\overset{(a)}{=}\|(\bar{\boldsymbol{W}}-\boldsymbol{I})
(\boldsymbol{x}^{k}-\bar{\boldsymbol{W}}_{\infty}\boldsymbol{x}^{k})-
\alpha\boldsymbol{y}^k\|_2^2
\nonumber\\
&\overset{(b)}{\leq} 8X^k
+2\alpha^2\|\boldsymbol{y}^k\|_2^2
\nonumber\\
&\textstyle\overset{(c)}{=}8X^k
+2\alpha^2\|\boldsymbol{y}^k-\bar{\boldsymbol{W}}_{\infty}(\boldsymbol{y}^k
-\boldsymbol{g}^k+\nabla f(\boldsymbol{x}^k)-\nabla f(\boldsymbol{x}^k)
\nonumber\\
&\qquad\qquad\qquad+\nabla f(\boldsymbol{x}^*))\|_2^2
\nonumber\\
&\textstyle\overset{(d)}{\leq}8X^k+2\alpha^2(\sqrt{Y^k}
+\sqrt{G^k}+
L\|\boldsymbol{x}^k-(\boldsymbol{1}_N\otimes\boldsymbol{I}_d)
\boldsymbol{x}^*\|_2)^2
\nonumber\\
&\textstyle\overset{(e)}{\leq}8X^k+2\alpha^2(\sqrt{Y^k}
+\sqrt{G^k}+L\sqrt{X^k}+L\sqrt{N}\sqrt{\bar{X}^k})^2
\nonumber\\
&\textstyle\overset{(\ref{def-multi-vect-inequal})}{\leq}8X^k+
8\alpha^2(Y^k+G^k+L^2X^k+L^2 N \bar{X}^k)
\label{lemma4-6}
\end{align}
where $(a)$ is because of (\ref{PropAlg-compact-form}) and
$\bar{\boldsymbol{W}}_{\infty}\bar{\boldsymbol{W}}=
\bar{\boldsymbol{W}}_{\infty}$, $(b)$ used
(\ref{def-multi-vect-inequal}) and
$\|\bar{\boldsymbol{W}}-\boldsymbol{I}\|_2\leq 2$ (using triangle
inequality with $\|\bar{\boldsymbol{W}}\|_2<1$), $(c)$ is because
of
\begin{align}
\bar{\boldsymbol{W}}_{\infty}\boldsymbol{y}^k=(\boldsymbol{1}_N
\otimes\boldsymbol{I}_d)\bar{\boldsymbol{y}}^{k}
\overset{(\ref{lemma5-2-2})}{=}(\boldsymbol{1}_N
\otimes\boldsymbol{I}_d)\bar{\boldsymbol{g}}^{k}=
\bar{\boldsymbol{W}}_{\infty}\boldsymbol{g}^k
\label{lemma4-7}
\end{align}
as well as $\bar{\boldsymbol{W}}_{\infty}\nabla f(\boldsymbol{x}^*)
=\boldsymbol{0}$, $(d)$ used the triangle inequality and the fact
that $\|\bar{\boldsymbol{W}}_{\infty}\|_2=1$ as well as the Lipschitz
continuity of $\nabla f$, $(e)$ is due to
\begin{align}
&\|\boldsymbol{x}^k-(\boldsymbol{1}_N\otimes\boldsymbol{I}_d)\boldsymbol{x}^*\|_2
%=\|\boldsymbol{x}^k-(\boldsymbol{1}_N\otimes\boldsymbol{I}_d)\boldsymbol{x}^*
%+\underbrace{\bar{\boldsymbol{W}}_{\infty}\boldsymbol{x}^k-(\boldsymbol{1}_N
%\otimes\boldsymbol{I}_d)\bar{\boldsymbol{x}}^k}_{=\boldsymbol{0}}\|_2
\leq
\|\boldsymbol{x}^k-(\boldsymbol{1}_N\otimes\boldsymbol{I}_d)\bar{\boldsymbol{x}}^k\|_2
\nonumber\\
&+\|(\boldsymbol{1}_N\otimes\boldsymbol{I}_d)(\bar{\boldsymbol{x}}^k-
\boldsymbol{x}^*)\|_2
\leq \sqrt{X^k}+\sqrt{N}\sqrt{\bar{X}^k}.
\end{align}
Substituting (\ref{lemma4-6}) into (\ref{lemma4-4}) and also using the
assumption $24L^2\alpha^2\leq 1$ we obtain
\begin{align}
&\textstyle Y^{k+1}
%\leq (\frac{1+\sigma^2}{2}+\frac{24\alpha^2 L^2(1+\sigma^2)}{1-\sigma^2})
%Y^k+\frac{24L^2(L^2\alpha^2+1)(1+\sigma^2)}{1-\sigma^2}X^k+
%\frac{24L^4\alpha^2N(1+\sigma^2)}{1-\sigma^2}\bar{X}^k
%\nonumber\\
%&\textstyle +
%\Big(\frac{3(1+\sigma^2)}{1-\sigma^2}+
%\frac{24L^2\alpha^2(1+\sigma^2)}{1-\sigma^2}\Big)
%\|\boldsymbol{g}^k-\nabla f(\boldsymbol{x}^k)\|_2^2+
%\frac{3(1+\sigma^2)}{1-\sigma^2}
%\|\boldsymbol{g}^{k+1}-\nabla f(\boldsymbol{x}^{k+1})\|_2^2
%\nonumber\\
\leq (1+\sigma^2)\big((\frac{1}{2}+
\frac{24\alpha^2 L^2}{1-\sigma^2})Y^k+\frac{25L^2}{1-\sigma^2}X^k
+\frac{NL^2}{1-\sigma^2}\bar{X}^k+
\nonumber\\
&\textstyle\frac{4}{1-\sigma^2}G^k+\frac{3}{1-\sigma^2}G^{k+1}\big).
\label{lemma4-8}
\end{align}

\emph{Part II}: In (\ref{lemma4-8}), the term $G^{k+1}$ can be
bounded by
\begin{align}
&\textstyle\mathbb{E}\{G^{k+1}\}
\overset{(\ref{lemma4-1-content})}{\leq}
\mathbb{E}\{c_1X^{k+1}+c_2\bar{X}^{k+1}+c_3 D^{k}\}
\nonumber\\
%&\textstyle\overset{(\ref{lemma10-content2}),(\ref{lemma5-1-conteng2}),
%(\ref{lemma5-1-content})}{\leq}
%\mathbb{E}\Big\{c_1\cdot \big(2\sigma^2\cdot X^{k}+2\alpha^2\cdot Y^k\big)
%+c_2\cdot \big((2+\frac{2c_2}{L^2})\bar{X}^{k}
%+\bar{b}\cdot X^k+b_3\cdot D^{k-1}\big)
%\nonumber\\
%&\textstyle +c_3\cdot \big( (1-\frac{1}{\overline{S_{ij}}}) D^{k-1}
%+ \frac{2}{\underline{S_{ij}}}X^k+\frac{2}{\underline{S_{ij}}}\bar{X}^k\big)\Big\}
%\nonumber\\
&\overset{(a)}{\leq}\textstyle \mathbb{E}\Big\{ \big(2c_1\sigma^2+c_2\bar{b}+
2c_3\overline{P_i}) X^k+
(c_2(2+\frac{2c_2}{L^2})+2c_3\overline{P_i}N)\bar{X}^k
\nonumber\\
&\textstyle \qquad+(b_3c_2+c_3(1-\frac{1}{\overline{S_{ij}}}))
\cdot D^{k-1}+2\alpha^2c_1\cdot Y^k\Big\}
\label{lemmaZ-1}
\end{align}
where $(a)$ is obtained by invoking (\ref{lemma10-content2}),
(\ref{lemma5-1-conteng2}) and (\ref{lemma5-1-content}).

\emph{Part III}: Substituting (\ref{lemma4-1-content}) and
(\ref{lemmaZ-1}) into (\ref{lemma4-8}) yields
\begin{align}
&\textstyle\mathbb{E}\{Y^{k+1}\}
%\leq\mathbb{E}\Big\{(1+\sigma^2)(\frac{1}{2}+
%\frac{24\alpha^2 L^2}{1-\sigma^2})Y^k+\frac{25L^2(1+\sigma^2)}{1-\sigma^2}X^k
%+\frac{NL^2(1+\sigma^2)}{1-\sigma^2}\bar{X}^k
%\nonumber\\
%&\quad\textstyle +\frac{4(1+\sigma^2)}{1-\sigma^2}\Big(c_1\cdot X^k+
%c_2\cdot\bar{X}^k+c_3\cdot D^{k-1}\Big)
%\nonumber\\
%&\quad\textstyle+\frac{3(1+\sigma^2)}{1-\sigma^2}\Big(\big(2c_1\sigma^2
%+c_2\bar{b}+\frac{2c_3}{\underline{S_{ij}}})\cdot X^k+
%(c_2(2+\frac{2c_2}{L^2})+\frac{2c_3}{\underline{S_{ij}}})\cdot\bar{X}^k
%+(b_3+c_3(1-\frac{1}{\overline{S_{ij}}})) \cdot D^{k-1}
%+2\alpha^2c_1\cdot Y^k\Big)\Big\}
=\mathbb{E}\big\{a_1 X^k+a_2 \bar{X}^k+a_3 Y^k+a_4  D^{k-1}\big\}
\label{lemmaB-1}
\end{align}
where $\{a_i\}_{i=1}^4$ are defined in (\ref{lemma9-content2}).
\end{proof}

\end{document}